\newtheorem{assumption}{Assumption}
\newtheorem{lemma}{Lemma}
\newtheorem{theorem}{Theorem}
\newcommand{\whiteding}[1]{\ding{\numexpr171+#1\relax}}
\newcommand{\tabincell}[2]{\begin{tabular}{@{}#1@{}}#2\end{tabular}}
\begin{document}
\title{Slashing Communication Traffic in Federated Learning by Transmitting Clustered Model Updates}

\author{\IEEEauthorblockN{Laizhong Cui, Senior Member, IEEE,
		Xiaoxin Su,
		Yipeng Zhou\IEEEauthorrefmark{1}, Member, IEEE, 
		Yi Pan, Senior Member, IEEE}
	
	\IEEEauthorblockA{
		\thanks{
		%\newline This work was supported in part by the National Key Research and Development Plan of China under Grants 2018YFB1800302 and and No.2018YFB1800805, in part by the National Natural Science Foundation of China under Grant 61772345, in part by Major Fundamental Research Project in the Science and Technology Plan of Shenzhen under Grants JCYJ20190808142207420 and GJHZ20190822095416463, the project of ”FANet: PCL Future Greater-Bay Area Network Facilities for Large-scale Experiments and Applications (No. LZC0019)”, Australia Research Council DE180100950, and the Pearl River Young Scholars funding of Shenzhen University. \textit{(Corresponding author: Yipeng Zhou)}
		\newline L. Cui, X. Su, are with the College of Computer Science and Software Engineering, Shenzhen University, Shenzhen, PR.China (email: cuilz@szu.edu.cn; suxiaoxin2016@163.com).
		\newline Y. Zhou is with the Department of Computing, FSE, Macquarie University, Australia, 2122 (email: yipeng.zhou@mq.edu.au).
		\newline Y. Pan is with the Georgia State University, USA (email: pan@cs.gsu.edu).
		\newline Y. Zhou is the corresponding author.
		}
	}
} 
% The paper headers
\markboth{}{Shell \MakeLowercase{\textit{et al.}}: Bare Demo of IEEEtran.cls for IEEE Journals}
% make the title area
\maketitle

% As a general rule, do not put math, special symbols or citations
% in the abstract or keywords.
\begin{abstract}
Federated Learning (FL) is an emerging decentralized learning framework through which multiple clients can collaboratively train a learning model. However, a major obstacle that impedes the wide deployment of FL lies in  massive communication traffic. To train  high dimensional machine learning models (such as CNN models), heavy communication traffic can be incurred by exchanging model updates  via the Internet between clients and the parameter server (PS), implying that the network resource can be easily exhausted. Compressing model updates is an effective way to reduce the traffic amount. However, a flexible unbiased compression algorithm applicable for both uplink and downlink compression in FL is still absent from existing works. 
In this work,  we devise the Model Update Compression by Soft Clustering (MUCSC) algorithm to compress model updates transmitted between clients and the PS. In MUCSC, it is only necessary to transmit cluster centroids and the cluster ID of each model update. Moreover, we prove that: 1) The compressed model updates are unbiased estimation of their original values so that  the convergence rate by transmitting compressed model updates is unchanged; 2) MUCSC can guarantee that  the influence of the compression error  on the model accuracy is minimized.  
Then, we further propose the boosted MUCSC (B-MUCSC) algorithm, a biased compression algorithm that can achieve an extremely high compression rate by grouping insignificant model updates into a super cluster. B-MUCSC  is suitable for scenarios with very scarce network resource. Ultimately, we conduct extensive experiments with the CIFAR-10 and FEMNIST datasets to demonstrate that our algorithms can not only substantially reduce the volume of communication traffic in FL,  but also improve the training efficiency in practical networks. 
\end{abstract}

% Note that keywords are not normally used for peerreview papers.
\begin{IEEEkeywords}
Federated Learning, Model Update Compression, Convergence Rate, Clustering.
\end{IEEEkeywords}

\IEEEpeerreviewmaketitle

\section{Introduction}
With the rapid development of the Internet of Things (IoT) and edge computing \cite{rydning2018digitization}, terminal devices generate a large amount of data through interacting with the environment which can help us train more advanced machine learning models.% so that better services can be provided in various applications such as recommender systems \cite{sarwar2001item}, face recognition \cite{huang2002face}, multimedia processing \cite{ota2017deep}, etc.
However, it is unrealistic to  collect all these data to centrally train machine learning models due to  the limitation of  network capacity and the concern of privacy leakage. %It is estimated that nearly 90\% of the data will be stored and processed locally on clients \cite{IoTdata.org}.

To train models without intruding user privacy,  Federated Learning (FL) is firstly devised by Google \cite{mcmahan2017federated} and then thrives quickly. Through FL, decentralized clients only interchange model updates with the parameter server (PS) via the Internet to complete a training task so that the original data samples can be maintained locally \cite{nishio2019client}. Due to this fantastic feature, FL is specially applicable for IoT and edge devices to collaboratively train machine learning models \cite{yang2019federated, yu2018federated, samarakoon2018federated}. 

However, a major obstacle that impedes the deployment of FL in real world lies in the massive communication traffic transmitted between clients and the PS. 
In a typical FL system, there may reside thousands of clients \cite{bonawitz2019towards} who need to exchange model updates with the PS for multiple rounds of aggregations (a.k.a. global iterations). However, the network speed, especially the uplink speed, is much slower than the CPU speed \cite{li2014communication}. Meanwhile, the model dimension could be in the magnitude of millions for complicated neural networks \cite{springenberg2014striving}. Thus, it implies that network resource can be easily exhausted by training high dimensional models in FL.  For instance, each client can generate 4MB uplink traffic per global iteration if the model dimension is one million and the size of each model update is 4 bytes. If there are 1,000 clients to communicate with the PS for 100 rounds, the total uplink traffic will be 400GB. The huge traffic volume can saturate the network capacity and choke the PS, and the training time can be prolonged substantially. For many real-time applications, such an inefficient  training process is unacceptable \cite{ananthanarayanan2017real}. 

A number of previous works have been dedicated to  compressing model updates \cite{konevcny2016federated} in FL. These works can be summarized as two kinds. The first kind devised unbiased  algorithms which approximate model updates with unbiased estimation \cite{alistarh2016qsgd}.
In contrast, the second kind designed biased compressing algorithms which are more flexible and can achieve much higher compression rates \cite{sattler2019robust, li2020ggs}. 
{ 
However, a common challenge confronted by the above compression algorithms is that the distribution of model updates is different in every global iteration. It implies that compression algorithms should be adjusted effectively in accordance with the distribution of model updates in each global iteration, which has not been well solved by existing works. 
%compression rate is not  high enough and the estimated values are not generated adaptively in accordance with the distribution of model updates \cite{konevcny2018randomized}.
%Nevertheless, biased  algorithms cannot guarantee  the convergence of trained models, and hence the model accuracy can be very poor. 
It is also worth to mention compression algorithms designed for distributed machine learning systems \cite{alistarh2016qsgd, konevcny2018randomized}.} Although, it is possible to apply these algorithms in FL, there is no guarantee for the compression performance and model accuracy in that the sample distribution in FL is non-iid different from that in  distributed learning systems \cite{konevcny201610federated}. In addition, the convergence property of iterative algorithms in FL such as FedAvg \cite{mcmahan2017communication} is different from that in distributed learning systems \cite{li2014scaling}.

%To slash the volume of the communication traffic in FL without compromising the trained model accuracy, we propose a novel model update compression algorithm, which is named as  Model Update Compression by Soft Clustering (MUCSC).
{In this work, we propose a novel compression algorithm to reduce the communication volume in FL without compromising the model accuracy, which is named as  Model Update Compression by Soft Clustering (MUCSC).}
{ MUCSC can compress both uplink and downlink model updates. It can achieve a very high compression rate, and meanwhile  guarantee the convergence of trained machine learning models. }  
Specifically,  in a round of global iteration,  each  participating client groups model updates to be transmitted into a small number of clusters. Cluster centroids are determined  adaptively through minimizing a specially  designed compression error function. Then, each client only needs to upload the centroid value of each cluster and the cluster ID of each model update to the PS.
In a similar way, the PS compresses aggregated model updates by grouping all aggregated model updates  into a small number of clusters before it distributes the aggregated model updates to all clients. 

We theoretically prove that the compressed model updates are unbiased estimation of their original values. By leveraging the FedAvg algorithm \cite{mcmahan2017communication}, we further prove that the MUCSC algorithm can achieve the same convergence rate as that without compression. The specially designed compression objective function can minimize the influence of the compression error on the  model accuracy. 
To further  boost  MUCSC, we propose the boosted MUCSC (B-MUCSC) algorithm that groups insignificant model updates with values close to 0 as a super cluster. With this boost, the compression rate can be further improved by more than 10 times.

In a word, the contribution of our work is summarized as below.
\begin{itemize}
    \item In order to slash the communication traffic volume in FL, we propose a novel model update compression algorithm, \emph{i.e.}, MUCSC, {which can compress both uplink and downlink model updates.}
    \item We theoretically prove that MUCSC can achieve the same convergence rate as that without compression. The objective function of MUCSC can minimize the influence of the compression error on the model accuracy. 
    \item A biased algorithm B-MUCSC is further proposed that can boost  the compression rate by more than 10 times. 
    \item At last, experiments are conducted with the {FEMNIST} and CIFAR-10 datasets to demonstrate the superiority of MUCSC and B-MUCSC. Experiment results show that our algorithms can reduce the volume of the communication traffic in FL by more than 80\% and 95\% respectively.  
\end{itemize}

The rest of the paper is organized as follows. Related works are discussed in Sec.~\ref{RelatedWork}. Sec.~\ref{Preliminary} introduces the preliminary knowledge before MUCSC is presented in Sec.~\ref{MUCSCAlgorithm}. The convergence rate using MUCSC for compression  is analyzed in Sec.~\ref{ConvergenceOfMUCSC} and its performance is evaluated in Sec.~\ref{Performance}. In the end, we conclude our work  in Sec~\ref{Conclusion}.

\section{Related Work} \label{RelatedWork}
\subsection{Overview of Federated Learning}
Federated learning can train the model by allowing the clients to transmit only the model updates, thereby protecting the privacy of the data \cite{mcmahan2017communication}. The work \cite{yang2019federated} provided an introduction to the definition, architecture, and applications of FL.
Considering the complex of the mobile edge network, the challenges of FL and corresponding strategies were discussed regarding  communication costs, resource allocation and privacy in \cite{lim2020federated}.
%\cite{lim2020federated} discusses the application of federated learning in mobile edge networks. Taking into account the complexity of the mobile edge network, this paper expounds the challenges of federated learning in terms of communication cost, resource allocation, and privacy, as well as the current corresponding strategies. 

Currently, researchers have   develop different applications for data-sensitive scenarios in FL, such as providing predictive models for health diagnosis \cite{brisimi2018federated} and promoting cooperation among multiple government agencies \cite{verma2018federated}. Google proposed the FedAvg framework in 2017 \cite{mcmahan2017communication} which was applied to Gboard to improve the word prediction model \cite{hard2018federated}.

The work \cite{li2019convergence} derived the convergence rate of the FedAvg algorithm and also discussed the influence of different parameters on the model convergence. 
%\cite{wang2019adaptive} analyzes the convergence of federated learning and discusses how to perform adaptive federated learning in edge computing systems with constrained resources. 
Wang et al. \cite{wang2019adaptive} discussed how to perform adaptive FL in resource-constrained edge computing systems.
An algorithm called FEDL was proposed in \cite {tran2019federated}, which discussed the optimization of the resource allocation of FL in wireless networks.% to balance the convergence time of FL and the energy consumption of user equipments.
%A algorithm call FEDL was proposed in \cite{tran2019federated}, which optimizes the allocation of resources by describing the situation of federated learning in the wireless network to balance the convergence time of FL and the energy consumption of user equipments.
The work \cite{chen2019performance} formulated the resource allocation and user selection problems of FL in the wireless network %as an optimization problem to minimize the FL loss function.  
%that describes the performance of the FL algorithm.
%\cite{chen2019performance} studies the problem of training federated learning algorithms on actual wireless networks. The paper formulates the wireless resource allocation and user selection problems as optimization problems to minimize the FL loss function that describes the performance of the FL algorithm.
and \cite{cai2020dynamic} proposed a dynamic sample selection optimization algorithm called FedSS to solve the heterogeneous data in FL.
%, which was used to solve the heterogeneous data in FL to improve the model accuracy performance and increase the convergence speed.

\subsection{Reducing Communication Cost in FL}
Considering the huge traffic transmission in the training process of FL, there has been a lot of research work dedicated to reducing the communication cost in FL, in order to accelerate the training speed.

%Some researchers had proposed a federated average algorithm \cite{mcmahan2017communication}, which reduced the number of communications required by increasing the number of iterations of local training.
The authors of \cite {yao2018two} proposed a two-stream model with a maximum average difference constraint which forced the local two-stream model to learn more knowledge from other devices to reduce the number of communications.
%The authors of \cite{yao2018two} proposed a two-stream model with Maximum Mean Discrepancy constraint instead of the single model to be trained on devices in standard federated learning settings, which forces the local twostream model learn more knowledge from other devices. Experiments show the proposed methods bring a reduction in required communication rounds.
The authors of \cite{liu2020client} proposed a client-edge-cloud hierarchical FL system which can shorten the propagation delay from clients to the PS, and thus alleviated the communication load of the PS.
The FedCS protocol was designed in \cite{nishio2019client}, which can select clients according to the resource status of different clients, so that the PS can aggregate the model updates of as many clients as possible.
%Taking into account the differences in the computing resources and communication conditions of different clients in FL, \cite{nishio2019client} designed the FedCS protocol. FedCS can make client selection according to the resource situation of different clients, which allows the server to aggregate as many client updates as possible to speed up model training.

Many researchers designed different compression algorithms to reduce the traffic transmitted to the PS and speed up the training process. Basically, there are two kinds of compressing algorithms: unbiased and biased.

Unbiased algorithms are briefly discussed as below. 
The work \cite{wen2017terngrad} designed the TernGrad algorithm. It quantified all gradients to 3 values, which greatly reduced the amount of data that need to be uploaded. %The authors prove the convergence of TernGrad under the assumption of bounded gradient and proposes layer-wise ternarizing and gradient clipping to improve its convergence.
Wangni et al. \cite{wangni2018gradient} proposed a sparse algorithm, which randomly discarded the coordinates of the stochastic gradient vector and appropriately amplified the remaining coordinates to ensure that the sparse gradient was unbiased and reduce communication costs.
The authors in \cite{alistarh2017qsgd} proposed an algorithm to Quantized SGD (QSGD), which allowed users to adjust the number of bits sent in each iteration based on the tradeoff between the network bandwidth and the convergence time.
The work \cite{suresh2017distributed} developed probabilistic quantization algorithms and explained the mean square error introduced by these algorithms. 
%\cite{suresh2017distributed} first explained the mean square error (MSE) that the naive stochastic rounding approach would yield. Then, the authors extended this naive algorithm to achieve the best MSE for a given communication cost.
ATOMO was a general framework for sparsification of stochastic gradients, which was introduced by \cite{wang2018atomo}. ATOMO established and optimally solved a meta-optimization by minimizing the variance of the sparsified gradient. 
%\cite{wang2018atomo} proposed ATOMO general framework for stochastic gradient atom sparsification. ATOMO can achieve a random unbiased sparsification and minimize variance according to a given gradient, an atomic decomposition and a sparsity budget.
%All the work mentioned above are unbiased compression algorithms proposed by researchers in recent years to speed up data communication. However, these algorithms are all applied to distributed machine learning. The theoretical research on FL compression algorithm is still absent.

 {
Although unbiased compression algorithms can guarantee the convergence of FL training, the compression error can result in much lowered model accuracy. 
It is still unknown which unbiased algorithm  achieves the best model accuracy.
%However, an unbiased compression algorithm that can achieve both a high compression rate and a fast convergence rate in FL is still absent. Since the distribution of data between clients in FL is Non-iid, the distribution of model updates obtained in each round of training may vary greatly. It is difficult for unbiased compression algorithms to design reasonable centroids based on the distribution of model updates. It is possible that the centroids of a communication round design will introduce a large variance, thereby affecting the convergence of the model.
}

Compared with unbiased compression algorithms, biased compression algorithms  {cannot guarantee the convergence of FL training}, but can achieve much higher compression rates.
The stochastic gradient descent (SGD) combined with k-sparsification  for compression has been analyzed in \cite{stich2018sparsified}.
The authors of \cite{han2020adaptive} designed an adaptive gradient sparse algorithm that can adaptively select gradients using an online learning algorithm. 
In \cite{luping2019cmfl}, the CMFL algorithm was designed, which removed updates that were not related to the global model from uploading, and thereby reduced the communication overhead. 
The work \cite{alistarh2018convergence} proved that for distributed machine learning, Top-K gradient sparsification combined with local error correction can provide proof of convergence for smooth functions. 
%The authors of \cite{alistarh2018convergence} analyze the convergence of the gradient sparsification algorithm from a theoretical perspective. Under analytic assumptions, the correction of local errors when the gradient is sparsed by magnitude can provide convergence guarantees for both convex and non-convex smooth objectives.
The authors of \cite{shi2019layer} proposed a novel hierarchical adaptive gradient sparse algorithm called LAGS-SGD, and proved the convergence of the algorithm under weak analysis assumptions.

%Although a variety of data compression algorithms applied to FL have been proposed to improve communication efficiency in FL, the one that can achieve both a fast  convergence rate and a high compression rate is still missing. In this paper, we designed a new compression algorithm, and theoretically prove its fastest convergence rate and analyze its high compression rate. 

 {It worth to mention that  algorithms designed for parameter compression in distributed machine learning cannot be applied for FL straightly because the  data distribution in FL is non-iid, which should be taken into account by compression algorithms in FL.
% and the number of clients participating in training of FL is far more than that of distributed machine learning. Therefore, the above algorithm applied to distributed machine learning may not be suitable for FL.
In this paper, we propose a novel unbiased compression algorithm for FL by soft clustering model updates. Cluster centroids can be generated adaptively in each global iteration according to the distribution of model updates so as to minimize the influence of the  compression error on the final model accuracy. 
}
%We analyzed the convergence of the algorithm considering partial client participation of FL and the heterogeneity of clients' data.  and proved from the convergence results that our algorithm can achieve the fastest convergence speed with a given compression rate .

\section{Preliminary} \label{Preliminary}
In FL, data samples are  distributed among multiple clients. We assume that the total number of clients is $N$ and the data set on client $i$ is denoted by $\mathcal{S}_i$.
In order to make these clients cooperate to train a machine learning model, clients will download the latest model updates from the PS and use their local data to update  the model by minimizing a  predefined loss function. Without loss of generality, the local loss function is defined using a local data sample batch $\mathcal{B}$ of  size $B$. That is
\begin{eqnarray}
    F_i(\mathbf{w}, \mathcal{B}_i) = \frac{1}{B} \sum_{\forall \xi_j\in\mathcal{B}_i} f(\mathbf{w}, \xi_j).
    \label{EQ:LocalLoss}
\end{eqnarray}
Here, $\mathcal{B}_i$ represents a small sample batch selected from client $i$'s local dataset $\mathcal{S}_i$. $f(\mathbf{w}, \xi_j)$ represents the loss function calculated by a specific sample $\xi_j$ and the model parameters $\mathbf{w}$.
The goal of FL is to train the model parameters that minimize the global loss function, which is defined as
\begin{eqnarray}
    \min_{\mathbf{w}}\quad F(\mathbf{w}) = \sum_{i=1}^N p_i F_i(\mathbf{w},\mathcal{S}_i),
    \label{EQ:GlobalLoss}
\end{eqnarray}
where $p_i$ represents the weight of client $i$, which is usually set as the ratio of the dataset size of client $i$ to the total dataset size, \emph{i.e.}, $p_i = \frac{|\mathcal{S}_i|}{\sum_{i'=1}^N|\mathcal{S}_{i'}|}$.

In this paper, we use the FedAvg \cite{mcmahan2017communication} algorithm as the basic framework of FL. To smooth the subsequent explanation, we first introduce the FedAvg algorithm.

In FedAvg, the PS  randomly selects a number of clients to participate in training in each round of communication. The selected clients will use local data to perform $E$-round local iterations. After the local training, each client  uploads the model updates to the PS for aggregation. Then, the PS distributes aggregated model updates to clients to kick off a new round of global iteration. 

Let $\mathbf{w}_t$ denote the model parameters in the $t^{th}$ iteration and $\mathbf{w}^i_t$ denote the model parameters on the client $i$ at iteration $t$. The training process on client $i$ is conducted as follows
\begin{eqnarray}
    \label{EQ:LocalTrain}
    \mathbf{w}^i_{t+1} = \mathbf{w}^i_{t}-\eta_{t} \nabla F_i(\mathbf{w}^i_{t},\mathcal{B}^i_{t}),
\end{eqnarray}
where $\eta_t$ represents the learning rate at iteration $t$ and $\mathcal{B}^i_t$ represents the mini batch of data samples randomly selected by client $i$ at iteration $t$.
After executing $E$ local iterations, the selected client uploads the model updates to the PS for aggregation. The aggregation is operated  on the server side as
\begin{eqnarray}
    \label{EQ:GlobalAggregation}
    \mathbf{w}_{t+E} =\mathbf{w}_{t} \!\!-\!\! \sum_{i\in \mathcal{K}_{t}}\frac{|\mathcal{S}_i|}{\sum_{i^{'}\in\mathcal{K}_{t}}|\mathcal{S}_{i^{'}}|}(\sum_{j=t}^{t+E-1}\eta_{j} \nabla F_i(\mathbf{w}^i_{j},\mathcal{B}^i_{j})),
\end{eqnarray}
where $\mathcal{K}_{t}$ represents $K$ clients randomly selected at global iteration $t$.
To facilitate the understanding of our algorithms and analysis, we provide a notation list in Table~\ref{NotationList}.

\begin{table}[h]
    \centering
    \caption{Notation List}
    \begin{tabular}{|m{1.5cm}<{\centering}|m{6cm}<{\centering}|}
         \hline
         Notation & Meaning \\
         \hline
         $\mathcal{B}$ & a batch of data samples with size $B$ \\
         \hline
         $p_i$ & the weight of client $i$ among all clients \\
         \hline
         $\mathcal{S}_i$ & dataset in client $i$ \\
         \hline
         $\mathbf{U}^i_{t+1}$($\mathbf{D}_{t+1}$) & model updates uploaded(downloaded) by client $i$(server) in the $t+1^{th}$ global iteration\\
         \hline
         d & dimensions of model updates \\
         \hline
         $Z_U$($Z_D$) & number of centroids at upload(download)\\
         \hline
         $r_z$ & the value of the $z^{th}$ centroid \\
         \hline
         $\widetilde{\mathbf{U}}^i_{t+1}$($\widetilde{\mathbf{D}}_{t+1}$) & compressed model updates randomly generated based on input $\mathbf{U}^i_{t+1}$($\mathbf{D}_{t+1}$)'s\\
         \hline
         $\mathcal{L}_z$ & set of model updates between $r_z$ and $r_{z+1}$\\
         \hline
         $\mathcal{I}$ & set of rounds for global aggregation \\
         \hline
         $\mathcal{K}_{t+1}$ & set of clients selected to participate the $(t+1)^{th}$ global iteration\\
         \hline
         K & number of clients participating each global iteration\\
         \hline
         $\Gamma$ & quantified heterogeneity of the non-iid data distribution\\
         \hline
         $\Lambda$ & compression rate\\
         \hline
    \end{tabular}
    \label{NotationList}
\end{table}

\section{MUCSC Algorithm Design} \label{MUCSCAlgorithm}
In this section, we introduce the MUCSC algorithm which can compress the model updates transmitted between the PS and clients. 
We mainly introduce the compression of  model updates to be uploaded from clients to the PS. The compression of model updates distributed from the PS to clients can be operated in a similar manner.

\subsection{Uplink Compression}
We use $\|.\|$ to represent the $\ell_2$ norm. We define the  model updates obtained after the client $i$ conducts $E$ local iterations starting from iteration $t$ as 
\begin{eqnarray}
    \label{EQ:ModelUpdates}
    \mathbf{U}^i_{t+E} = \mathbf{w}^i_{t} - \mathbf{w}^i_{t+E}= \sum_{j=t}^{t+E-1} \eta_{j} \nabla F_i(\mathbf{w}^i_{j},\mathcal{B}^i_{j}).
\end{eqnarray}
The dimension of the trained model is denoted by $d$, and hence the dimension of model updates is also $d$. Without the compression operation, each client needs to upload $hd$ bytes of data (assuming that each floating point number is represented by $h$ bytes) to the PS in each global iteration. Suppose that there exist $N$ clients. It implies that the volume of the communication traffic in FL is with the magnitude of $d\times N$, and thus training high dimensional learning models can be very network resource consuming.

%For complex neural network models, $d$ can be a very large value.  Therefore, the communication traffic during the training process is very large.

In order to reduce the traffic volume so as to speed up the FL training process, we can classify all model updates on a particular client into $Z$ clusters. Here $Z$ is a number much smaller than $d$. Rather than uploading $hd$ bytes, the client can upload $Z$ centroid values and the cluster ID of each model update. In this way, the volume of the upload traffic is reduced to  $hZ$ bytes representing $Z$ centroid values plus $\frac{\lceil \log_2Z \rceil d}{8}$ bytes representing cluster IDs.

Apparently, how to choose $Z$ centroid values is very essential, which can heavily determine the compression error and the convergence performance. There are two facts we need to consider. Firstly, to make the FedAvg algorithm converge, we need to ensure that the compressed model updates are unbiased estimation of their original
values.  Secondly, given a fixed $Z$, we need to determine $Z$ centroid values so that the influence of the compression error on the  converge is minimized. 
%We present our solution, \emph{i.e.}, the MUCSC algorithm, in this section, and analyze how the fastest convergence rate is achieved by using MUCSC in the next section. 

Without the loss of generality, we suppose that the model updates to be compressed are denoted by the vector $\mathbf{U}$ of dimension $d$. Let $U_m$ denote the $m^{th}$ element in $\mathbf{U}$. We assume that $Z$ centroids are sorted in ascending order with values $r_1 < r_2 < \dots < r_Z$ and $r_1$ and $r_Z$ are the minimum and maximum values in the vector $\mathbf{U}$, respectively. 
We also defined a function $\phi(U_m)=z$ which means $r_z\le U_m$ and $r_{z+1}\ge U_m$. When compressing a vector, we use this function to lookup the centroid ID for a model update $U_m$ in the vector. Then we perform the compression operation with $U_m$ as
\begin{equation}
    \widetilde{U}_{m}=\left\{
        \begin{aligned}
        & r_z, \quad \textit{with prob.}\ \frac{r_{z+1}-U_{m}}{r_{z+1}-r_z},\\
        &r_{z+1}, \quad \textit{with prob.}\ \frac{U_{m}-r_z}{r_{z+1}-r_z}. \\
        \end{aligned}
        \right.
    \label{EQ:SoftClusterQuantization}
\end{equation}
Here $\mathbb{E}$ indicates the expectation of a random variable. It is easy to verify that $\mathbb{E}\widetilde{U}_{m}=U_m$, which implies that the compressed value $\widetilde{U}_m$ is the unbiased estimation of $U_m$.
According to \eqref{EQ:SoftClusterQuantization}, we find that the variance between the compressed $\widetilde{U}_{m}$ and the original $U_m$ is
\begin{eqnarray}
    \label{EQ:CrossDfortl}
    J_{m} = \mathbb{E}\left(\widetilde{U}_{m} -U_{m}\right)^2= \left(r_{z+1}-U_{m}\right)\left(U_{m}-r_z\right),
\end{eqnarray}
where $z=\phi(U_m)$. In fact, to minimize the influence of the compression error on the convergence, we should minimize the variance between $\widetilde{U}_{m}$ and $U_m$. The detailed proof will be presented in the next section. By considering all $d$ model updates, the loss function we should minimize in the MUCSC algorithm is defined as
\begin{eqnarray}
    \label{EQ:VarianceOfCompression}
    J = \mathbb{E}\|\widetilde{\mathbf{U}} -\mathbf{U}\|^2  = \sum_{m=1}^d J_{m}.
\end{eqnarray}

\begin{algorithm}[t]
    \label{MUCSC}
    \SetKwBlock{E}{E-step}{}
    \SetKwBlock{M}{M-step:Update the parameters}{}
    \caption{The Model Update Compression By Soft Clustering (MUCSC) Algorithm}
    \LinesNumbered
    \KwIn{A vector $\mathbf{U}$; Number of centroids $Z$}
    \KwOut{$r_1,\dots, r_Z$, compressed vector $\widetilde{U}_{m}$'s }
    %\STATE
    {Randomly initialize $r_2, \dots, r_{Z-1}$};\\
    %Define a temporary vector $\mathbf{u}$ to store $Z$ new centroid values\\
    \While {\eqref{EQ:VarianceOfCompression}\ does\ not\ converge}{
        \E{
            \For{z = 2,...,Z-1}{
                $r_z=r_z-\alpha \frac{\partial J}{\partial r_{z}}$;
            }
            %$\mathbf{\mu}=\mathbf{u}$\\
        }
        \M{
            Set $\mathcal{L}_z=\emptyset$ for $z=1,\dots, Z$;\\
            \For{m = 1,2,...,d}{
                Update $\phi(U_{m})$ according to $r_1,\dots, r_Z$;\\
              %  $z = \phi(U^i_{tj})$;\\
                %$\mathcal{L}_z =  \mathcal{L}_z\cup H^i_{tj}$.
                Update $\mathcal{L}_z$ for $z =1,\dots, Z$; 
            }
        }
    }
    Generate compressed vector $\widetilde{U}_{m}$'s according to \eqref{EQ:SoftClusterQuantization};\\
    %The element $r_j$ in the vector $\mathbf{r}$ uses $log_2d$ bits to indicate the centroid value to which ${H}^k_{tj}$ belongs\\
    \Return{$r_1, \dots, r_Z$ and $\widetilde{U}_{m}$'s;}
\end{algorithm}

To determine the values of the $Z$ centroids so as to minimize the loss function defined in \eqref{EQ:VarianceOfCompression}, we employ the EM algorithm to solve this soft clustering problem. 
 {The EM algorithm first computes the gradients based on the given centroid; then updates centroids and the sets of model updates division based on the gradient computed in the previous step, and then iterates repeatedly until a certain number of iterations. The steady ascending steps of the EM algorithm can find the optimal centroid very reliably.} 
In E step, we use the gradient descent method to update $Z$ centroid values. According to the loss function \eqref{EQ:VarianceOfCompression}, the update formula should be
\begin{equation}
    \begin{split}
        r_{z} &= r_{z} - \alpha \frac{\partial J}{\partial r_{z}} \\
        &= r_{z} - \alpha \!\Big(\!\!\sum_{\forall m\in \mathcal{L}_{z-1}}\!\!\!\!(U_{m}-r_{z-1})-\!\!\!\!\sum_{\forall m\in \mathcal{L}_z}(r_{z+1}-U_{m})\Big).
    \end{split}
    \label{UpdateParam}
\end{equation}
Here $\alpha$ is the learning rate and $\mathcal{L}_{z}$ is the set of elements satisfying $\phi(U_m)=z$. 

The specific operation process of the EM algorithm is presented  in Algo. \ref{MUCSC}.  {It should be noted that $r_1$ and $r_Z$ are the minimum and maximum values of the elements in the vector $\mathbf{U}$, and their values will not be changed by the EM algorithm.  So we first initialize the other $Z-2$ centroids (\emph{i.e.}, line 1 in Algo. \ref{MUCSC}). In E step, we update cluster centroids according to \eqref{UpdateParam} (\emph{i.e.}, lines 3-6). In M step, we need to recalculate $\phi(U_m)$ and $\mathcal{L}_z$ based on the updated centroids (\emph{i.e.}, lines 7-12). %Then loop the EM steps until the loss function converges.
}

\subsection{Downlink Compression}
Until now, our discussion is only about the compression of model updates to be uploaded by each client. In fact, the MUCSC algorithm can also be applied to compress the aggregated model updates distributed from the PS to clients. Let $\mathbf{D}_t$ denote the aggregated model updates  on the PS at the global iteration $t$. According to \eqref{EQ:GlobalAggregation},  $\mathbf{D}_t$ can be expressed as 
\begin{eqnarray}
    \label{EQ:DefineDt}
    \mathbf{D}_{t} = \sum_{i\in \mathcal{K}_{t}}\frac{|\mathcal{S}_i|}{\sum_{i^{'}\in\mathcal{K}_{t}}|\mathcal{S}_{i^{'}}|} \widetilde{\mathbf{U}}_t,
\end{eqnarray}
The PS compress $\mathbf{D}_{t} $ into $\widetilde{\mathbf{D}_{t}} $, which is then distributed to all clients. Each client obtains $\mathbf{w}_{t}$ through computing
\begin{eqnarray}
    \label{EQ:ComputeWtE}
    \mathbf{w}_{t} = \mathbf{w}_{t-E} -  \widetilde{\mathbf{D}}_t,
\end{eqnarray}

Obviously, the compression from $\mathbf{D}_t$ to $\widetilde{\mathbf{D}}_t$ can be conducted by executing the MUCSC algorithm in a similar way to compress $\mathbf{U}$. Due to the limited space, we omit the repetitive description to compress $\mathbf{D}_t$.
To avoid confusion, we let $Z_U$ and $Z_D$ denote the numbers of centroids for uplink and downlink compression operations, respectively.

%In the process of FL training model, we will apply the MUCSC algorithm to the model updates uploaded by the client and the model updates downloaded from the server to reduce the data traffic of communication. When we use $\mathbf{U}$, it means to compress the model updates uploaded by each client, while using $\mathbf{D}$ means to compress the model updates uploaded by the server.

 {To have a holistic overview of our algorithm, the running process of MUCSC is presented  in Fig.~\ref{MUCSC_Process}.}
\begin{figure}[h]
	\centering
	\includegraphics[width=\linewidth]{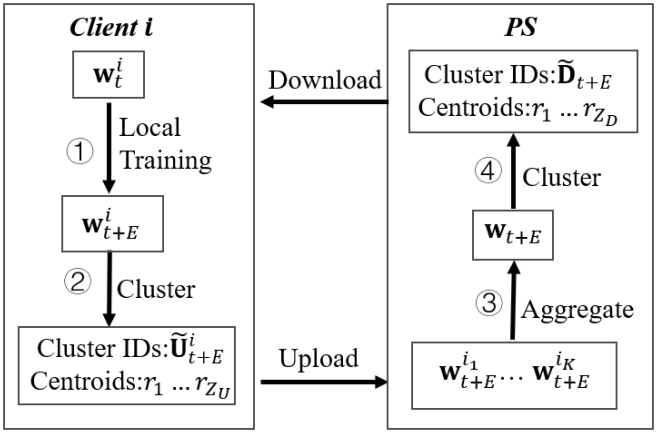}
	\caption{ {MUCSC Process: \whiteding{1} The selected client uses the compressed data downloaded by the PS to obtain the local model, and uses the local data to conduct E-epoch training on the local model. \whiteding{2} The selected clients cluster the model updates, that is, the EM algorithm is used to obtain the centroids with the smallest compression error and the centroids are used to compress the model updates. \whiteding{3} The PS uses the compressed data uploaded by the clients to compute their local model and computes the global model and model updates. \whiteding{4} The PS uses the EM algorithm to compute the centroids that minimize the compression error and compresses the model updates through the centroids. The PS broadcasts the compressed data to all clients and selects some clients to participate in the training.} }
	\label{MUCSC_Process}
\end{figure}

\section{ Convergence Rate Analysis} \label{ConvergenceOfMUCSC}
In this section, we analyze the convergence rate of FedAvg if model updates are compressed by the MUCSC algorithm. Our analysis is conducted under two scenarios: full participation mode and partial participation mode.

%We will first discuss the convergence of the model updates compression during upload and download when the full client participates in training. Next, in the case of partial client's participation, the compression of model updates during downloading will cause the model to be out of sync among the clients.  In order to avoid this situation, considering that the download speed in the network will be much higher than the upload speed, the server will broadcast the compressed model updates to all clients during download.
%To avoid this situation, we take different measures. The first measure is that the server downloads the compressed model updates to all clients when downloading. The second measure is that the server downloads the latest compressed model to the clients participating in the training when downloading.

\subsection{Assumptions}

Similar to the convergence rate analysis conducted in previous works  \cite{li2019convergence, mishchenko2019distributed, yu2019parallel, dinh2019federated}, we make the following assumptions.
\begin{assumption}
\label{Assump:ConSmoo}
The loss functions, \emph{i.e.}, $F_1, F_2,\dots, F_N$ are all $L$-smooth and $\mu$-strongly convex. In other words, given $\mathbf{v}$ and $\mathbf{w}$, we have $F_i(\mathbf{v}) \le F_i(\mathbf{w}) + (\mathbf{v}-\mathbf{w})^T\nabla F_i(\mathbf{w})+\frac{L}{2}||\mathbf{v}-\mathbf{w}||^2$ and $F_i(\mathbf{v}) \ge F_i(\mathbf{w}) + (\mathbf{v}-\mathbf{w})^T\nabla F_i(\mathbf{w})+\frac{\mu}{2}||\mathbf{v}-\mathbf{w}||^2$.
\end{assumption}

\begin{assumption}
\label{Assump:LocalVar}
Let $\xi^i_t$ denote the sample randomly and uniformly selected from client $i$. The variance of the stochastic gradients in each client is bounded:
$\mathbb{E}[\|\nabla F_i(\mathbf{w}^i_t, \xi^i_t)-\nabla F_i(\mathbf{w}^i_t)\|^2] \le \sigma_i^2$ for $i=1, 2,\dots, N$.
\end{assumption} 

\begin{assumption}
\label{Assump:BoundG}
The expected square norm of stochastic gradients is uniformly bounded, \emph{i.e.},  $\mathbb{E}[\|\nabla F_i(\mathbf{w}^i_t, \xi^i_t)\|^2] \le G^2$ for all $i = 1,2,\dots,N$ and $t = 0, 1,\dots, T-1$
\end{assumption}

As we all know, the data sample distribution among clients in FL is non-iid. Therefore, we need to quantify the degree of the non-iid sample distribution. We define $\Gamma = F^*-\sum_{i=1}^Np_iF^*_i$ to quantify the degree of non-iid distribution of clients' data where $F^*$ and $F^*_i$ are the optimal values of $F$ and $F_i$ respectively. At the same time, we define the set $\mathcal{I}$ to represent the indices  of global iterations in the FL training process, \emph{i.e.}, $\mathcal{I}=\{E, 2E, \dots\}$.

\subsection{Analysis of Full Participation Mode}

We first discuss the full client participation mode, in which the PS always involves all clients to participate in each round of global iteration. 
In this case, if $t+1\in\mathcal{I}$, each client will use the MUCSC algorithm to compress the model updates obtained by her own dataset and then upload the model updates to the PS. After the PS receives all the model updates from all clients, it proceeds to aggregate model updates. Then,  the PS also runs the MUCSC algorithm to compress the aggregated model updates and distributes the compressed aggregated model updates to all clients to kick off a new round of global iteration. 

Recall that in \eqref{EQ:ModelUpdates} we have used $\mathbf{U}^i_{t+1}$ to represent the model updates uploaded by client $i$ at global iteration $t+1$. Let $\mathbf{v}^i_{t+1}$ represent the model parameters obtained by client $i$ after executing a round of local iteration with the  model parameters $\mathbf{w}^i_{t}$. Therefore, the specific model update process is as follows.
\begin{equation}
    \label{EQ:LocalUpdate}
    \begin{split}
        \mathbf{v}^i_{t+1}=\mathbf{w}^i_{t}-\eta_t\nabla F_i(\mathbf{w}^i_t,\mathcal{B}^i_t).
    \end{split}
\end{equation}
Then, $\mathbf{w}_{t+1}^i$ can be computed as:
\begin{equation}
    \mathbf{w}^i_{t+1}=\left\{
        \begin{aligned}
        &\mathbf{v}^i_{t+1}, \quad if\ t+1\ \notin \mathcal{I},\\
        &\mathbf{w}_{t+1-E}^i - \widetilde{\mathbf{D}}_{t+1}, \quad if\ t+1\ \in \mathcal{I}. \\
        \end{aligned}
        \right.
\end{equation}
Recall that $\widetilde{\mathbf{D}}_{t+1}$ denotes the compression of  $\mathbf{D}_{t+1}$, and $\mathbf{D}_{t+1}$ represents the aggregated model updates obtained by the PS. It turns out that $\mathbf{D}_{t+1}=\sum_{i=1}^Np_i\widetilde{\mathbf{U}}^i_{t+1}$, where $\widetilde{\mathbf{U}}^i_{t+1}$ is the compressed model updates uploaded by client $i$.

 {We analyze the scenario that all devices participate in each round of global iteration. As we have described in the last section,  the MUCSC algorithm runs for $\frac{T}{E}$ times (\emph{i.e.}, executed once in each global iteration). Here we assume that $T$ is divisible by $E$.}

\begin{theorem}
\label{Theorem1}
Let Assumptions \ref{Assump:ConSmoo}-\ref{Assump:BoundG}  hold under the full participation mode. For a decreasing learning rate, we define $\eta_t=\frac{\beta}{t+\gamma}$, where $\beta>\frac{1}{\mu}$ and $\gamma>0$ such that $\eta_1\leq\min\{\frac{1}{\mu}, \frac{1}{4L}\}$ and $\eta_t\leq2\eta_{t+E}$. Then the convergence rate of FedAvg by using MUCSC for model update is: 
\begin{equation*}
    \mathbb{E}\left[F(\mathbf{w}_{T})\right]-F^*\leq \frac{L}{2}\frac{v_T}{T+\gamma}, 
\end{equation*}
where $v_T=\max\{\frac{\beta^2\tau}{\beta\mu-1}+\sum_{t=1}^T[(t+\gamma)\psi_t], \gamma\|\mathbf{w}_0-\mathbf{w}^*\|^2\}$, $\tau=\sum_{i=1}^Np_i^2\sigma_i^2+6L\Gamma+8(E-1)^2G^2$ and $\psi_{t+1} = J^D_{t+1} + \sum_{i=1}^Np_i^2J^i_{t+1}$.
\end{theorem}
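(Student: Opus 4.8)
The plan is to follow the virtual-sequence technique of the FedAvg convergence analysis \cite{li2019convergence} and to graft the two compression operations onto it as an additive, zero-mean perturbation. First I would introduce the averaged virtual iterate $\bar{\mathbf{w}}_t=\sum_{i=1}^N p_i\mathbf{w}^i_t$, which coincides with the synchronized model $\mathbf{w}_t$ at every aggregation round, and track $\Delta_t:=\mathbb{E}\|\bar{\mathbf{w}}_t-\mathbf{w}^*\|^2$. The target is the one-step recursion $\Delta_{t+1}\le(1-\eta_t\mu)\Delta_t+\eta_t^2\tau+\psi_{t+1}$, after which the stated rate follows by unrolling with $\eta_t=\frac{\beta}{t+\gamma}$.

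The genuinely new ingredient is the treatment of the two-level compression, and here the unbiasedness established below \eqref{EQ:SoftClusterQuantization} does the heavy lifting. For a round $t+1\in\mathcal{I}$ the synchronized iterate is $\bar{\mathbf{w}}_{t+1}=\mathbf{w}_{t+1-E}-\widetilde{\mathbf{D}}_{t+1}$, whereas the one-step average $\bar{\mathbf{v}}_{t+1}=\bar{\mathbf{w}}_t-\eta_t\mathbf{g}_t$ (with $\mathbf{g}_t=\sum_i p_i\nabla F_i(\mathbf{w}^i_t,\mathcal{B}^i_t)$) equals the uncompressed aggregate $\mathbf{w}_{t+1-E}-\sum_i p_i\mathbf{U}^i_{t+1}$, since each $\mathbf{v}^i_{t+1}$ already folds in the full $E$ local steps. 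Thus $\bar{\mathbf{w}}_{t+1}=\bar{\mathbf{v}}_{t+1}-\mathbf{e}_{t+1}$ with error $\mathbf{e}_{t+1}:=\widetilde{\mathbf{D}}_{t+1}-\sum_i p_i\mathbf{U}^i_{t+1}$. Adding and subtracting $\mathbf{D}_{t+1}=\sum_i p_i\widetilde{\mathbf{U}}^i_{t+1}$ splits $\mathbf{e}_{t+1}$ into the download error $\widetilde{\mathbf{D}}_{t+1}-\mathbf{D}_{t+1}$ and the aggregated upload error $\sum_i p_i(\widetilde{\mathbf{U}}^i_{t+1}-\mathbf{U}^i_{t+1})$. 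Conditioning successively on $\mathbf{D}_{t+1}$ and on the $\mathbf{U}^i_{t+1}$ gives $\mathbb{E}[\mathbf{e}_{t+1}]=\mathbf{0}$, so expanding $\|\bar{\mathbf{v}}_{t+1}-\mathbf{e}_{t+1}-\mathbf{w}^*\|^2$ annihilates the cross term and yields $\mathbb{E}\|\bar{\mathbf{w}}_{t+1}-\mathbf{w}^*\|^2=\mathbb{E}\|\bar{\mathbf{v}}_{t+1}-\mathbf{w}^*\|^2+\mathbb{E}\|\mathbf{e}_{t+1}\|^2$. Using independence across clients and the variance formula \eqref{EQ:VarianceOfCompression}, the last term collapses to $J^D_{t+1}+\sum_i p_i^2 J^i_{t+1}=\psi_{t+1}$, which is precisely the extra term in the theorem and vanishes when $t+1\notin\mathcal{I}$.

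It remains to bound $\mathbb{E}\|\bar{\mathbf{v}}_{t+1}-\mathbf{w}^*\|^2$, the uncompressed one-step FedAvg progress. Here I would reuse the standard decomposition, separating the stochastic-gradient noise (zero-mean cross term, variance $\le\sum_i p_i^2\sigma_i^2$ by Assumption~\ref{Assump:LocalVar}) and applying $\mu$-strong convexity and $L$-smoothness (Assumption~\ref{Assump:ConSmoo}) together with the client-drift bound $\sum_i p_i\|\bar{\mathbf{w}}_t-\mathbf{w}^i_t\|^2\le 4\eta_t^2(E-1)^2G^2$ (which follows from Assumption~\ref{Assump:BoundG} because the local iterates are at most $E-1$ steps apart) and the non-iid bound controlled by $\Gamma$. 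Collecting these reproduces $\mathbb{E}\|\bar{\mathbf{v}}_{t+1}-\mathbf{w}^*\|^2\le(1-\eta_t\mu)\Delta_t+\eta_t^2\tau$ with $\tau=\sum_i p_i^2\sigma_i^2+6L\Gamma+8(E-1)^2G^2$. I expect this to be the main obstacle: it is where all three assumptions and the heterogeneity measure enter simultaneously and where the exact constants in $\tau$ are pinned down by how the drift and strong-convexity terms are balanced, whereas the compression layer above is comparatively clean.

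Combining the two pieces gives the master recursion $\Delta_{t+1}\le(1-\eta_t\mu)\Delta_t+\eta_t^2\tau+\psi_{t+1}$. Finally I would close by induction on $t$, proving $\Delta_t\le\frac{v_t}{t+\gamma}$ for $v_t=\max\{\frac{\beta^2\tau}{\beta\mu-1}+\sum_{s=1}^t(s+\gamma)\psi_s,\ \gamma\|\mathbf{w}_0-\mathbf{w}^*\|^2\}$: substituting $\eta_t=\frac{\beta}{t+\gamma}$, using $\frac{t+\gamma-1}{(t+\gamma)^2}\le\frac{1}{t+\gamma+1}$, observing that $\beta>\frac{1}{\mu}$ makes the $\tau$ contribution nonpositive once $v_t\ge\frac{\beta^2\tau}{\beta\mu-1}$, and noting the increment $v_{t+1}-v_t=(t+1+\gamma)\psi_{t+1}$ exactly absorbs the compression term. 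Since $T$ is divisible by $E$ we have $\bar{\mathbf{w}}_T=\mathbf{w}_T$, so $L$-smoothness gives $\mathbb{E}[F(\mathbf{w}_T)]-F^*\le\frac{L}{2}\Delta_T\le\frac{L}{2}\frac{v_T}{T+\gamma}$, as claimed.
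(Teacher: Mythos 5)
Your proposal is correct and follows essentially the same route as the paper's own proof: the same virtual sequences $\bar{\mathbf{v}}_t,\bar{\mathbf{w}}_t$, the same decomposition of the compression error by adding and subtracting $\mathbf{D}_{t+1}$ with unbiasedness killing the cross term (the paper's Lemma~\ref{Lemma:AllClientsComp}), the same reuse of the three FedAvg lemmas from \cite{li2019convergence} to get $\Delta_{t+1}\le(1-\eta_t\mu)\Delta_t+\eta_t^2\tau+\psi_{t+1}$, and the same induction showing $\Delta_t\le\frac{v_t}{t+\gamma}$ followed by $L$-smoothness. Your slightly more careful justification of the zero cross term (conditioning successively on $\mathbf{D}_{t+1}$ and the $\mathbf{U}^i_{t+1}$) is a welcome refinement of the paper's one-line appeal to $\mathbb{E}\bar{\mathbf{w}}_{t+1}=\bar{\mathbf{v}}_{t+1}$, but does not change the argument.
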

Please refer to Appendix \ref{ProofOfTheorem1And2} for the detailed proof.

\noindent{\bf Remark:}
It is notable that $J^D_{t+1}$ and $J^i_{t+1}$ are the loss function of the  MUCSC algorithm when it is executed by the PS and the client $i$ respectively. Through Theorem \ref{Theorem1}, we can link the compression errors with  the convergence rate. Obviously, both $J^D_{t+1}$ and $J^i_{t+1}$ can affect the convergence. 
To minimize the influence of the compression errors, we should minimize  $J^D_{t+1}$ and $J^i_{t+1}$, which are just the objective of the MUCSC algorithm. 
%In Theorem \ref{Theorem1}, $\tau$ belongs to the nature of the optimized FL object which is not under our control. The MUCSC algorithm only affects $\psi_{t+1}$.  Therefore, after minimizing $\psi_{t+1}$, we minimize the impact caused by MUCSC compression and achieve the fastest convergence speed.

Here $J^D_{t+1}$ and $J^i_{t+1}$ are not constant values. To more precisely derive the convergence rate, we need to bound both $J^D_{t+1}$ and $J^i_{t+1}$. We present the main result of the convergence rate under the full participation mode as below.  
\begin{theorem}
\label{Theorem2}
Let Assumptions \ref{Assump:ConSmoo}-\ref{Assump:BoundG} hold under the full participation mode. Let $\kappa=\frac{L}{\mu},\gamma=\max\{8\kappa, E\}$ and $\eta_t=\frac{2}{\mu(\gamma+t)}$. Then the convergence rate of FedAvg by using MUCSC for model update is:\begin{equation*}
    \mathbb{E}\left[F(\mathbf{w}_{T})\right]-F^*\le\frac{2\kappa}{\gamma+T}(\frac{\tau}{\mu}+2L\left\|\bar{\mathbf{w}}_{0}-\mathbf{w}^*\right\|^2),
\end{equation*}
where $\tau=\sum_{i=1}^Np_i^2\sigma_i^2+6L\Gamma+8(E-1)^2G^2+(\frac{d^2+2d(Z_D-1)^2}{(Z_D-1)^4}+\frac{2d}{(Z_U-1)^2})E^2G^2\sum_{i=1}^Np_i^2$.
\end{theorem}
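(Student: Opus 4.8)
The plan is to obtain Theorem~\ref{Theorem2} as a concrete specialization of Theorem~\ref{Theorem1}. I would first fix the announced schedule $\eta_t=\frac{2}{\mu(\gamma+t)}$, so that $\beta=\frac{2}{\mu}>\frac{1}{\mu}$, with $\gamma=\max\{8\kappa,E\}$, and verify that this choice satisfies the admissibility conditions of Theorem~\ref{Theorem1}, namely $\eta_1\le\min\{\frac{1}{\mu},\frac{1}{4L}\}$ (which follows from $\gamma\ge 8\kappa$) and $\eta_t\le 2\eta_{t+E}$ (which follows from $\gamma\ge E$). Substituting $\beta=\frac{2}{\mu}$ into $\frac{\beta^2\tau}{\beta\mu-1}$ collapses it to $\frac{4\tau}{\mu^2}$, which, after the prefactor $\frac{L}{2}$ is applied, is exactly the combination reproducing the $\frac{2\kappa}{\gamma+T}\cdot\frac{\tau}{\mu}$ term. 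The entire content of the theorem therefore reduces to replacing the still-abstract sequence $\psi_{t+1}=J^D_{t+1}+\sum_i p_i^2 J^i_{t+1}$ by an explicit bound of the form $\psi_{t+1}\le C\eta_t^2$, so that the compression contribution enters as a coefficient of $\eta_t^2$ and can be absorbed into a redefined $\tau$.

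The core computation is to bound the two MUCSC losses. For the uplink term I would start from the exact single-coordinate variance \eqref{EQ:CrossDfortl}, use $(r_{z+1}-U_m)(U_m-r_z)\le\frac{1}{4}(r_{z+1}-r_z)^2$, and exploit that the EM procedure minimizes \eqref{EQ:VarianceOfCompression}, so the attained loss is no larger than the loss of uniformly spaced centroids spanning $[\min_m U_m,\max_m U_m]$. This yields $J^i\le\frac{d}{4}\frac{(\max_m U_m-\min_m U_m)^2}{(Z_U-1)^2}$, and bounding the range by $2\|\mathbf{U}^i\|_\infty\le 2\|\mathbf{U}^i\|$ turns it into a multiple of $\|\mathbf{U}^i\|^2$. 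Since $\mathbf{U}^i_{t+1}=\sum_{j}\eta_j\nabla F_i$ by \eqref{EQ:ModelUpdates}, Cauchy--Schwarz together with Assumption~\ref{Assump:BoundG} gives $\mathbb{E}\|\mathbf{U}^i_{t+1}\|^2\le E^2\eta_t^2 G^2$, and combining the two estimates produces the uplink coefficient $\frac{2d}{(Z_U-1)^2}E^2\eta_t^2 G^2$ appearing in $\tau$.

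The downlink bound is where I expect the real difficulty. Here the compressed object $\mathbf{D}_{t+1}=\sum_i p_i\widetilde{\mathbf{U}}^i_{t+1}$ of \eqref{EQ:DefineDt} is \emph{itself} a random aggregate of the already-compressed client updates, so its range and squared norm must be controlled while accounting for the uplink stochastic-rounding variance. Repeating the uniform-spacing/range argument for $\mathbf{D}_{t+1}$ and inserting $\mathbb{E}\|\mathbf{D}_{t+1}\|^2$ --- which now carries both the deterministic magnitude of order $E^2\eta_t^2 G^2$ and the extra $\sum_i p_i^2 J^i_{t+1}$ fluctuation --- is what generates the heavier coefficient $\frac{d^2+2d(Z_D-1)^2}{(Z_D-1)^4}$, the $d^2/(Z_D-1)^4$ piece reflecting that a $d/(Z_D-1)^2$-type range estimate is being squared through the compounded compression. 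Getting these constants to land exactly, and in particular verifying that the aggregate-of-compressed-vectors variance does not introduce cross terms that spoil the clean form, is the step I would treat most carefully.

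Finally, with $\psi_{t+1}\le C\eta_t^2$ and $C=\big(\frac{d^2+2d(Z_D-1)^2}{(Z_D-1)^4}+\frac{2d}{(Z_U-1)^2}\big)E^2G^2\sum_i p_i^2$, I would return to the one-step contraction $\Delta_{t+1}\le(1-\mu\eta_t)\Delta_t+\eta_t^2\tau_0+\psi_{t+1}$ underlying Theorem~\ref{Theorem1}, where $\tau_0=\sum_i p_i^2\sigma_i^2+6L\Gamma+8(E-1)^2G^2$, merge $C$ into $\tau_0$ to form the enlarged $\tau$, and run the standard diminishing-step induction to conclude $\mathbb{E}\|\mathbf{w}_T-\mathbf{w}^*\|^2\le\frac{v}{\gamma+T}$ with $v=\max\{\frac{4\tau}{\mu^2},\gamma\|\bar{\mathbf{w}}_0-\mathbf{w}^*\|^2\}$. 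Bounding the maximum by the sum and converting from iterates to function values through $L$-smoothness, $\mathbb{E}[F(\mathbf{w}_T)]-F^*\le\frac{L}{2}\mathbb{E}\|\mathbf{w}_T-\mathbf{w}^*\|^2$, then delivers the stated bound after simplifying with $\kappa=L/\mu$. The crucial observation that keeps the rate at $O(1/T)$ rather than $O(\log T/T)$ is precisely that the $\eta_t^2$-scaling of $\psi_{t+1}$ lets the compression error enter exactly like the variance constant $\tau_0$, instead of appearing as the free additive sequence $\sum_t(t+\gamma)\psi_t$ of Theorem~\ref{Theorem1}.
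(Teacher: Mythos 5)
Your outline reproduces the paper's own proof in structure: reuse the one-step contraction behind Theorem~\ref{Theorem1}, bound $J^i_{t+1}$ and $J^D_{t+1}$ by the loss of uniformly spaced centroids together with Assumption~\ref{Assump:BoundG}, absorb the resulting $\eta_t^2$-scaled compression terms into an enlarged $\tau$, run the standard diminishing-step induction, and finish with $L$-smoothness. The compounded downlink coefficient $\frac{d^2+2d(Z_D-1)^2}{(Z_D-1)^4}$ indeed arises exactly as you describe, by feeding $\mathbb{E}\|\widetilde{\mathbf{U}}^i_{t+1}-\mathbf{U}^i_{t+1}\|^2+\mathbb{E}\|\mathbf{U}^i_{t+1}\|^2$ back into a $\frac{d}{2(Z_D-1)^2}$-type estimate, and the cross terms vanish by unbiasedness of the stochastic rounding, as you anticipate.

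There is, however, one step that fails as written, and it sits exactly where the stated constants are decided. You claim $\mathbb{E}\|\mathbf{U}^i_{t+1}\|^2\le E^2\eta_t^2G^2$, but $\mathbf{U}^i_{t+1}=\sum_{j=t_0}^{t}\eta_j\nabla F_i(\mathbf{w}^i_j,\mathcal{B}^i_j)$ contains the learning rates of the whole local window, all of which are $\ge\eta_t$ under the decreasing schedule; Cauchy--Schwarz and Assumption~\ref{Assump:BoundG} only give $\mathbb{E}\|\mathbf{U}^i_{t+1}\|^2\le E^2\eta_{t_0}^2G^2$, and one must then invoke $\eta_{t_0}\le 2\eta_t$ to obtain $4E^2\eta_t^2G^2$, as the paper does. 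Once this factor $4$ is restored, your range estimate $(\max_m U_m-\min_m U_m)^2\le 4\|\mathbf{U}\|^2$ (via $2\|\mathbf{U}\|_\infty$) yields $J^i_{t+1}\le\frac{4d}{(Z_U-1)^2}E^2\eta_t^2G^2$, i.e.\ twice the coefficient $\frac{2d}{(Z_U-1)^2}E^2G^2$ appearing in the theorem's $\tau$, and the downlink coefficient inflates in the same way, so your execution proves the $O(1/T)$ rate but with a strictly larger $\tau$ than stated. The paper lands the exact constants by the slightly sharper range bound $(U_{\max}-U_{\min})^2\le 2U_{\max}^2+2U_{\min}^2\le 2\|\mathbf{U}\|^2$, which gives $\mathbb{E}\|\widetilde{\mathbf{U}}-\mathbf{U}\|^2\le\frac{d}{2(Z-1)^2}\|\mathbf{U}\|^2$; with that replacement, and the factor $4$ above, your argument reproduces Theorem~\ref{Theorem2} exactly.
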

Here $Z_U$ and $Z_D$ are the numbers of centroids for uplink and downlink compression respectively.
Please refer to Appendix \ref{ProofOfTheorem1And2} for the detailed proof.

{\bf Remark:}
We can conclude that the convergence rate with MUCSC for model update compression is $O(\frac{1}{T})$, which is the same as that derived in \cite{li2019convergence}.\footnote{Here we regard $E$ as a constant number. Tuning $E$ may alter the convergence rate. However, this problem is out of the scope of our paper. } However, the compression error can still affect the model accuracy since $\tau$ can be inflated by compression errors. Note that what we present here is the upper bound of the convergence rate. The actual influence can be much smaller through minimizing compression errors. 
It is also interesting to note the tradeoff between the model accuracy and the compression rate. Intuitively, to achieve a higher compression rate, we should set smaller $Z_U$ and $Z_D$, which however makes $\tau$ larger.

\subsection{Partial client participation in training}

In reality, due to the restriction of the limited network resource, it may not be possible to involve the complete set of clients in each global iteration for model updating. A commonly adopted approach is to randomly select a number of $K$ clients to conduct local iterations.
Thus, it is more meaningful to analyze the convergence rate under the partial client participation mode. 
\begin{assumption}\label{Assump:SelectScheme} (Partial Client Participation Mode)
In each round of global iteration, the PS distributes the latest model updates to $N$ clients, but only randomly select $K$ clients  according to the weight probabilities $p_1, p_2, \dots, p_N$ with replacement  to conduct local iterations. The set of selected clients is denoted by $\mathcal{K}_t$ at the $t^{th}$ global iteration. 
\end{assumption}

According to Assumption \ref{Assump:SelectScheme}, the PS aggregates model updates by $\mathbf{w}_t = \mathbf{w}_{t-E} - \widetilde{\mathbf{D}}_t$ where $\mathbf{D}_t =  \frac{1}{K}\sum_{i\in\mathcal{K}_t}\widetilde{\mathbf{U}}_{t}$ and $\widetilde{\mathbf{D}}_t$ is the compressed result of $\mathbf{D}_t$.  According to the derivation of \cite{li2019convergence}, we know that such random client participation mode generates unbiased estimation. 
In order to synchronize all clients, the PS needs to distribute aggregated model updates to all clients, though only $K$ out of $N$ clients conduct local iterations. We  argue that this synchronization is necessary without incurring much communication overhead in FL. Firstly, FL is an open platform and clients can depart the system at any time. Synchronization can make clients have the latest models at any time when they depart the system. Secondly, the downlink capacity is usually much larger than uplink capacity for modern Internet such that the broadcasting of aggregated model updates is very efficient \cite{speedtest.net}. Lastly, the MUCSC algorithm can reduce the volume of the downlink traffic significantly, which will not consume much network resource.

%Considering that it is impossible for all clients to participate in training in the actual situation of FL, we will analyze the situation of partial clients participating in training. Because we also compress model updates when downloading, the models between different clients may not be synchronized in this case. Considering the asymmetry of network speed, we can broadcast the compressed model updates to all clients when downloading. Therefore, the specific process of model update at this time is as follows.

Under the partial participation mode, we need to revise the iterative formula a little bit.
When $t+1\in\mathcal{I}$, we define $t_0=t+1-E$ and use $\mathcal{K}_{t+1}=\{i_1, i_2, \dots i_K\}$ to denote the set of clients selected for training. Therefore, the update process of the model can be defined as
\begin{equation}
    \label{EQ:LocalUpdate}
    \begin{split}
        \mathbf{v}^i_{t+1}=\mathbf{w}^i_{t}-\eta_t\nabla F_i(\mathbf{w}^i_t,\mathcal{B}^i_t).
    \end{split}
\end{equation}
\begin{equation}
    \label{EQ:GlobalUpdateOfCompAndPartial}
    \mathbf{w}^i_{t+1}=\left\{
        \begin{aligned}
        &\mathbf{v}^i_{t+1}, \quad if\ t+1\ \notin \mathcal{I},\\
        &\mathbf{w}_{t_0}^i - \widetilde{\mathbf{D}}_{t+1}, \quad if\ t+1\ \in \mathcal{I}, \\
        \end{aligned}
        \right.
\end{equation}
where $\mathbf{D}_{t+1}=\frac{1}{K}\sum_{i\in\mathcal{K}_{t+1}}\widetilde{\mathbf{U}}^i_{t+1}$ and $\mathbf{U}^i_t=\sum_{j=t_0}^{t} \eta_{j} \nabla F_i(\mathbf{w}^i_{j},\mathcal{B}^i_{j})$.

\begin{comment}

We use the notation $\mathbb{E}_{\mathcal{K}_{t+1}}$ to represent the expectation by eliminating the randomness of client selection scheme. (YP: why you mention $\mathbb{E}_{\mathcal{K}_{t+1}}$? I did not find the use of this symbol so far. You should put your explanation to the place where the term is used for the first time. )
\begin{lemma}
\label{Lemma:ClientSelection}
(Bounding the variance of $\bar{\mathbf{w}}_t$ cause by client selection)
Let Assumptions \ref{Assump:ConSmoo}-\ref{Assump:SelectScheme} hold.  When $t+1 \in \mathcal{I}$, $\eta_t$ is non-increasing and $\eta_t\le2\eta_{t+E}$ for all $t$, then we have 
\begin{equation}
    \label{EQ:Lemma4}
    \mathbb{E}_{\mathcal{K}_{t+1}}\left\|\bar{\mathbf{v}}_{t+1}-\bar{\mathbf{w}}_{t+1}\right\|^2\le\frac{K}{4}\eta_t^2E^2G^2.
\end{equation}
\end{lemma}
Lemma \ref{Lemma:ClientSelection} is trivial by extending the analysis of the FedAvg algorithm in \cite{li2019convergence}. Due to the limited space, we omit the proof. 

We use the notation $\mathbb{E}_{comp}$ to represent the expectation by eliminating the randomness of the compression algorithm.
(YP: do we have to use $\mathbb{E}_{comp}$? can we use $\mathbb{E}_{U}$ or $\mathbb{E}_{D}$ instead?)

\end{comment}

 {Under the partial participation mode,  the convergence rate analysis is derived as below.}

\begin{theorem}
\label{Theorem3}
Let Assumption \ref{Assump:ConSmoo}-\ref{Assump:BoundG} hold. For a decreasing learning rate, we define $\eta_t=\frac{\beta}{t+\gamma}$, where $\beta>\frac{1}{\mu}$ and $\gamma>0$ such that $\eta_1\leq\min\{\frac{1}{\mu}, \frac{1}{4L}\}$ and $\eta_t\leq2\eta_{t+E}$. Then the convergence rate of FedAvg by using MUCSC for model update is:
\begin{equation*}
    \mathbb{E}\left[F(\mathbf{w}_{T})\right]-F^*\leq \frac{L}{2}\frac{v_T}{T+\gamma}, 
\end{equation*}
where $v_T=\max\{\frac{\beta^2\tau}{\beta\mu-1}+\sum_{t=1}^T[(t+\gamma)\psi_t], \gamma\|\mathbf{w}_0-\mathbf{w}^*\|^2\}$, $\tau=\frac{\sum_{i=1}^Np_i^2\sigma_i^2}{B}+6L\Gamma+8(E-1)^2G^2+\frac{4E^2G^2}{K}$ and $\psi_{t+1}=J^D_{t+1} + \frac{1}{K}\sum_{i=1}^Np_{i} J^{i}_{t+1}$.
\end{theorem}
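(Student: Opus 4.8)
The plan is to mirror the full-participation analysis of Theorem~\ref{Theorem1}, reusing the virtual-sequence machinery of \cite{li2019convergence}, and to isolate the additional variance contributed by random client selection under Assumption~\ref{Assump:SelectScheme}. I would define the virtual aggregate $\bar{\mathbf{w}}_{t}=\sum_{i=1}^N p_i\mathbf{w}^i_{t}$ together with $\bar{\mathbf{v}}_{t+1}=\sum_{i=1}^N p_i\mathbf{v}^i_{t+1}$ obtained from the local step~\eqref{EQ:LocalUpdate}, and track the one-step evolution of $\mathbb{E}\|\bar{\mathbf{w}}_{t}-\mathbf{w}^*\|^2$. For $t+1\notin\mathcal{I}$ the step is an ordinary local SGD step and the bound is identical to the non-compressed case. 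The crux is the communication round $t+1\in\mathcal{I}$, where $\mathbf{w}_{t+1}=\mathbf{w}_{t_0}-\widetilde{\mathbf{D}}_{t+1}$ with $\mathbf{D}_{t+1}=\frac{1}{K}\sum_{i\in\mathcal{K}_{t+1}}\widetilde{\mathbf{U}}^i_{t+1}$.

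The key step is to peel off the three independent sources of randomness in a fixed order. First I condition on the stochastic gradients and the selected set $\mathcal{K}_{t+1}$ and take expectation over the compression. Because both uplink and downlink compression are unbiased (recall $\mathbb{E}\widetilde{U}_m=U_m$ from~\eqref{EQ:SoftClusterQuantization}), the downlink step contributes $\mathbb{E}\|\mathbf{D}_{t+1}-\widetilde{\mathbf{D}}_{t+1}\|^2=J^D_{t+1}$ with no bias, while the independence of the per-client uplink compressions gives $\mathbb{E}\|\mathbf{D}_{t+1}-\frac{1}{K}\sum_{i\in\mathcal{K}_{t+1}}\mathbf{U}^i_{t+1}\|^2=\frac{1}{K^2}\sum_{i\in\mathcal{K}_{t+1}}J^i_{t+1}$. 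Next I take expectation over $\mathcal{K}_{t+1}$; since clients are drawn with replacement with probabilities $p_i$, the selection is unbiased, so $\mathbb{E}_{\mathcal{K}_{t+1}}[\frac{1}{K}\sum_{i\in\mathcal{K}_{t+1}}\mathbf{U}^i_{t+1}]=\sum_{i=1}^N p_i\mathbf{U}^i_{t+1}$ and $\mathbb{E}_{\mathcal{K}_{t+1}}[\frac{1}{K^2}\sum_{i\in\mathcal{K}_{t+1}}J^i_{t+1}]=\frac{1}{K}\sum_{i=1}^N p_i J^i_{t+1}$. This is exactly how the compression penalty $\psi_{t+1}=J^D_{t+1}+\frac{1}{K}\sum_{i=1}^N p_i J^i_{t+1}$ emerges, replacing the $\sum_i p_i^2 J^i_{t+1}$ of Theorem~\ref{Theorem1}. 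The residual selection variance $\mathbb{E}_{\mathcal{K}_{t+1}}\|\frac{1}{K}\sum_{i\in\mathcal{K}_{t+1}}\mathbf{U}^i_{t+1}-\sum_i p_i\mathbf{U}^i_{t+1}\|^2$ is then bounded using Assumption~\ref{Assump:BoundG} and $\eta_t\le2\eta_{t+E}$, yielding the extra $\frac{4E^2G^2}{K}$ term that distinguishes $\tau$ here from that of Theorem~\ref{Theorem1}.

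With these pieces I would assemble a one-step recursion of the form $\mathbb{E}\|\bar{\mathbf{w}}_{t+1}-\mathbf{w}^*\|^2\le(1-\mu\eta_t)\mathbb{E}\|\bar{\mathbf{w}}_t-\mathbf{w}^*\|^2+\eta_t^2\tau+\psi_{t+1}$, where $\tau$ now collects the batch-averaged stochastic-gradient term $\frac{\sum_i p_i^2\sigma_i^2}{B}$ (via Assumption~\ref{Assump:LocalVar} over a batch of size $B$), the non-iid contribution $6L\Gamma$, the local-drift contribution $8(E-1)^2G^2$, and the client-selection contribution $\frac{4E^2G^2}{K}$. The remaining work is to solve this recursion under the diminishing schedule $\eta_t=\frac{\beta}{t+\gamma}$ by induction on $t$, which reproduces the $\frac{v_T}{T+\gamma}$ bound with $v_T=\max\{\frac{\beta^2\tau}{\beta\mu-1}+\sum_{t=1}^T(t+\gamma)\psi_t,\ \gamma\|\mathbf{w}_0-\mathbf{w}^*\|^2\}$, the inhomogeneous term $\sum_t(t+\gamma)\psi_t$ being the accumulated compression penalty. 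Finally, $L$-smoothness (Assumption~\ref{Assump:ConSmoo}) converts the parameter bound into $\mathbb{E}[F(\mathbf{w}_T)]-F^*\le\frac{L}{2}\mathbb{E}\|\mathbf{w}_T-\mathbf{w}^*\|^2$, giving the stated result.

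The hardest part is the bookkeeping in the second paragraph: ensuring the three expectations commute in the right order and that all cross terms vanish. In particular, one must verify that the selection-induced and compression-induced randomness are conditionally independent so that their variances add rather than interact, and that taking $\mathbb{E}_{\mathcal{K}_{t+1}}$ of the per-client compression variance correctly collapses the $\frac{1}{K^2}$ sum over the sampled multiset to $\frac{1}{K}\sum_i p_i J^i_{t+1}$—the step where sampling-with-replacement under the weights $p_i$ is essential.
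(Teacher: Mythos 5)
Your proposal is correct and follows essentially the same route as the paper: the same virtual sequences $\bar{\mathbf{w}}_t,\bar{\mathbf{v}}_t$, the same key lemma bounding $\mathbb{E}\|\bar{\mathbf{v}}_{t+1}-\bar{\mathbf{w}}_{t+1}\|^2$ by $J^D_{t+1}+\frac{1}{K}\bigl(\sum_{i=1}^N p_i J^i_{t+1}+4\eta_t^2E^2G^2\bigr)$, the same one-step recursion, and the same induction under $\eta_t=\frac{\beta}{t+\gamma}$. The only difference is cosmetic: you peel off downlink compression, uplink compression, and selection noise as three top-level orthogonal terms, whereas the paper first expands the with-replacement i.i.d. sum and then splits each summand into compression error plus selection deviation — both orderings kill the same cross terms by unbiasedness and yield the identical bound.
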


Please refer to Appendix \ref{ProofOfTheorem3And4} for the detailed proof.
Again, through Theorem \ref{Theorem3}, we can observe that the compression loss functions, \emph{i.e.}, $J^D_{t+1}$ and $J^{i}_{t+1}$, affect the convergence as well under the partial participation mode. 
Therefore, it is reasonable to minimize $J^D_{t+1}$ and $J^{i}_{t+1}$ through the MUCSC algorithm so that  the influence of compression errors can be minimized

Through bounding $J^D_{t+1}$ and $J^i_{t+1}$, we can also derive the convergence rate as below. 
\begin{theorem}
\label{Theorem4}
Let Assumptions \ref{Assump:ConSmoo}-\ref{Assump:BoundG} hold. Let $\kappa=\frac{L}{\mu},\gamma=\max\{8\kappa, E\}$ and $\eta_t=\frac{2}{\mu(\gamma+t)}$. Then the convergence rate of FedAvg by using MUCSC for model update is:
\begin{equation*}
    \mathbb{E}\left[F(\mathbf{w}_{T})\right]-F^*\le\frac{2\kappa}{\gamma+T}(\frac{\tau}{\mu}+2L\left\|\bar{\mathbf{w}}_{0}-\mathbf{w}^*\right\|^2),
\end{equation*}
where $\tau=\sum_{i=1}^Np_i^2\sigma_i^2+6L\Gamma+8(E-1)^2G^2+\frac{4}{K}E^2G^2+(\frac{d^2+2d(Z_D-1)^2}{(Z_D-1)^4}+\frac{2d}{K(Z_U-1)^2})E^2G^2$.
\end{theorem}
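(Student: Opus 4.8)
The plan is to derive Theorem~\ref{Theorem4} from Theorem~\ref{Theorem3} by specializing the step size and, crucially, by replacing the still-unbounded compression terms $J^D_{t+1}$ and $J^i_{t+1}$ inside $\psi_{t+1}$ with explicit $O(\eta_t^2)$ bounds, so that the compression penalty can be absorbed into the constant $\tau$ and the standard diminishing-step-size recursion of \cite{li2019convergence} applies. First I would set $\beta=\frac{2}{\mu}$, $\gamma=\max\{8\kappa,E\}$ and $\eta_t=\frac{2}{\mu(\gamma+t)}$, which makes $\beta\mu-1=1$ and turns the leading factor $\frac{\beta^2\tau}{\beta\mu-1}$ into $\frac{4\tau}{\mu^2}$; translating the squared-distance bound $\mathbb{E}\|\bar{\mathbf{w}}_T-\mathbf{w}^*\|^2\le\frac{v}{\gamma+T}$ through the $L$-smoothness half of Assumption~\ref{Assump:ConSmoo} (so that $F(\mathbf{w})-F^*\le\frac{L}{2}\|\mathbf{w}-\mathbf{w}^*\|^2$) and using $\gamma\ge 8\kappa$ to control the initialization branch then yields the stated prefactor $\frac{2\kappa}{\gamma+T}(\frac{\tau}{\mu}+2L\|\bar{\mathbf{w}}_0-\mathbf{w}^*\|^2)$.

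The heart of the argument is bounding the two MUCSC loss functions. For the uplink loss I would use that each per-coordinate penalty in \eqref{EQ:CrossDfortl} obeys $J_m=(r_{z+1}-U_m)(U_m-r_z)\le\frac14(r_{z+1}-r_z)^2$, so for the worst-case uniform placement of centroids $J^i_{t+1}\le\frac{d}{4(Z_U-1)^2}(\max_m U^i_m-\min_m U^i_m)^2$; since the range is controlled by $\|\mathbf{U}^i_{t+1}\|$ and, by \eqref{EQ:ModelUpdates}, Cauchy--Schwarz over the $E$ local steps together with Assumption~\ref{Assump:BoundG} give $\mathbb{E}\|\mathbf{U}^i_{t+1}\|^2\lesssim E^2\eta_t^2G^2$, this delivers $\mathbb{E}J^i_{t+1}\le\frac{2d}{(Z_U-1)^2}E^2G^2\eta_t^2$. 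Because the MUCSC centroids are chosen to \emph{minimize} $J$, any fixed placement is a valid upper bound, so this estimate is legitimate. Averaging with the weights $\frac{p_i}{K}$ from $\psi_{t+1}$ and using $\sum_i p_i=1$ produces the uplink contribution $\frac{2d}{K(Z_U-1)^2}E^2G^2$ to $\tau$.

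The downlink loss $J^D_{t+1}$, which compresses the aggregate $\mathbf{D}_{t+1}=\frac{1}{K}\sum_{i\in\mathcal{K}_{t+1}}\widetilde{\mathbf{U}}^i_{t+1}$, is the main obstacle. The same spacing estimate gives $J^D_{t+1}\lesssim\frac{d}{(Z_D-1)^2}\,\mathbb{E}\|\mathbf{D}_{t+1}\|^2$, but here the natural norm bound is self-referential: because $\mathbb{E}\|\widetilde{\mathbf{D}}_{t+1}\|^2=\mathbb{E}\|\mathbf{D}_{t+1}\|^2+J^D_{t+1}$, the compressed quantity that actually enters the descent recursion appears on both sides of the inequality. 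Solving this self-consistency, equivalently expanding the resulting geometric factor to second order in $\frac{d}{(Z_D-1)^2}$, is exactly what generates the higher-order term $\frac{d^2}{(Z_D-1)^4}$ alongside the first-order $\frac{2d}{(Z_D-1)^2}$, i.e.\ the coefficient $\frac{d^2+2d(Z_D-1)^2}{(Z_D-1)^4}$. I expect verifying this bound, and checking that it is uniform in $t$ after inserting $\mathbb{E}\|\mathbf{D}_{t+1}\|^2\lesssim E^2\eta_t^2G^2$, to be the most delicate step, since every other piece is routine.

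Finally I would assemble the pieces. The one-step inequality underlying Theorem~\ref{Theorem3} has the form $\mathbb{E}\|\bar{\mathbf{w}}_{t+1}-\mathbf{w}^*\|^2\le(1-\eta_t\mu)\mathbb{E}\|\bar{\mathbf{w}}_t-\mathbf{w}^*\|^2+\eta_t^2\tau_0+\psi_{t+1}$, where the non-compression constant $\tau_0=\sum_i p_i^2\sigma_i^2+6L\Gamma+8(E-1)^2G^2+\frac{4E^2G^2}{K}$ already accounts, via Assumption~\ref{Assump:SelectScheme}, for the variance injected by sampling $K$ clients with replacement. Substituting $\psi_{t+1}\le\eta_t^2\tau_c$ with $\tau_c=\big(\frac{d^2+2d(Z_D-1)^2}{(Z_D-1)^4}+\frac{2d}{K(Z_U-1)^2}\big)E^2G^2$ merges the compression penalty into the additive term, so the recursion becomes $\mathbb{E}\|\bar{\mathbf{w}}_{t+1}-\mathbf{w}^*\|^2\le(1-\eta_t\mu)\mathbb{E}\|\bar{\mathbf{w}}_t-\mathbf{w}^*\|^2+\eta_t^2\tau$ with $\tau=\tau_0+\tau_c$, precisely the $\tau$ claimed. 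A routine induction proving $\mathbb{E}\|\bar{\mathbf{w}}_t-\mathbf{w}^*\|^2\le\frac{v}{\gamma+t}$ as in \cite{li2019convergence}, with base case from $\gamma\|\mathbf{w}_0-\mathbf{w}^*\|^2$ and inductive step using $\eta_t=\frac{2}{\mu(\gamma+t)}$, then closes the proof. The only genuinely new ingredients beyond Theorem~\ref{Theorem3} are the two compression bounds, so the write-up hinges on the downlink self-reference described above.
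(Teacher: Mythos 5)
Your overall skeleton is exactly the paper's: specialize $\eta_t=\frac{2}{\mu(\gamma+t)}$, bound the two MUCSC losses by constants times $\eta_t^2$, absorb them into $\tau$, and rerun the induction of \cite{li2019convergence} together with $L$-smoothness. Your uplink bound is also the paper's argument, step for step: uniform centroid placement is a legitimate upper bound because MUCSC minimizes $J$, the per-coordinate penalty is at most $\frac14$ of the squared spacing, the squared range is at most $2\|\mathbf{U}\|^2$, and Cauchy--Schwarz over the $E$ local steps with Assumption~\ref{Assump:BoundG} and $\eta_{t_0}\le 2\eta_t$ gives $\mathbb{E}\|\mathbf{U}^i_{t+1}\|^2\le 4E^2\eta_t^2G^2$, hence $J^i_{t+1}\le\frac{2d}{(Z_U-1)^2}E^2\eta_t^2G^2$ and the $\frac{2d}{K(Z_U-1)^2}E^2G^2$ term in $\tau$.

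The gap is the downlink step, which you yourself flag as the crux. There is no self-consistency to solve and no geometric factor to expand: compression is applied in one direction only (uplink first, then downlink on the aggregate), so the bound is a straight composition of two stages, not a fixed point. The paper's chain is: (i) $J^D_{t+1}\le\frac{d}{2(Z_D-1)^2}\mathbb{E}\|\mathbf{D}_{t+1}\|^2$ by the spacing estimate \eqref{EQ:BoundOfVari}; (ii) $\mathbb{E}\|\mathbf{D}_{t+1}\|^2\le\frac{1}{K}\sum_{i\in\mathcal{K}_{t+1}}\mathbb{E}\|\widetilde{\mathbf{U}}^i_{t+1}\|^2$ by convexity, which after taking expectation over the sampling becomes $\sum_{i=1}^N p_i\,\mathbb{E}\|\widetilde{\mathbf{U}}^i_{t+1}\|^2$; (iii) the unbiasedness identity you invoke, but applied to the \emph{uplink} vectors that constitute $\mathbf{D}$, not to $\widetilde{\mathbf{D}}$: $\mathbb{E}\|\widetilde{\mathbf{U}}^i_{t+1}\|^2=J^i_{t+1}+\mathbb{E}\|\mathbf{U}^i_{t+1}\|^2\le\bigl(1+\tfrac{d}{2(Z_D-1)^2}\bigr)\mathbb{E}\|\mathbf{U}^i_{t+1}\|^2$ (the paper uses $Z_D$ rather than $Z_U$ in this inner factor, which is what produces its clean coefficient); (iv) $\mathbb{E}\|\mathbf{U}^i_{t+1}\|^2\le 4E^2\eta_t^2G^2$. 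Multiplying the factor from (i) by the inflation factor from (iii) and by $4$ gives exactly $\frac{d^2+2d(Z_D-1)^2}{(Z_D-1)^4}$ --- two terms from a single product, not a truncated series. Your proposed route would actually fail if carried out: solving $\mathbb{E}\|\widetilde{\mathbf{D}}\|^2=\mathbb{E}\|\mathbf{D}\|^2+J^D$ as a self-referential equation and expanding geometrically presumes a contraction ratio $\frac{d}{2(Z_D-1)^2}<1$, whereas in the regime compression targets we have $d\gg Z_D$, so the ratio is far larger than $1$ and the series diverges; moreover $\widetilde{\mathbf{D}}$ never needs its norm bounded at all, since only $J^D_{t+1}$ enters $\psi_{t+1}$ and it is controlled by $\mathbb{E}\|\mathbf{D}_{t+1}\|^2$, which involves only already-compressed uplink quantities. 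Replace your downlink step by (i)--(iv) and the rest of your write-up goes through and coincides with the paper's proof.
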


Please refer to Appendix \ref{ProofOfTheorem3And4} for the detailed proof.
This convergence rate is $O(\frac{1}{T})$, which is also the same as that of the FedAvg algorithm without compression derived in \cite{li2019convergence}.  We can also observe the tradeoff between the model accuracy and the compression rate here, and  that $    \mathbb{E}\left[F(\mathbf{w}_{T})\right]$converges to $F^*$  finally as $T$ approaches infinity.

 {Although MUCSC incurs compression errors, the intuition that MUCSC will not lower the converge rate can be explained as below. MUCSC conducts unbiased estimation of original parameters, and the estimation error can be strictly bounded. With the decrease of the learning rate, the influence of the estimation error also diminishes, and hence the FL training converges. Similar to the role of the variance of the stochastic gradients in each client in Assumption~\ref{Assump:LocalVar}, the compression error only increases the variance of learned parameters, and will not lower the convergence rate. 
% compression algorithm, so it will only increase the variance of model training. Due to the increase in variance, the MUCSC algorithm requires more training iterations than uncompressed FL to complete the convergence of the model. We minimize the variance through the EM algorithm to ensure that the variance is bounded. As the number of iterations $t$ increases, the learning rate gradually tends to 0, so the variance caused by the compression algorithm will gradually be eliminated. Therefore, MUCSC will not reduce the convergence rate.
}

\subsection{Tradeoff between Model Accuracy and Compression Rate}

From Theorems \ref{Theorem2}
and \ref{Theorem4}, we can assert that the model accuracy will be lower if we set smaller $Z_U$ and $Z_D$. Now, we explicitly derive the compression rate based on $Z_U$ and $Z_D$ to further reveal the tradeoff between the model accuracy and the compression rate. 

Suppose the dimension of the trained model is $d$ and the number of centroids for uplink and downlink compression operations are $Z_U$ and $Z_D$ respectively.  We assume that the size of a centriod or an original parameter is $h$ bytes.   Then, we can calculate the compression rate, denoted by $\Lambda$, for a particular client as follows. 
It takes $\frac{K}{N}hZ_U+hZ_D$ bytes to transmit all centroid values for uplink and downlink between the PS and a single client in a round of global iteration. Here $\frac{K}{N}$ is the probability that each  client is involved to conduct local iterations. 
It takes $\frac{\left\lceil\log_2{Z_U}\right\rceil}{8}/\frac{\left\lceil\log_2{Z_D}\right\rceil}{8}$ bytes to upload/download a centroid ID. 
It turns out that the compression rate is:
\begin{equation}
\label{CompressionRateofMUCSC}
    \begin{split}
        \Lambda &= \frac{2hd}{(\frac{\left\lceil\log_2{Z_U}\right\rceil}{8}\times d+hZ_U)\frac{K}{N}+hZ_D+ \frac{\left\lceil\log_2{Z_D}\right\rceil}{8}\times d} \\
        &< \frac{16h}{\left\lceil\log_2{Z_U}\right\rceil \frac{K}{N} + \left\lceil\log_2{Z_D}\right\rceil},
    \end{split}
\end{equation}
where $Z_U\geq 2$ and $Z_D\geq 2$. If $d\gg Z_U$ and $d \gg Z_D$, we have $\Lambda \approx \frac{16h}{\left\lceil{\log_2{Z_U}}\right\rceil \frac{K}{N} +\left\lceil{\log_2{Z_D}}\right\rceil}$.
From \eqref{CompressionRateofMUCSC}, we can see that the compression rate can be improved if we reduce the number of centroids $Z_U$ and $Z_D$. However, recall  that $J^i_t$ and $J^D_t$ defined in 
\eqref{EQ:VarianceOfCompression} will increase with the decrease of  $Z_U$ and $Z_D$, and hence the model accuracy will be lowered. 

\subsection{Boosted MUCSC}

According to \eqref{CompressionRateofMUCSC}, the highest compression rate $\Lambda$  is $\frac{16h}{\frac{K}{N}+1}$ which can be achieved by setting $Z_U=2$ and $Z_D=2$. However, setting $Z_U=2$ and $Z_D=2$ will result in maximized  compression errors defined in \eqref{EQ:VarianceOfCompression}, and hence lower the model accuracy.  To further improve the compression rate without compromising model accuracy significantly, we propose the Boosted MUCSC (B-MUCSC) by incorporating the Deep Gradient Compression (DGC) algorithm into MUCSC. DGC is proposed in  \cite{lin2017deep} based on the fact that the values of the most model updates are close to 0 \cite{aji2017sparse}. Thus, the trained model can be effectively updated by only accurately uploading a very small fraction (\emph{e.g.}, 1\%) of model updates which are far away from 0. 

The B-MUCSC algorithm works as follows. Given a vector $\mathbf{U}$, rank $U_{m}$'s  by decreasing order of their absolute values and select top $d_0$ model updates. Then, apply the MUCSC algorithm to cluster $d_0$ elements into $Z_U$ clusters. The rest $d-d_0$ model updates will be simply estimated  by their average value.\footnote{ $d_0$ is a tuneable hyper-parameter.  
In our experiments, it is set as 0.01d for both uplink and downlink model update compression.  }  Downlink model updates can be compressed in a similar way. 

For the B-MUCSC algorithm, each client needs to upload $(\frac{\left\lceil\log_2{Z_U}\right\rceil+\left\lceil\log_2{d}\right\rceil}{8}\times d_0 + hZ_U + h)\frac{K}{N}$ bytes. Here we need $\frac{\left\lceil\log_2{Z_U}\right\rceil+\left\lceil\log_2{d}\right\rceil}{8}d_0$ bytes to represent both cluster IDs and parameter IDs for $d_0$ parameters, and $h$ bytes for the average value of $d-d_0$ parameters. Similarly, each client needs to download $\frac{\left\lceil\log_2{Z_D}\right\rceil+\left\lceil\log_2{d}\right\rceil}{8}\times d_0 + hZ_D + h$ bytes from the PS. The compression rate of B-MUCSC, denoted by $\Lambda_{B}$, is:
\begin{equation}
\label{EQ:CompressRateB}
    \begin{split}
        \Lambda_{B} &= \frac{2hd}{\omega_U+\omega_D},\\
        \omega_U &= (\frac{\left\lceil\log_2{Z_U}\right\rceil+\left\lceil\log_2{d}\right\rceil}{8}d_0 + hZ_U +h)\frac{K}{N},\\
        \omega_D &= \frac{\left\lceil\log_2{Z_D}\right\rceil+\left\lceil\log_2{d}\right\rceil}{8}d_0+h(Z_D+1).
    \end{split}
\end{equation}
If $d\gg d_0\gg Z_U, Z_D$, we have $\Lambda_B \approx \frac{16hd}{(\left\lceil\log_2{Z_U}\right\rceil+\left\lceil\log_2{d}\right\rceil)d_0\frac{K}{N}+(\left\lceil\log_2{Z_D}\right\rceil+\left\lceil\log_2{d}\right\rceil)d_0}\gg \Lambda$. Thus, the compression rate can be increased considerably even if $Z_U$ and $Z_D$ are set as  relatively large values. %In experiment section, we set a large $Z$ for B-MUCSCMeanwhile, only parameters close to 0 

{ 
\subsection{Practical Implementation}

The Internet traffic reduced  by the MUCSC algorithm comes at the cost of more computation cost. The practical system is rather complicated, and it is not reasonable to merely consider the compression rate (controlled by $Z_u$ and $Z_D$) in practice. We briefly the factors to influence the choice of  $Z_u$ and $Z_D$ in real systems. 

\begin{itemize}
    \item Downlink/Uplink capacity: If the downlink/uplink capacity is very limited, a small $Z_D$/$Z_U$ should be chosen to achieve a high compression rate so as to reduce the downlink/uplink traffic. 
    \item Computation capacity of the  PS/clients: If the computation capacity of the PS/clients is limited, a small $Z_D$/$Z_U$ should  be chosen to reduce the computation complexity of MUCSC. 
\end{itemize}

In summary, $Z_U$ and $Z_D$ can tradeoff the compression rate, the computation resource consumption and the model accuracy. MUCSC is particularly applicable if the computation capacity (of the PS and clients) is excessive but the communication capacity is limited. However, it is very difficult to theoretically incorporate the restrictions of computation and communication capacity into our convergence rate analysis.  Thus, we propose to regard $Z_U$ and $Z_D$ as hyperparameters, which can be determined based on  empirical experience. In the next section, we will conduct experiments to demonstrate the benefits of MUCSC by easily setting reasonable $Z_U$ and $Z_D$.

%Many factors such as computing resources, communication bandwidth, and model complexity will affect the performance of the MUCSC algorithm. Therefore, we need to discuss in detail the application of the MUCSC algorithm in the practical system. The algorithm we designed is mainly used to compress the model to reduce the amount of communication data. Therefore, our algorithm is more suitable for situations where clients have abundant computing resources and poor communication bandwidth conditions in practical systems. And this accords with the characteristic that FL communication cost dominates\cite{mcmahan2017communication}, so it can be well applied in common scenes of FL.
}

% {When the client's computing resources are lacking in the practical system, the MUCSC algorithm needs to reduce the number of centroid values to reduce the overhead of computing the centroid. Similarly, for models with large dimensions, we will also reduce the number of centroids in order to speed up the process of computing centroids. At this time, model training is the most important process that occupies computing resources and the additional cost of centroid computation is not obvious. The client will also consider the current network bandwidth conditions when setting the number of centroids. If the communication resources are sufficient, a large number of centroids will be set to speed up the training process of the model; if the bandwidth is poor, a small number of centroids will be selected to avoid transmitting too much data and taking too long.}

\section{Performance Evaluation} \label{Performance}

In this section, we evaluate the performance of the MUCSC and B-MUCSC algorithms from multiple aspects. 

%in order to evaluate the three algorithms we designed, we use the CIFAR-10 data sets for experimental verification. In order to be close to the actual situation of FL, we designed two cases where the data distribution is IID and Non-IID in the experiment. We compare the above algorithms with the uncompressed benchmark algorithm. We simulated the basic network environment and used the performance of the realized model and the time required to transmit data as evaluation indicators.

\subsection{Experimental Settings}

\subsubsection{Dataset} 
We use CIFAR-10 \cite{CIFAR10} and FEMNIST \cite{caldas2018leaf} for our experiments. 
The CIFAR-10 dataset includes 60,000 32*32 color images which can be classified into 10 classes and each class contains 6,000 images. 
We randomly select 50,000 images (\emph{i.e.}, 5,000 from each class) as the training set and the rest 10,000 images are used  as the test set.  

FEMNIST is a dataset specially designed for FL, and each picture in FEMNIST is a 28*28 grayscale image. FEMNIST contains a total number of 62 categories, including numbers from 0-9 and all uppercase and lowercase English letters generated by individual users. 
We generate $100$ clients and the local dataset for each client by using the method given in \cite{caldas2018leaf}.  In total, there are 41,761 samples, and each client owns about 400 samples. We randomly select 90\% samples from each client as the training set. The rest samples are used as the test set. 

%(YP: how many samples in total, and how you classify training and test sets?)

\subsubsection{Trained Models}
A full convolutional layer network (called ALL-CNN) is trained to classify the CIFAR-10 dataset. The model is designed based on the  work \cite{springenberg2014striving}. Specifically, the model consists of 9 convolutional layers, with a total number of more than $10^6$ parameters.  %In a centralized training environment, the accuracy of ALL-CNN can be higher than  92\%. 
In our work, each client uses the Stochastic Gradient Descent (SGD) method to perform local iterations with the learning rate $\max\{\frac{0.5}{1+\frac{t}{400}}, 0.01\}$ where $t$ is the iteration index. %(decreases as the number of iterations increases and remains unchanged when it decreases to 0.01 ).

For experiments with the FEMNIST dataset, we implement another convolutional neural network (CNN) improved based on the LeNet-5 network in \cite{lecun1998gradient}. We also employ the SGD algorithm to train the model and the setting of the learning rate is the same as that of ALL-CNN.

\subsubsection{System Settings}

To evaluate model update compression algorithms in a realistic scenario, we simulate   an FL system with a single PS and 100 clients. 

 {In our experiment, we adopted a star network topology, that is, all clients are connected to the PS with a point-to-point mode.  The star topology is the most popular one used in existing works \cite{nishio2019client, sattler2019robust}.\footnote{Different topological structures will affect the convergence rate of FL.
%More complicated topologies are investigated by works \cite{}, which deduced different convergence rates. 
However, the compression rate of MUCSC is independent with the network topology. Due to limited space, we only adopt the star topology in this work. }
}

In each round of global iteration, 10 clients are randomly selected to upload their model updates  for model aggregation at the PS. The network is setup according to the work \cite{akdeniz2014millimeter}, in which there is  a single PS  co-located with a bases station. Clients are evenly distributed around the PS in a circle of 2km. The model updates are transmitted between the PS and each client via wireless communication channels with the carrier frequency 2.5GHz. The antenna lengths of the base station and each client are 11m and 1m, respectively. We set the transmission power and antenna gain of the PS and each client as 20dBm and 0dBi, respectively. %The PS allocates 10RBs to each client in every 0.5ms gap.
The average communication speed  is set as 1.4 Mb/s for uplink and downlink.  However, the actual communication speed between the PS and each client is  a random variable sampled from the Gaussian distribution  at the beginning of each global iteration. We set the standard deviation of the Gaussian distribution as 10\% of the mean value. 
With our settings, the communication time consumed by each global iteration always depends on the slowest client. For example, if the download speed of client $n$ is $\theta_n$ and the client needs to download $M$ bytes from the PS. Then, the time for $N$ clients to complete the download of model updates is $\max_n\{\frac{M}{\theta_n}\}$.

%We set clients' computing power in terms of the number of rounds required by the CPU to process a single data sample before the start of experiments. Each client's computing power is uniformly distributed between  10 and 30 cycles/bit with the smallest scale 0.1 \cite{tran2019federated}.

In our experiments, we set the batch  size as 8 for local iterations. The number of local iterations on each selected client is $E=5$.  For the MUCSC algorithm, we set $Z_D=16$, and $Z_U$ as  $4$, $8$ or $16$ by randomly splitting clients into 3 sets. The purpose to set different $Z_U$'s is to evaluate the robustness of MUCSC, which can work well even if clients use different number of centriods. 

 {The compression rate of uplink/downlink compression is mainly determined by the hyperparameter $Z_U$/$Z_D$. We suppose that the PS has more abundant communication resources and computing resources than clients, and hence $Z_D$ can be set as a larger value. For clients, we consider a general heterogeneous scenario, and there exist both fat clients with abundant resources and thin clients with limited resources. Thus, we set $Z_U$ as different values. In our settings, the computed uplink/downlink compression rate is  10.67/8.0.}

%if the computing power is in between [10, 16.6], [16.7, 23.2] or [23.3, 30] respectively.   
For the B-MUCSC algorithm, we select the top 1\% most significant model updates for compression.  Since only a very small fraction of model updates are involved for compression, we simply set  $Z_U=Z_D=256$.
%We set $Z = 8$ for MUCSC and $Z=256$ for B-MUCSC. 
%For the MUCSC algorithm, we use 4 bits to represent the centroid ID during the experiment. In reality, we need to adjust the number of centroids according to the network situation.
We initialize the  learning rate of the MUCSC/B-MUCSC algorithm  as 0.001. It will iterate 5 times. In each iteration, the learning rate will be decreased by 10 times if any newly generate  centroid value is out of the range of model updates. 

\subsubsection{Sample Distribution}
We evaluate our algorithms with both iid and non-iid sample distributions. 
%to demonstrate the extensive applicability of MUCSC. 

%In order to ensure the generality of the algorithm, we have designed two different sample distributions, IID and Non-IID.

\begin{itemize}
    \item IID Distribution: The IID distribution is setup by using the CIFAR-10 dataset. Each client  randomly and uniformly selects 500 samples from the entire training set as the local dataset.
    \item Non-IID Distribution: For the non-iid distribution with CIFAR-10, each client randomly selects 300 to 500 samples from the training set, but each client can only have samples from  5 randomly selected classes. 
    For the FEMNIST dataset, its distribution is naturally non-iid. Each client can be regarded as a separate writer, with samples generated by herself. The number of data samples owned by  each client is about 400.
    %For the non-iid case, each client randomly select 1,000 to 3,000 samples from the total training dataset. But each client can only select 5 randomly determined classes.
\end{itemize}
 
\subsubsection{Evaluation Metrics}
The evaluation of the compression algorithm is rather complicated. It is unreasonable to use a single metric to evaluate a compression algorithm in FL. For example, a biased compression algorithm can achieve an extremely high compression rate. However, the model accuracy may be deteriorated  significantly by the compression error. 
In our work, we adopt the following five metrics for evaluation from various aspects. 
\begin{itemize}
    \item Model Accuracy: Evaluate whether a compression algorithm will compromise the trained model accuracy evaluated based on the test set 
    \item  {Compression Rate: Evaluate the ratio of uncompressed communication traffic to compressed communication traffic in a single round of global iteration  for each compression algorithm.}
    %Evaluate the amount of the communication traffic reduced by each compression algorithm. 
    \item  {Total Communication Traffic: Evaluate the total volume of the  communication traffic to complete the whole FL training process. Note that different algorithms may take different number of global iterations to complete the FL training so that the model can reach the specified accuracy. This metric is the product of the average traffic in each global iteration and the total number of global iterations for each algorithm. }
    \item Communication Time: Evaluate the total consumed communication time to complete the model training by adopting each compression algorithm. 
    \item  Computing Time: Evaluate the computing time spent by each client during  the entire model training process. 
\end{itemize}

\subsubsection{Baseline Algorithms}

%We compare the performance of MUCSC and B-MUCSC with several other competitive compression algorithms, including two unbiased and two biased compression algorithms respectively. 
We also implement several state-of-the-art compression algorithms as baselines. 
\begin{itemize}
    \item SIGNSGD (SSGD): SSGD quantifies model updates to be represented using  their signs. Thus, each model update value can be represented by a bit for  transmission \cite{bernstein2018signsgd}. 
    
    \item Sparse Ternary Compression (STC): STC will select top $x_{STC}\%$ model updates which possesses the largest absolute values. The selected model updates will be binarized and uploaded to the PS. The rest model updates will be set as 0. At the same time, clients keep model updates that are not uploaded \cite{sattler2019robust}. 
    
    \item Quantized SGD (QSGD): QSGD quantifies each model update by rounding it to $b$ discrete values in a principled manner that maintains the statistical properties of the original data \cite{alistarh2017qsgd}. 
    
    \item Deep Gradient Compression (DGC): The DGC algorithm only uploads top $x_{DGC}\%$\% model updates to the PS. For the remaining model updates, the algorithm will keep them locally on the client side and accumulate them in the next update \cite{lin2017deep}.
    
    \item No compression (NC): There is no compression operation. It is used as the benchmark for performance evaluation. It can also be regarded as a special compression algorithm with compression rate equal to 1. 
\end{itemize}
We set $x_{STC}\% = 3\%$ and  $x_{DGC}\% = 1\%$ in our experiments through tuning so that their overall performance is maximized.
SSGD and STC can be used for both uplink and downlink compression; while QSGD and DGC are developed to only compress uplink model updates. %Thus, the compression rates of QSGD and DGC should be lower. 

Among these baselines, only QSGD is an unbiased compression algorithm. All others are biased compression algorithms. Usually, biased algorithms can achieve higher compression rates with the cost of lower model accuracy.
According to \cite{shi2019convergence}, there is another way to classify compression algorithms. The  algorithms that only upload a small fraction of (compressed) model updates to the PS in each global iteration are regarded as the sparsification algorithms. The algorithms that quantify model updates into a small number of bits with error compensation are  regarded as quantization algorithms.   MUCSC, SSGD and QSGD are quantization algorithms, while B-MUCSC, STC and DGC are sparsification algorithms. 

%In  our experiments, we tune parameters of these baseline algorithms so that their final model accuracy at least exceeds 80\% for iid and 79\% for non-iid in CIFAR-10 and 85\% for non-iid in FEMNIST. Thus, we set $b=16$ for the QSGD algorithm. Regarding DGC and STC, we set $x=1\%$ and $x =  3\%$ respectively.

\subsection{Experimental Results}

\subsubsection{Comparison of Model Accuracy}

\begin{figure}[h]
    \centering
    \includegraphics[width=\linewidth]{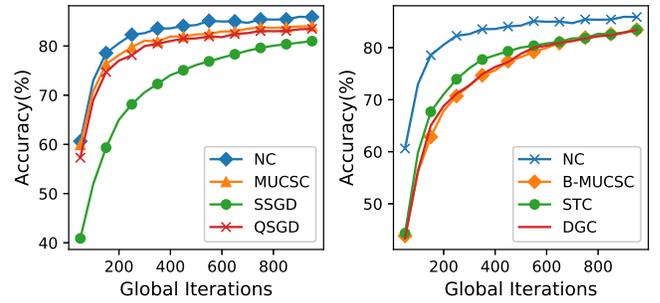}
    \caption{Comparison of model accuracy using different compression algorithms in FedAvg with iid data distribution of CIFAR-10: Quantization (left); Sparsification (right)}
    \label{CIFAR_IID_Accu}
\end{figure}

\begin{figure}[h]
    \centering
    \includegraphics[width=\linewidth]{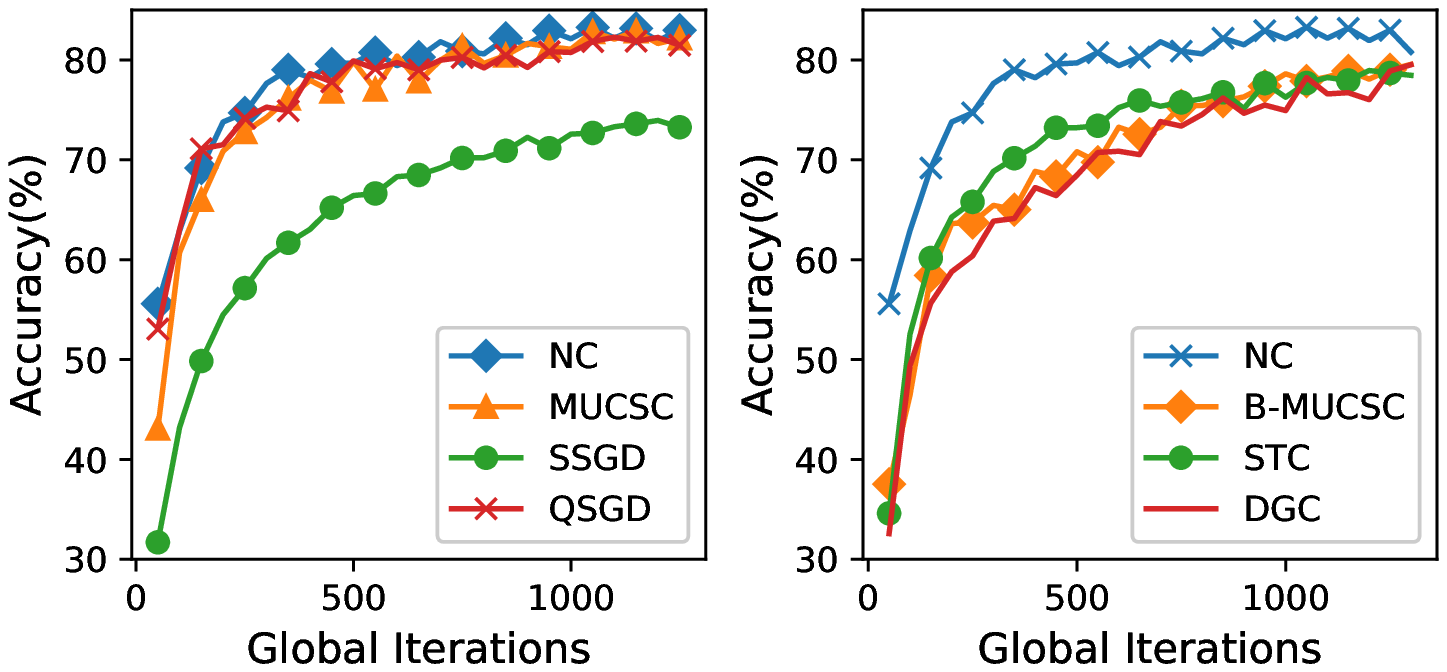}
    \caption{Comparison of model accuracy using different compression algorithms in FedAvg with non-iid data distribution of CIFAR-10: Quantization (left); Sparsification(right)}
    \label{CIFAR_NonIID_Accu}
\end{figure}

\begin{figure}[h]
    \centering
    \includegraphics[width=\linewidth]{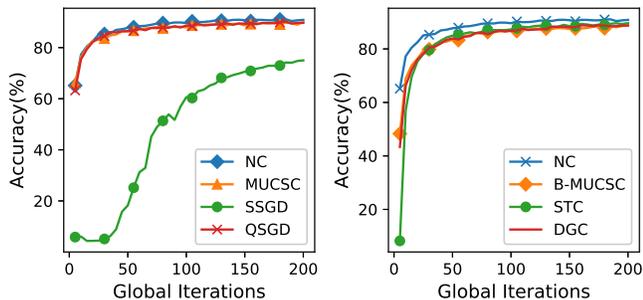}
    \caption{Comparison of model accuracy using different compression algorithms in FedAvg with non-iid data distribution of FEMNIST: Quantization (left); Sparsification(right)}
    \label{FEMNIST_NonIID_Accu}
\end{figure}

We conduct a series of experiments to compare the model accuracy by executing a certain number of iterations with FedAvg and different compression algorithms. Our results are presented in Fig.~\ref{CIFAR_IID_Accu} (CIFAR-10 with iid), Fig.~\ref{CIFAR_NonIID_Accu} (CIFAR-10 with non-iid) and Fig.~\ref{FEMNIST_NonIID_Accu} (FEMNIST with non-iid).  To avoid plotting too many curves in a single figure, we plot the results of quantization algorithms and sparsification algorithms in separate subfigures.  %Fig.~\ref{FEMNIST_NonIID_Accu} is the result with the FEMNIST dataset and non-iid distribution. 
%In the MUCSC algorithm, we will choose 2, 8 or 16 centroids according to the computing power of the client. In the B-MUCSC algorithm, we only cluster the top 1\%  model update of the largest absolute value to 256 discrete values and set the remaining updates to their average value. 
In all figures, the x-axis represents the number of conducted global iteration; while the y-axis represents the model accuracy evaluated by the test set. 

From the experiment results, we can observe that
\begin{itemize}
    \item Because MUCSC and QSGD are unbiased compression algorithms, the model accuracy achieved by using them is very close to that of NC, and better than other baselines. 
    \item Although SSGD is a quantization algorithm, its accuracy performance is very poor because it is a biased compression algorithm. 
    \item There is a significant gap of the model accuracy between NC and biased algorithms, \emph{i.e.}, B-MUCSC, STC and DGC. The reason is that biased algorithms  incur larger compression errors. 
    \item Through comparing Fig.~\ref{CIFAR_IID_Accu} with Fig.~\ref{CIFAR_NonIID_Accu}, we can see that the non-iid sample distribution lowers the model accuracy a little bit. 
\end{itemize}
% Fig.~\ref{CIFAR_IID_Accu}, we can observe that NC+FedAvg can achieve the best accuracy performance, The performance of MUCSC+FedAvg is the one closest to NC+FedAvg, which significantly outperforms other baselines. B-MUCSC and STC, DGC can achieve about the same model accuracy. In Fig.~\ref{CIFAR_NonIID_Accu} with non-iid distribution, we can draw identical observations. In Fig.~\ref{FEMNIST_NonIID_Accu}, the accuracy of the model achieved by the MUCSC algorithm is close to that of the QSGD algorithm. B-MUCSC, DGC and STC algorithms still achieve similar model accuracy.

\subsubsection{Comparison of Compression Rates}

\begin{table}[h]
    \centering
    \caption{Comparison of compression rates of algorithms in CIFAR-10(second line) and FEMNIST(third line).}
    \begin{tabular}{|c|c|c|c|c|c|}
        \hline
        \multicolumn{1}{|c|}{MUCSC} & \multicolumn{1}{|c|}{QSGD} & \multicolumn{1}{|c|}{SSGD} & \multicolumn{1}{|c|}{B-MUCSC} & \multicolumn{1}{|c|}{DGC} & \multicolumn{1}{|c|}{STC}\\
        \hline
        $14.54\times$ & $1.98\times$ & $58.18\times$ & $196.57\times$ & $1.99\times$ & $88.15\times$\\
        \hline
        $14.54\times$ & $1.98\times$ & $58.18\times$ & $196.53\times$ & $1.99\times$ & $88.15\times$\\
        \hline
    \end{tabular}
    \label{CompressionRateCIFARFEMNIST}
\end{table}

\begin{comment}
	\begin{table}[h]
	\centering
	\caption{Comparison of compression rates of algorithms in FEMNIST.}
	\begin{tabular}{|c|c|c|c|c|c|}
	\hline
	\multicolumn{1}{|c|}{MUCSC} & \multicolumn{1}{|c|}{QSGD} & \multicolumn{1}{|c|}{SSGD} & \multicolumn{1}{|c|}{B-MUCSC} & \multicolumn{1}{|c|}{DGC} & \multicolumn{1}{|c|}{STC}\\
	\hline
	$14.54\times$ & $1.98\times$ & $58.18\times$ & $196.53\times$ & $1.99\times$ & $88.15\times$\\
	\hline
	\end{tabular}
	\label{CompressionRateFEMNIST}
	\end{table}
\end{comment}

As we have stated that it is not reasonable to only compare compression algorithms in terms of model accuracy. 
We further compare compression rates achieved by compression algorithms in experiments presented in Fig.~\ref{CIFAR_IID_Accu}, Fig.~\ref{CIFAR_NonIID_Accu} and Fig.~\ref{FEMNIST_NonIID_Accu}.   For MUCSC and B-MUCSC, the compression rates are computed according to \eqref{CompressionRateofMUCSC} and \eqref{EQ:CompressRateB} respectively. The compression rates for other baselines can be computed in a similar principle:  the total traffic volume without any compression divided by the total traffic volume after compression. 
The compression rates are listed in Tables~\ref{CompressionRateCIFARFEMNIST}. Note that in Table~\ref{CompressionRateCIFARFEMNIST}, the second line corresponds to Figs.~\ref{CIFAR_IID_Accu} and \ref{CIFAR_NonIID_Accu} since the sample distribution does not change the compression rate. 
Through comparing compression rates,  we can see that:
\begin{itemize}
    \item B-MUCSC  significantly outperforms other compression algorithms in terms of the compression rate. In particular, the results in Tables~\ref{CompressionRateCIFARFEMNIST} indicate that B-MUCSC can reduce the communication traffic by more than 99\% per communication round. 
    \item  {There is a tradeoff between the model accuracy and the compression rate. Although the model accuracy of STC is a little bit better than that of B-MUCSC, its compression rate is much lower than that of  B-MUCSC. }
    \item The compression rate of MUCSC is much higher than QSGD, another unbiased compression algorithm.\footnote{Recall that it is unfair to compare the compression rate between biased and unbiased algorithms. } 
    \item The compression rate with DGC is the lowest one among biased algorithms because it is only applicable to  compress uplink model updates. 
\end{itemize}

\subsubsection{Selection of Number of Centroids}

\begin{figure}[h]
	\centering
	\includegraphics[width=\linewidth]{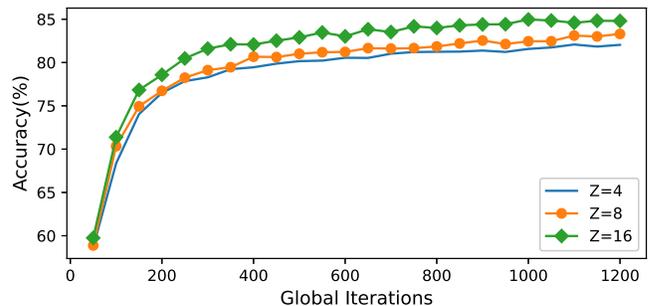}
	\caption{ {The impact of different numbers of centroids, \emph{i.e.}, Z, on model accuracy} }
	\label{DifferentZ}
\end{figure}

 {As we have discussed, there is a tradeoff between the model accuracy and the compression rate by varying the number of centriods $Z$ in MUCSC. To verify this point, we enumerate $Z$ as 4, 8 and 16 in the experiment presented in Fig.~\ref{DifferentZ}. We plot the model accuracy against the number of global iterations. The result in Fig.~\ref{DifferentZ} shows that the model accuracy will be better if $Z$ is larger. But the larger $Z$, the lower the compression rate. Thereby, to evaluate a compression algorithm in FL, one needs to synthetically consider both model accuracy and compression rate. }

\subsubsection{Comparison of Total Communication Traffic}

\begin{table}[h]
    \centering
    \caption{Comparison of communication traffic for different algorithms with iid distribution and 82\% model accuracy in CIFAR-10.}
    \begin{tabular}{|c|c|c|c|}
        \hline
         & T & Comm. Traffic & \tabincell{c}{comm. traffic\\reduced to}\\
         \hline
         NC & 1000 & 2090.05MB & $100\%$\\
         \hline
         \textbf{MUCSC} & \textbf{2250} & \textbf{587.88MB} & \textbf{28.13\%}\\
         \hline
         QSGD & 3000 & 3527.0MB & $168.75\%$\\
         \hline
         \textbf{B-MUCSC} & \textbf{3750} & \textbf{72.49MB} &\textbf{3.47\%}\\
         \hline
         DGC & 3750 & 3983.76MB & $190.61\%$\\
         \hline
         STC & 3750 & 161.65MB & $7.73\%$\\
         \hline
    \end{tabular}
    \label{IIDCommunicationCIFAR}
\end{table}

\begin{table}[h]
    \centering
    \caption{Comparison of communication traffic for different algorithms with non-iid distribution and 79\% model accuracy in CIFAR-10. }
    \begin{tabular}{|c|c|c|c|}
        \hline
         & T & Comm. Traffic & \tabincell{c}{comm. traffic\\reduced to}\\
         \hline
         NC & 1500& 3135.08MB & $100\%$\\
         \hline
         \textbf{MUCSC} & \textbf{2250} & \textbf{587.87MB} & \textbf{18.75\%}\\
         \hline
         QSGD & 2250 & 2645.25MB & $ 84.37\%$\\
         \hline
         \textbf{B-MUCSC} & \textbf{6000} & \textbf{115.99MB} &\textbf{3.69\%}\\
         \hline
         DGC & 6250 & 6639.59MB & $211.78\%$\\
         \hline
         STC & 6250 & 269.42MB & $8.59\%$\\
         \hline
    \end{tabular}
    \label{NonIIDCommunicationCIFAR}
\end{table}

\begin{table}[h]
    \centering
    \caption{Comparison of communication traffic for different algorithms with non-iid distribution and 85\% model accuracy in FEMNIST. }
    \begin{tabular}{|c|c|c|c|}
        \hline
         & T & Comm. Traffic & \tabincell{c}{comm. traffic\\reduced to}\\
         \hline
         NC & 100 & 206.93MB & $100\%$\\
         \hline
         \textbf{MUCSC} & \textbf{175} & \textbf{45.27MB} & \textbf{21.88\%}\\
         \hline
         QSGD & 150 & 149.65MB & $84.38\%$\\
         \hline
         \textbf{B-MUCSC} & \textbf{300} & \textbf{5.74MB} &\textbf{2.78\%}\\
         \hline
         DGC & 325 & 341.83MB & $165.19\%$\\
         \hline
         STC & 275 & 11.73MB & $5.16\%$\\
         \hline
    \end{tabular}
    \label{NonIIDCommunicationFEMNIST}
\end{table}

\begin{figure}[h]
	\centering
	\includegraphics[width=\linewidth]{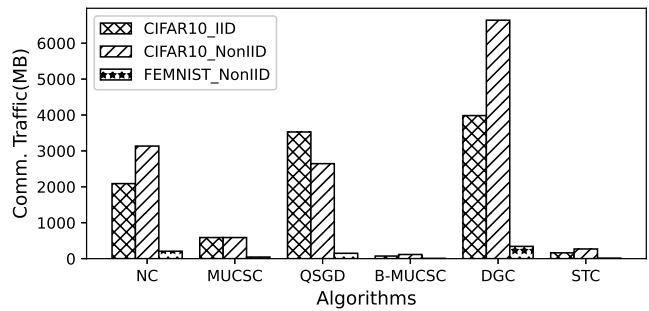}
	\caption{ {Comparison of communication traffic for different algorithms}}
	\label{CommTraffic}
\end{figure}

The total communication traffic is defined as the multiplication of communication rounds (\emph{i.e.}, the number of global iterations) and the volume of the communication traffic  of each global iteration. By leveraging this metric, we can evaluate the performance of each compression algorithm by setting a fixed model accuracy. In practice, this evaluation is more meaningful if the target of the model training is to obtain a model  with at least certain accuracy. 
Thus, we suppose that there is a predefined threshold as the target of model training.  The training process will not  terminate before  the model accuracy exceeds the threshold. With this setting, the numbers of conducted iterations are different if different compression algorithms are adopted. In Tables~\ref{IIDCommunicationCIFAR} (with CIFAR-10 and iid distribution), \ref{NonIIDCommunicationCIFAR} (with CIFAR-10 and non-iid distribution) and \ref{NonIIDCommunicationFEMNIST} (with FEMNIST and non-iid distribution), we  fix the target model accuracy as 82\%,79\% and 85\% respectively. 
Then, we compare the number of global iterations and the total communication traffic volume of each algorithm. Note that we cannot set a very high threshold in this experiment because biased compression algorithms often discard insignificant model updates such that  they  may never reach a very high accuracy.\footnote{SSGD is not included in this result because its model accuracy can never reach the threshold in our experiments. }
 {Fig.~\ref{CommTraffic} shows the communication traffic required for different algorithm  to obtain a predefined model accuracy. Note that the total number of global iterations conducted for each algorithm is different for this experiment. }
From Table~\ref{IIDCommunicationCIFAR}, we can draw the following observations:
\begin{enumerate}
    \item The unbiased algorithms (including NC, MUCSC and QSGD) take fewer numbers of iterations to reach the predefined threshold accuracy. However, they may consume more communication traffic because of their lower compression rate. 
    
    \item  B-MUCSC is the best algorithm that can significantly reduce the communication traffic by more than 95\% because it can optimally tradeoff between the model accuracy and the compression rate.  
    MUCSC is the best one among unbiased algorithms. 
    
    \item It is notable that the volume of the  communication traffic is increased to 211.78\% by using the DGC algorithm for compression. This counter-intuitive result can be explained from the increased number of global iterations. Although DGC reduces the traffic per global iteration, it needs to conduct a much larger number of global iterations to reach the threshold model accuracy, and thereby it finally generates more total traffic volume. 
\end{enumerate}

%We conduct a similar experiment in  Table~\ref{NonIIDCommunicationCIFAR} with non-iid sample distribution and Table~\ref{NonIIDCommunicationFEMNIST}. The fixed final model accuracy is 79\% and 85\%.  From Table~\ref{NonIIDCommunicationCIFAR} and Table~\ref{NonIIDCommunicationFEMNIST}, we can draw almost identical observation. MUCSC and B-MUCSC are still two best algorithms, and B-MUCSC can still reduce the communication traffic by more than 95\%. 

\subsubsection{Comparison of Total Time Cost}

\begin{figure}[h]
	\centering
	\includegraphics[width=\linewidth]{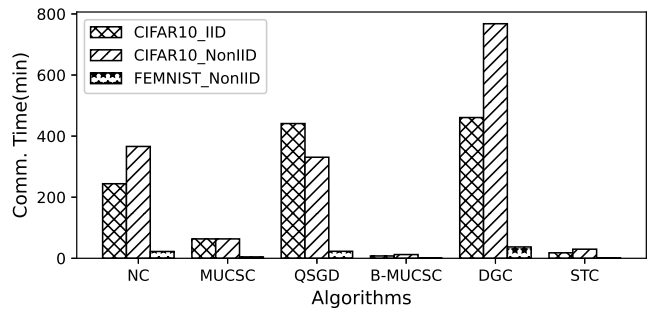}
	\caption{ {Comparison of communication time for different algorithms}}
	\label{CommTime}
\end{figure}

 {The advantage of compression algorithms is to reduce the communication time in FL. The comparison of the communication time required by different algorithms to achieve the predefined model accuracy is shown in Fig.~\ref{CommTime}.}

\begin{table}[h]
    \centering
    \caption{Comparison of total time cost for different algorithms with iid data distribution and 82\% model accuracy in CIFAR-10. }
    \begin{tabular}{|c|c|c|c|c|}
        \hline
         & T & Comm. Time & Comp. Time & \tabincell{c}{total time\\reduced to}\\
         \hline
         NC & 1000 & 244.1min & 47.57min & $100\%$\\
         \hline
         \textbf{MUCSC} & \textbf{2250} & \textbf{63.45min} & \textbf{153.52min} & \textbf{74.39\%}\\
         \hline
         QSGD & 3000 & 441.1min & 195.5min & $218.26\%$\\
         \hline
         \textbf{B-MUCSC} & \textbf{3750} & \textbf{7.62min} & \textbf{234.88min} &\textbf{83.14\%}\\
         \hline
         DGC & 3750 & 460.76min & 224.38min & $234.90\%$\\
         \hline
         STC & 3750 & 17.88min & 226.38min & $83.74\%$\\
         \hline
    \end{tabular}
    \label{IIDCommunicationTimeCIFAR}
\end{table}

\begin{table}[h]
    \centering
    \caption{Comparison of the total  time cost for different algorithms with non-iid data distribution and 79\% model accuracy in CIFAR-10. }
    \begin{tabular}{|c|c|c|c|c|}
        \hline
         & T & Comm. Time & Comp. Time & \tabincell{c}{total time\\reduced to}\\
         \hline
         NC & 1500 & 366.15min & 71.35min & $100\%$\\
         \hline
         \textbf{MUCSC} & \textbf{2250} & \textbf{63.45min} & \textbf{153.53min} & \textbf{49.59\%}\\
         \hline
         QSGD & 2250 & 330.83min & 146.63min & $109.13\%$\\
         \hline
         \textbf{B-MUCSC} & \textbf{6000} & \textbf{12.2min} & \textbf{375.8min} &\textbf{88.68\%}\\
         \hline
         DGC & 6250 & 767.92min & 373.96min & $261.00\%$\\
         \hline
         STC & 6250 & 29.81min & 377.29min & $93.05\%$\\
         \hline
    \end{tabular}
    \label{NonIIDCommunicationTimeCIFAR}
\end{table}

\begin{table}[h]
    \centering
    \caption{Comparison of the total time cost for different algorithms with non-iid data distribution and 85\% model accuracy in FEMNIST. }
    \begin{tabular}{|c|c|c|c|c|}
        \hline
         & T & Comm. Time & Comp. Time & \tabincell{c}{comm. time\\reduced to}\\
         \hline
         NC & 100 & 22.28min & 2.78min & $100\%$\\
         \hline
         \textbf{MUCSC} & \textbf{175} & \textbf{4.57min} & \textbf{8.62min} & \textbf{52.63\%}\\
         \hline
         QSGD & 175 & 22.88min & 7.8min & $122.43\%$\\
         \hline
         \textbf{B-MUCSC} & \textbf{300} & \textbf{0.63min} & \textbf{10.66min} &\textbf{45.05\%}\\
         \hline
         DGC & 325 & 37.15min & 10.67min & $190.82\%$\\
         \hline
         STC & 275 & 1.31min & 10.15min & $45.73\%$\\
         \hline
    \end{tabular}
    \label{CommunicationTimeFEMNIST}
\end{table}

From the entire system's perspective, less communication traffic volume does not necessarily result in shorter training time. The total training time cost is the sum of the communication time cost and the computation time cost. The computation time cost includes the time consumed by the model training and the model update compression.  To see the effectiveness of each compression algorithm in practice, we compare the total time cost of each experiment case to see how much time can be saved in Tables~\ref{IIDCommunicationTimeCIFAR}, \ref{NonIIDCommunicationTimeCIFAR} and  \ref{CommunicationTimeFEMNIST}. 
We keep the experiment settings and the threshold model accuracy the same as the last experiment. From results of this experiment, we can draw the following conclusions:
\begin{itemize}
    \item For NC, the communication time cost dominates the total time cost, and the computation time cost only takes about 20\% of the total time cost.  
     {\item The computation time of unbiased compression algorithms is much lower than that of biased algorithms. This is because unbiased compression algorithms take fewer communication rounds to achieve the target model accuracy. Hence, the cumulative computation time (\emph{i.e.} the multiplication of computation time  of each global iteration and  the total number of global iteration) of  unbiased algorithms is smaller.}
    \item QSGD and DGC cannot reduce the communication time cost effectively because of the significantly increased total number of global iterations. 
    \item MUCSC, B-MUCSC and STC can substantially reduce the communication time cost, however they all cause the increase of the computation time cost due to the compression operation. 
    \item MUCSC/B-MUCSC is the best one for CIFAR-10/FEMNIST, which can reduce the total time cost effectively. For CIFAR-10, the model dimension is more than $10^6$ so that the computation time cost is too high by using the B-MUCSC.\footnote{The computation time cost of B-MUCSC is higher because we set $Z_U=Z_D=256$ for B-MUCSC. } In contrast, the model dimension of the FEMNIST case is much lower, and thus the computation time cost is insignificant. B-MUCSC becomes the best one because it achieves  the lowest communication time cost. 
    \item Which compression algorithm is the best one also depends on the computation resource. A compression algorithm of higher computation complexity is more applicable if clients are equipped with more powerful computation resource. 
\end{itemize}

%In Table~\ref{IIDCommunicationTimeCIFAR}, we fix the final model accuracy as 82\% and iid sample distribution. Then, we compare the number of global iterations and the total communication traffic volume taken by each compression algorithm to achieve 82\% model accuracy. From Table~\ref{IIDCommunicationTimeCIFAR}, we can observe that the MUCSC algorithm requires the least number of communications among all compression algorithms and the time required for communication is the shortest among quantization algorithms. The communication time required by the B-MUCSC algorithm is the shortest among all algorithms, which can save more than 95\% of time compared with the uncompressed algorithm.

%We conduct a similar experiment in  Table~\ref{NonIIDCommunicationTimeCIFAR} with non-iid sample distribution and Table~\ref{CommunicationTimeFEMNIST}. The fixed final model accuracy is 79\% and 85\%.  From Table~\ref{NonIIDCommunicationTimeCIFAR} and Table~\ref{NonIIDCommunicationTimeCIFAR}, we can draw almost identical observation. MUCSC and B-MUCSC are still two best algorithms, and B-MUCSC can still reduce the communication traffic by more than 95\%. 

\section{Conclusion}\label{Conclusion}
Reducing the communication cost is an effective approach to accelerate the convergence of FL. In this work, we design a novel algorithm MUCSC by compressing both uplink and downlink high dimensional model updates into a few centroid values through a soft clustering algorithm. The amount of communication traffic between individual clients and PS can be significantly reduced since only  centroid values and centroid IDs are exchanged via the Internet. Meanwhile,  MUCSC can guarantee that the trained model will converge  with the same rate as that without compression. Our study also reveals the tradeoff between the compression rate and the model accuracy. A boosted MUCSC is devised by hiding IDs of parameters whose model updates are close to 0.  In the end, extensive experiments with the CIFAR-10 and FEMNIST datasets are carried out to verify our analysis and demonstrate the extraordinary performance of our compression algorithms.

\appendices
\section{Proof of convergence with full client participation}
We first consider the situation where all clients participate in training in each iteration. Therefore, the update process of the model is as follows.
\begin{equation}
    \label{EQ:LocalUpdate}
    \begin{split}
        \mathbf{v}^i_{t+1}=\mathbf{w}^i_{t}-\eta_t\nabla F_i(\mathbf{w}^i_t,\mathcal{B}^i_t).
    \end{split}
\end{equation}
\begin{equation}
    \mathbf{w}^i_{t+1}=\left\{
        \begin{aligned}
        &\mathbf{v}^i_{t+1}, \quad if\ t+1\ \notin \mathcal{I},\\
        %&\mathbf{w}_{t+1-E}^i - comp\left(\sum_{i=1}^Np_i\widetilde{\mathbf{H}}^i_{t+1}\right), \quad if\ t+1\ \in \mathcal{I}. \\
        &\mathbf{w}_{t+1-E}^i - \widetilde{\mathbf{D}}_{t+1}, \quad if\ t+1\ \in \mathcal{I}, \\
        \end{aligned}
        \right.
\end{equation}
where $\mathbf{D}_{t+1}=\sum_{i=1}^Np_i\widetilde{\mathbf{U}}^i_{t+1}$ and $\mathbf{U}^i_{t+1}=\sum_{j=t+1-E}^{t} \eta_{j} \nabla F_i(\mathbf{w}^i_{j},\mathcal{B}^i_{j})$
%When $t+1\in\mathcal{I}$, we define $t_0=t+1-E$. At this time, each client will use the MUCSC algorithm to compress its own model updates and upload them to the server. Then each client will receive the compressed aggregated model updates from the server. Therefore $\mathbf{D}_{t+1}=\sum_{i=1}^Np_i\widetilde{\mathbf{U}}^i_{t+1}$ where $\mathbf{U}^i_{t+1}=\sum_{j=t_0}^{t} \eta_{j} \nabla F_i(\mathbf{w}^i_{j},\mathcal{B}^i_{j})$ to indicate the model updates to be uploaded. 

\subsection{Key Lemmas}
To facilitate our analysis, we  define two virtual sequences, namely $\bar{\mathbf{v}}_t=\sum_{i=1}^N p_i \mathbf{v}^i_{t}$ and $\bar{\mathbf{w}}_t=\sum_{i=1}^N p_i \mathbf{w}^i_{t}$ to represent the global average parameters. At the same time we define $\bar{\mathbf{g}}_t=\sum_{i=1}^N p_i \nabla F_i(\mathbf{w}^i_{t})$ and $\mathbf{g}_t=\sum_{i=1}^N p_i \nabla F_i(\mathbf{w}^i_{t},\mathcal{B}^i_t)$. By definition we can get $\bar{\mathbf{v}}_{t+1}=\bar{\mathbf{w}}_t-\eta_t \mathbf{g}_t$ and $\mathbb{E}[\mathbf{g}_t]=\bar{\mathbf{g}}_t$. 
In order to derive the convergence rate with the compressed model updates,  we need to leverage the lemmas that have been proved in \cite{li2019convergence}.

\begin{lemma}
	\label{Lemma:v-w}
	(Results of one step of iteration) Let Assumption \ref{Assump:ConSmoo} holds and $\eta_t\le\frac{1}{4L}$, we can get
	\begin{equation}
		\label{EQ:Lemma1}
		\begin{split}
			\mathbb{E}\left\|\bar{\mathbf{v}}_{t+1}-\mathbf{w}^*\right\|^2&=(1-\eta_t\mu)\mathbb{E}\left\|\bar{\mathbf{w}}_t-\mathbf{w}^*\right\|^2+\eta_t^2\mathbb{E}\left\|\mathbf{g}_t-\bar{\mathbf{g}}_t\right\|^2\\
			&\quad +6L\eta_t^2\Gamma+2\mathbb{E}\left[\sum_{i=1}^Np_i\left\|\bar{\mathbf{w}}_t-\mathbf{w}^i_t\right\|^2\right].
		\end{split}
	\end{equation}
\end{lemma}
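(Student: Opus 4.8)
The plan is to establish this one-step bound by the now-standard perturbed-iterate argument for FedAvg, treating the stochastic gradient noise, the client drift, and the non-iid heterogeneity as three separate error sources. Since $\bar{\mathbf{v}}_{t+1}=\bar{\mathbf{w}}_t-\eta_t\mathbf{g}_t$ by definition, I would first write $\bar{\mathbf{v}}_{t+1}-\mathbf{w}^*=(\bar{\mathbf{w}}_t-\mathbf{w}^*-\eta_t\bar{\mathbf{g}}_t)-\eta_t(\mathbf{g}_t-\bar{\mathbf{g}}_t)$, expand the square, and take expectation. Because $\mathbb{E}[\mathbf{g}_t]=\bar{\mathbf{g}}_t$ and both $\bar{\mathbf{w}}_t,\bar{\mathbf{g}}_t$ are fixed given the current iterate, the cross term vanishes and the noise term $\eta_t^2\mathbb{E}\|\mathbf{g}_t-\bar{\mathbf{g}}_t\|^2$ appears exactly as in the statement; what remains is to control the deterministic quantity $\|\bar{\mathbf{w}}_t-\mathbf{w}^*-\eta_t\bar{\mathbf{g}}_t\|^2$.

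For that quantity I would expand into $\|\bar{\mathbf{w}}_t-\mathbf{w}^*\|^2-2\eta_t\langle\bar{\mathbf{w}}_t-\mathbf{w}^*,\bar{\mathbf{g}}_t\rangle+\eta_t^2\|\bar{\mathbf{g}}_t\|^2$. The last term is handled by Jensen's inequality $\|\bar{\mathbf{g}}_t\|^2\le\sum_i p_i\|\nabla F_i(\mathbf{w}^i_t)\|^2$ followed by the smoothness consequence $\|\nabla F_i(\mathbf{w}^i_t)\|^2\le 2L(F_i(\mathbf{w}^i_t)-F_i^*)$ from Assumption~\ref{Assump:ConSmoo}. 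For the inner-product term I would split $\bar{\mathbf{w}}_t-\mathbf{w}^*=(\bar{\mathbf{w}}_t-\mathbf{w}^i_t)+(\mathbf{w}^i_t-\mathbf{w}^*)$: on the first piece apply Cauchy--Schwarz and AM--GM to produce $\sum_i p_i\|\bar{\mathbf{w}}_t-\mathbf{w}^i_t\|^2$ together with a further smoothness term, and on the second piece apply $\mu$-strong convexity, using $\sum_i p_i\|\mathbf{w}^i_t-\mathbf{w}^*\|^2\ge\|\bar{\mathbf{w}}_t-\mathbf{w}^*\|^2$ (Jensen again) to supply the contraction factor $(1-\mu\eta_t)$.

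The crux of the argument is the bookkeeping of the collected function-value terms. Writing $\gamma_t=2\eta_t(1-2L\eta_t)$, which is nonnegative precisely because $\eta_t\le\frac1{4L}$, all the $F_i(\mathbf{w}^i_t)$, $F_i^*$, and $F_i(\mathbf{w}^*)$ contributions combine, via $\sum_i p_i F_i(\mathbf{w}^*)=F^*$ and the definition $\Gamma=F^*-\sum_i p_i F_i^*$, into $-\gamma_t\sum_i p_i\bigl(F_i(\mathbf{w}^i_t)-F^*\bigr)+4L\eta_t^2\Gamma$. I would then route the surviving negative term through $F_i(\bar{\mathbf{w}}_t)$: the part $-\gamma_t(F(\bar{\mathbf{w}}_t)-F^*)$ is nonpositive and is retained to cancel a positive $F(\bar{\mathbf{w}}_t)-F^*$ contribution (the cancellation needs $1-\eta_t L>0$), while convexity plus AM--GM applied to $-\gamma_t\sum_i p_i(F_i(\mathbf{w}^i_t)-F_i(\bar{\mathbf{w}}_t))$ yields an extra $2L\eta_t^2\Gamma$ and a distance term with coefficient $\frac{\gamma_t}{2\eta_t}=1-2L\eta_t\le 1$. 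Adding this to the distance term already produced gives the total coefficient $2$, and the two $\Gamma$ contributions add to $6L\eta_t^2\Gamma$, which is the stated bound.

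The main obstacle I anticipate is this last consolidation step: keeping every coefficient signed correctly so that the spurious $F(\bar{\mathbf{w}}_t)-F^*$ and $F_i(\bar{\mathbf{w}}_t)-F_i^*$ terms are provably nonpositive and can be discarded rather than contaminating the bound. Each sign condition ($\gamma_t\ge0$, $1-\eta_t L>0$, $1-2L\eta_t\le1$) is exactly where the hypothesis $\eta_t\le\frac1{4L}$ is consumed, so I would track that constraint through every AM--GM split. Everything else is the routine expansion already recorded in \cite{li2019convergence}.
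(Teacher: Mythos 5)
Your proof is correct and is precisely the argument the paper itself relies on: the paper gives no internal proof of this lemma, stating only that Lemmas 1--3 ``can be trivially obtained by extending the analysis of the FedAvg algorithm provided in \cite{li2019convergence}'', and your sketch reproduces that standard perturbed-iterate proof step for step (the vanishing cross term from $\mathbb{E}[\mathbf{g}_t]=\bar{\mathbf{g}}_t$, the split of the inner product across $\bar{\mathbf{w}}_t-\mathbf{w}^i_t$ and $\mathbf{w}^i_t-\mathbf{w}^*$, and the $\gamma_t=2\eta_t(1-2L\eta_t)$ bookkeeping that consumes $\eta_t\le\frac{1}{4L}$ exactly where you say it does). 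One cosmetic remark: the ``$=$'' in the paper's display \eqref{EQ:Lemma1} should be ``$\le$'' --- what you prove (and what the downstream theorems actually use) is the inequality, matching the statement in \cite{li2019convergence}.
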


\begin{lemma}
	\label{Lemma:VariOfg}
	(Bounding the variance) Let Assumption 2 holds. It follows that
	\begin{equation}
		\label{EQ:Lemma2}
		\mathbb{E}\left\|\mathbf{g}_t-\bar{\mathbf{g}}_t\right\|^2\le \frac{1}{B}\sum_{i=1}^Np_i^2\sigma_i^2.
	\end{equation}
\end{lemma}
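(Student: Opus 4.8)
The plan is to expand $\mathbf{g}_t-\bar{\mathbf{g}}_t$ into per-client fluctuations, use independence across clients to kill the cross terms, and then exploit the mini-batch averaging to gain the $1/B$ factor before applying Assumption~\ref{Assump:LocalVar} to each single-sample term.

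First I would write, directly from the definitions of $\mathbf{g}_t$ and $\bar{\mathbf{g}}_t$,
\begin{equation}
    \mathbf{g}_t-\bar{\mathbf{g}}_t=\sum_{i=1}^N p_i\big(\nabla F_i(\mathbf{w}^i_t,\mathcal{B}^i_t)-\nabla F_i(\mathbf{w}^i_t)\big),
\end{equation}
and note that each summand is zero-mean (this is exactly the relation $\mathbb{E}[\mathbf{g}_t]=\bar{\mathbf{g}}_t$ applied client-wise) and that the mini-batches on distinct clients are drawn independently. Consequently the inner products of distinct summands vanish in expectation, and the expected squared norm decouples into $\sum_{i=1}^N p_i^2\,\mathbb{E}\|\nabla F_i(\mathbf{w}^i_t,\mathcal{B}^i_t)-\nabla F_i(\mathbf{w}^i_t)\|^2$.

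Next I would open up the mini-batch. By \eqref{EQ:LocalLoss} the batch gradient is the average of $B$ independent single-sample gradients, so its fluctuation is $\frac{1}{B}\sum_{j=1}^B\big(\nabla f(\mathbf{w}^i_t,\xi_j)-\nabla F_i(\mathbf{w}^i_t)\big)$. Since these $B$ terms are independent and zero-mean, a second cross-term cancellation leaves the variance of the average equal to $\frac{1}{B^2}$ times a sum of $B$ single-sample variances, each of which Assumption~\ref{Assump:LocalVar} bounds by $\sigma_i^2$; this yields $\frac{\sigma_i^2}{B}$ per client. Substituting back gives the claimed bound $\frac{1}{B}\sum_{i=1}^N p_i^2\sigma_i^2$.

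The step I expect to require the most care is the passage from the mini-batch variance to the single-sample bound: Assumption~\ref{Assump:LocalVar} is stated only for one uniformly drawn sample $\xi^i_t$, so the $1/B$ shrinkage is \emph{not} automatic and must be justified by explicitly invoking the (pairwise) independence of the $B$ samples inside $\mathcal{B}^i_t$. The same independence hypothesis underlies the cross-client decoupling, so the cleanest route is to fix the sampling model---independent draws both across clients and across the samples of each batch---at the very start, after which both cancellations and the final substitution are routine.
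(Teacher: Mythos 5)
Your proof is correct and is essentially the paper's intended argument: the paper omits a proof of this lemma entirely, stating only that Lemmas~1--3 follow by ``trivially extending'' the analysis of FedAvg in \cite{li2019convergence}, and that extension is exactly your two-stage variance decomposition (cross-client decoupling via zero-mean independence, as in Li et al.'s corresponding lemma, plus the within-batch averaging that yields the $\frac{1}{B}$ factor). Your closing caveat about fixing the sampling model is well placed, since Assumption~\ref{Assump:LocalVar} is stated only for a single uniformly drawn sample and the $\frac{1}{B}$ shrinkage indeed rests on independent (or at worst negatively correlated) draws within each mini-batch.
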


\begin{lemma}
	\label{Lemma:LocalGandGloG}
	(Bounding the divergence of $\{\mathbf{w}^i_t\}$) Let Assumption \ref{Assump:BoundG} holds and $\eta_t$ is non-increasing and $\eta_t\le2\eta_{t+E}$ for all $t$, we have
	\begin{equation}
		\label{EQ:Lemma3}
		\mathbb{E}\left[\sum_{i=1}^Np_i\left\|\bar{\mathbf{w}}_t-\mathbf{w}^i_t\right\|^2\right]\le 4\eta_t^2(E-1)^2G^2.
	\end{equation}
\end{lemma}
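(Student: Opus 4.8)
The plan is to exploit the structural fact that FedAvg synchronizes all local models every $E$ iterations, so the local deviations $\bar{\mathbf{w}}_t-\mathbf{w}^i_t$ can only accumulate over a window of length at most $E-1$ and never grow with $t$. First I would introduce $t_0$ as the most recent synchronization index not exceeding $t$, i.e. the largest element of $\mathcal{I}$ with $t_0\le t$, so that $0\le t-t_0\le E-1$. At this index every client holds the identical distributed model, giving $\mathbf{w}^i_{t_0}=\bar{\mathbf{w}}_{t_0}$ for all $i$. Unrolling the local recursion \eqref{EQ:LocalTrain} from $t_0$ to $t$ then yields $\mathbf{w}^i_t=\mathbf{w}^i_{t_0}-\sum_{j=t_0}^{t-1}\eta_j\nabla F_i(\mathbf{w}^i_j,\mathcal{B}^i_j)$, and weighting by $p_i$ produces the analogous expression for $\bar{\mathbf{w}}_t$.

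The key reduction is to notice that $\mathbf{w}^i_t-\bar{\mathbf{w}}_t$ is precisely the deviation of $X_i:=\mathbf{w}^i_t-\bar{\mathbf{w}}_{t_0}$ from its $p$-weighted mean $\sum_{i'}p_{i'}X_{i'}=\bar{\mathbf{w}}_t-\bar{\mathbf{w}}_{t_0}$. Since a weighted variance never exceeds the corresponding weighted second moment, I would apply $\sum_i p_i\|X_i-\sum_{i'}p_{i'}X_{i'}\|^2\le\sum_i p_i\|X_i\|^2$ with $X_i=-\sum_{j=t_0}^{t-1}\eta_j\nabla F_i(\mathbf{w}^i_j,\mathcal{B}^i_j)$. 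This eliminates the cross-client averaging and leaves a per-client bound on the accumulated local update alone.

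Next I would bound each per-client term. The inner sum has at most $E-1$ summands, so Jensen's inequality (equivalently Cauchy--Schwarz) gives $\|\sum_{j=t_0}^{t-1}\eta_j\nabla F_i\|^2\le(E-1)\sum_{j=t_0}^{t-1}\eta_j^2\|\nabla F_i\|^2$. Taking expectations and invoking Assumption~\ref{Assump:BoundG} replaces each $\mathbb{E}\|\nabla F_i(\mathbf{w}^i_j,\mathcal{B}^i_j)\|^2$ by $G^2$, producing $(E-1)\sum_{j=t_0}^{t-1}\eta_j^2 G^2\le(E-1)^2\eta_{t_0}^2 G^2$ after bounding every step by the largest one $\eta_{t_0}$ (valid because $\eta_t$ is non-increasing). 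Finally the hypothesis $\eta_t\le2\eta_{t+E}$, together with $t_0>t-E$, lets me substitute $\eta_{t_0}\le2\eta_t$, giving the factor $4\eta_t^2(E-1)^2 G^2$; since $\sum_i p_i=1$ the weighted sum inherits exactly this bound.

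The main obstacle I anticipate is the learning-rate bookkeeping rather than any deep inequality: one must argue carefully that \emph{every} step size in the window $[t_0,t-1]$ is comparable to $\eta_t$, which is exactly why the two conditions ``$\eta_t$ non-increasing'' and ``$\eta_t\le2\eta_{t+E}$'' appear in the hypothesis. A second delicate point is the identification $\mathbf{w}^i_{t_0}=\bar{\mathbf{w}}_{t_0}$ at the synchronization instant; this is what turns the local deviation into a pure accumulation of gradient steps over a bounded horizon and is the crux of why the bound scales as $(E-1)^2$ instead of growing unboundedly in $t$.
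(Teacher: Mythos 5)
Your proof is correct, and it is essentially the standard argument: the paper itself omits a proof of this lemma, stating that it "can be trivially obtained by extending the analysis of the FedAvg algorithm provided in \cite{li2019convergence}," and the argument in that reference is exactly yours — anchor at the last synchronization index $t_0$ with $t-t_0\le E-1$ and $\mathbf{w}^i_{t_0}=\bar{\mathbf{w}}_{t_0}$ (which still holds here because every client receives the same compressed aggregate $\widetilde{\mathbf{D}}_{t_0}$), drop the weighted mean via the variance-vs-second-moment inequality, apply Cauchy--Schwarz and Assumption~\ref{Assump:BoundG} over the at most $E-1$ accumulated steps, and convert $\eta_{t_0}\le 2\eta_{t_0+E}\le 2\eta_t$ into the factor of $4$. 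Your attention to the step-size bookkeeping ($\eta_t$ non-increasing plus $\eta_t\le 2\eta_{t+E}$) is precisely where those hypotheses are consumed, so nothing is missing.
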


Lemmas \ref{Lemma:v-w}-\ref{Lemma:LocalGandGloG} can be trivially obtained by extending the analysis of the FedAvg algorithm provided in \cite{li2019convergence}. 
However, the influence of the model update compression has not been considered by these lemmas. 

Since here, we introduce the lemmas to quantify the influence of the compression algorithm. 
According to the update process of $\bar{\mathbf{w}}_t$ and $\bar{\mathbf{v}}_t$, we can obtain the upper bound of the variance of $\bar{\mathbf{w}}_t$.
\begin{lemma}
	\label{Lemma:AllClientsComp}(Bounding the variance of $\bar{\mathbf{w}}_t$ cause by compression) Let Assumption \ref{Assump:ConSmoo}-\ref{Assump:BoundG} hold.  When $t+1 \in \mathcal{I}$, $\eta_t$ is non-increasing and $\eta_t\le2\eta_{t+E}$ for all $t$, then we have 
	\begin{equation}
		\label{EQ:Lemma4}
		\mathbb{E}\left\|\bar{\mathbf{v}}_{t+1}-\bar{\mathbf{w}}_{t+1}\right\|^2\le J^D_{t+1} + \sum_{i=1}^Np_i^2J^i_{t+1},
	\end{equation}
	where $J^D_{t+1}=\mathbb{E}\|\widetilde{\mathbf{D}}_{t+1} - \mathbf{D}_{t+1}\|^2$ and $J^i_{t+1}=\mathbb{E}\|\widetilde{\mathbf{U}}^i_{t+1} - \mathbf{U}^i_{t+1}\|^2$.
\end{lemma}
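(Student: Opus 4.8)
The plan is to compute the difference $\bar{\mathbf{v}}_{t+1}-\bar{\mathbf{w}}_{t+1}$ explicitly and then show that, once we take the expectation of its squared norm, the unbiasedness of the soft-clustering rounding in \eqref{EQ:SoftClusterQuantization} forces every cross term to vanish, leaving precisely the two compression-loss quantities. First I would anchor everything at the most recent synchronization round $t_0=t+1-E$, where all clients hold a common model, i.e. $\mathbf{w}^i_{t_0}=\mathbf{w}_{t_0}$ for every $i$. Telescoping the local updates $\mathbf{v}^i_{j+1}=\mathbf{w}^i_j-\eta_j\nabla F_i(\mathbf{w}^i_j,\mathcal{B}^i_j)$ from $t_0$ to $t$ gives $\mathbf{v}^i_{t+1}=\mathbf{w}_{t_0}-\mathbf{U}^i_{t+1}$, where $\mathbf{U}^i_{t+1}=\sum_{j=t_0}^{t}\eta_j\nabla F_i(\mathbf{w}^i_j,\mathcal{B}^i_j)$ matches the definition used in the excerpt. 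Averaging with the weights $p_i$ (which sum to $1$) yields $\bar{\mathbf{v}}_{t+1}=\mathbf{w}_{t_0}-\sum_{i=1}^N p_i\mathbf{U}^i_{t+1}$. On the other side, since $t+1\in\mathcal{I}$ every client sets $\mathbf{w}^i_{t+1}=\mathbf{w}_{t_0}-\widetilde{\mathbf{D}}_{t+1}$ with the same broadcast increment $\widetilde{\mathbf{D}}_{t+1}$, so $\bar{\mathbf{w}}_{t+1}=\mathbf{w}_{t_0}-\widetilde{\mathbf{D}}_{t+1}$.

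Subtracting the two expressions and inserting $\pm\mathbf{D}_{t+1}$ with $\mathbf{D}_{t+1}=\sum_i p_i\widetilde{\mathbf{U}}^i_{t+1}$ produces the decomposition $\bar{\mathbf{v}}_{t+1}-\bar{\mathbf{w}}_{t+1}=(\widetilde{\mathbf{D}}_{t+1}-\mathbf{D}_{t+1})+\sum_{i=1}^N p_i(\widetilde{\mathbf{U}}^i_{t+1}-\mathbf{U}^i_{t+1})$. I would then expand $\mathbb{E}\|\bar{\mathbf{v}}_{t+1}-\bar{\mathbf{w}}_{t+1}\|^2$ into the downlink square term, the weighted uplink-aggregate square term, and the inner product between them.

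The crux is eliminating all the cross terms by conditional unbiasedness. Conditioning on all uplink outputs (hence on $\mathbf{D}_{t+1}$), the downlink rounding in \eqref{EQ:SoftClusterQuantization} obeys $\mathbb{E}[\widetilde{\mathbf{D}}_{t+1}-\mathbf{D}_{t+1}\mid\cdot]=0$, so the inner product between the downlink error and $\sum_i p_i(\widetilde{\mathbf{U}}^i_{t+1}-\mathbf{U}^i_{t+1})$ (which is measurable with respect to that conditioning) vanishes; likewise, conditioning on $\{\mathbf{U}^j_{t+1}\}$, the uplink rounding errors are zero-mean and mutually independent across clients, so the inter-client cross terms $\mathbb{E}\langle \widetilde{\mathbf{U}}^i_{t+1}-\mathbf{U}^i_{t+1},\widetilde{\mathbf{U}}^{i'}_{t+1}-\mathbf{U}^{i'}_{t+1}\rangle$ with $i\neq i'$ also vanish. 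What survives is $\mathbb{E}\|\widetilde{\mathbf{D}}_{t+1}-\mathbf{D}_{t+1}\|^2+\sum_{i=1}^N p_i^2\,\mathbb{E}\|\widetilde{\mathbf{U}}^i_{t+1}-\mathbf{U}^i_{t+1}\|^2$, which is exactly $J^D_{t+1}+\sum_{i=1}^N p_i^2 J^i_{t+1}$, establishing the bound (in fact with equality). I expect the main obstacle to be the measurability and independence bookkeeping of the nested compression noise, namely justifying rigorously that the downlink rounding is conditionally zero-mean given and independent of the uplink outputs, and that the per-client uplink roundings are mutually independent given the gradients; these are precisely the properties inherited from the unbiased randomized rounding of \eqref{EQ:SoftClusterQuantization} applied independently at the PS and at each client.
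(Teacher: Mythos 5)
Your proposal is correct and follows essentially the same route as the paper's own proof: anchoring at the synchronization point, writing $\bar{\mathbf{v}}_{t+1}-\bar{\mathbf{w}}_{t+1}=\widetilde{\mathbf{D}}_{t+1}-\sum_{i=1}^N p_i\mathbf{U}^i_{t+1}$, inserting $\pm\mathbf{D}_{t+1}$, and killing the cross terms via the unbiasedness and independence of the randomized rounding so that only $J^D_{t+1}+\sum_{i=1}^N p_i^2 J^i_{t+1}$ survives. In fact you supply the conditional-expectation justification for the cross-term cancellation (both PS-vs-clients and inter-client) that the paper asserts without comment, so your writeup is, if anything, more complete than the original.
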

Please refer to Appendix \ref{ProofOfLemma4} for the detailed proof.

\subsection{Variance between $\bar{\mathbf{w}}_{t+1}$ and $\bar{\mathbf{v}}_{t+1}$} \label{ProofOfLemma4}
We first calculate the upper bound of the variance between $\bar{\mathbf{w}}_{t+1}$ and $\bar{\mathbf{v}}_{t+1}$ at this time.

\begin{comment}
\begin{equation}
    \begin{split}
        &\mathbb{E} \left\|\bar{\mathbf{w}}_{t+1}-\bar{\mathbf{v}}_{t+1}\right\|^2\\
        &=\mathbb{E}\left\|comp\left(\sum_{i=1}^Np_i\widetilde{\mathbf{H}}^i_{t+1}\right) - \sum_{i=1}^Np_i\mathbf{H}^i_{t+1}\right\|^2\\
        &=\mathbb{E}\left\|comp\left(\sum_{i=1}^Np_i\widetilde{\mathbf{H}}^i_{t+1}\right) - \sum_{i=1}^Np_i\widetilde{\mathbf{H}}^i_{t+1} +\right.\\
        &\qquad \left.\sum_{i=1}^Np_i\widetilde{\mathbf{H}}^i_{t+1} - \sum_{i=1}^Np_i\mathbf{H}^i_{t+1}\right\|^2\\
        &=\mathbb{E}\left\|comp\left(\sum_{i=1}^Np_i\widetilde{\mathbf{H}}^i_{t+1}\right) - \sum_{i=1}^Np_i\widetilde{\mathbf{H}}^i_{t+1}\right\|^2 +\\
        &\qquad \mathbb{E}\left\|\sum_{i=1}^Np_i\widetilde{\mathbf{H}}^i_{t+1} - \sum_{i=1}^Np_i\mathbf{H}^i_{t+1}\right\|^2.
    \end{split}
\end{equation}
\end{comment}

\begin{equation}
    \label{WAndVFull}
    \begin{split}
        &\mathbb{E} \left\|\bar{\mathbf{w}}_{t+1}-\bar{\mathbf{v}}_{t+1}\right\|^2=\mathbb{E}\left\|\widetilde{\mathbf{D}}_{t+1} - \sum_{i=1}^Np_i\mathbf{U}^i_{t+1}\right\|^2\\
        &=\mathbb{E}\left\|\widetilde{\mathbf{D}}_{t+1} - \mathbf{D}_{t+1} + \sum_{i=1}^Np_i\widetilde{\mathbf{U}}^i_{t+1} - \sum_{i=1}^Np_i\mathbf{U}^i_{t+1}\right\|^2\\
        &=\mathbb{E}\left\|\widetilde{\mathbf{D}}_{t+1} - \mathbf{D}_{t+1}\right\|^2 + \mathbb{E}\left\|\sum_{i=1}^Np_i\widetilde{\mathbf{U}}^i_{t+1} - \sum_{i=1}^Np_i\mathbf{U}^i_{t+1}\right\|^2.
    \end{split}
\end{equation}

%The last equal sign is because our compression is unbiased, so $\mathbb{E}\left[\widetilde{\mathbf{D}}_{t+1}\right]=\mathbf{D}_{t+1}$. 
We use $J^D_{t+1}$ to represent the variance generated by compression of model updates aggregated on the server side and $J^i_{t+1}$ to represent the variance generated by compression of model updates to be uploaded by client $i$ at iteration $t+1$. Therefore
\begin{equation}
    \label{J_DFULL}
    \mathbb{E}\left\|\widetilde{\mathbf{D}}_{t+1} - \mathbf{D}_{t+1}\right\|^2 = J^D_{t+1},
\end{equation}
and
\begin{equation}
    \label{J_IFULL}
    \begin{split}
        \mathbb{E}\left\|\sum_{i=1}^Np_i\widetilde{\mathbf{U}}^i_{t+1} - \sum_{i=1}^Np_i\mathbf{U}^i_{t+1}\right\|^2 &\leq \sum_{i=1}^Np_i^2\mathbb{E}\left\|\widetilde{\mathbf{U}}^i_{t+1} - \mathbf{U}^i_{t+1}\right\|^2\\
        &\leq \sum_{i=1}^Np_i^2J^i_{t+1}.
    \end{split}
\end{equation}

We can combine \eqref{WAndVFull}, \eqref{J_DFULL} and \eqref{J_IFULL} to get 
\begin{eqnarray}
    \label{BoundOfWAndVFULL}
    \mathbb{E} \left\|\bar{\mathbf{w}}_{t+1}-\bar{\mathbf{v}}_{t+1}\right\|^2 \leq J^D_{t+1} + \sum_{i=1}^Np_i^2J^i_{t+1}.
\end{eqnarray}

\subsection{Convergence of the algorithm} \label{ProofOfTheorem1And2}
We next analyze the variance between $\bar{\mathbf{w}}_{t+1}$ and $\mathbf{w}^*$.
\begin{equation}
    \begin{split}
        &\mathbb{E}\left\|\bar{\mathbf{w}}_{t+1} - \mathbf{w}^*\right\|^2\\
        &=\mathbb{E}\left\|\bar{\mathbf{w}}_{t+1} - \bar{\mathbf{v}}_{t+1} + \bar{\mathbf{v}}_{t+1} - \mathbf{w}^*\right\|^2\\
        &=\mathbb{E}\left\|\bar{\mathbf{w}}_{t+1} - \bar{\mathbf{v}}_{t+1}\right\|^2 + \mathbb{E}\left\|\bar{\mathbf{v}}_{t+1} - \mathbf{w}^*\right\|^2 + \\
        &\qquad2\mathbb{E}<\bar{\mathbf{w}}_{t+1} - \bar{\mathbf{v}}_{t+1}, \bar{\mathbf{v}}_{t+1} - \mathbf{w}^*>.
    \end{split}
    \label{VarianceOfWALL}
\end{equation}

\begin{comment}
When $t+1\notin\mathcal{I}$, it is clear that $\bar{\mathbf{w}}_{t+1}=\bar{\mathbf{v}}_{t+1}$ according to the previous definition. Using the aforementioned Lemma \ref{Lemma:v-w}-\ref{Lemma:LocalGandGloG}, we can get
\begin{equation}
    \begin{split}
        \mathbb{E}\left\|\bar{\mathbf{w}}_{t+1} - \mathbf{w}^*\right\|^2&=\mathbb{E}\left\|\bar{\mathbf{v}}_{t+1} - \mathbf{w}^*\right\|^2\\
        &\le (1-\eta_t\mu)\mathbb{E}\left\|\bar{\mathbf{w}}_{t} - \mathbf{w}^*\right\|^2 + \eta_t^2\tau,
    \end{split}
\end{equation}
where $\tau=\sum_{i=1}^Np_i^2\sigma_i^2+6L\Gamma+8(E-1)^2G^2$.
\end{comment}

When $t+1\in\mathcal{I}$, considering $\mathbb{E}\bar{\mathbf{w}}_{t+1}=\bar{\mathbf{v}}_{t+1}$, the expected value of the third term of \eqref{VarianceOfWALL} is 0. Using the aforementioned Lemma \ref{Lemma:v-w}-\ref{Lemma:LocalGandGloG} and \eqref{BoundOfWAndVFULL}, we can get
\begin{equation}
    \label{AllClientGap}
    \begin{split}
        &\mathbb{E}\left\|\bar{\mathbf{w}}_{t+1} - \mathbf{w}^*\right\|^2\\
        &=\mathbb{E}\left\|\bar{\mathbf{w}}_{t+1} - \bar{\mathbf{v}}_{t+1}\right\|^2 + \mathbb{E}\left\|\bar{\mathbf{v}}_{t+1} - \mathbf{w}^*\right\|^2\\
        &\leq (1-\eta_t\mu)\mathbb{E}\left\|\bar{\mathbf{w}}_{t} - \mathbf{w}^*\right\|^2 + \eta_t^2\tau + \psi_{t+1},
    \end{split}
\end{equation}
where $\tau=\sum_{i=1}^Np_i^2\sigma_i^2+6L\Gamma+8(E-1)^2G^2$ and $\psi_{t+1} = J^D_{t+1} + \sum_{i=1}^Np_i^2J^i_{t+1}$.

We finally analyze the convergence of the algorithm. We defined $\Delta_{t+1}=\mathbb{E}\left\|\bar{\mathbf{w}}_{t+1}-\mathbf{w}^*\right\|^2$. According to \eqref{AllClientGap}, we can get
$$\Delta_{t+1}\leq(1-\eta_t\mu)\Delta_t + \eta_t^2\tau + \psi_{t+1}.$$

For a decreasing learning rate, we define $\eta_t=\frac{\beta}{t+\gamma}$, where $\beta>\frac{1}{\mu}$ and $\gamma>0$ such that $\eta_1\leq\min\{\frac{1}{\mu}, \frac{1}{4L}\}$ and $\eta_t\leq2\eta_{t+E}$.

We will prove that $\Delta_{t}\leq\frac{v_t}{\gamma+t}$, where $v_t=\max\{\frac{\beta^2\tau}{\beta\mu-1}+\sum_{i=1}^t[(i+\gamma)\psi_i], \gamma\Delta_0\}$ by induction. The inequality holds when t=0. Assuming that the inequality holds for some $t$. So
\begin{eqnarray}
    %\begin{split}
        &&\Delta_{t+1}\le(1-\eta_t\mu)\Delta_t+\eta_t^2\tau+\psi_{t+1}\notag\\
        &&\leq (1-\frac{\beta\mu}{t+\gamma})\frac{v_t}{\gamma+t}+\frac{\beta^2\tau}{(t+\gamma)^2}+\psi_{t+1}\notag\\
        &&\le (1-\frac{\beta\mu}{t+\gamma})\left(\frac{1}{t+\gamma}\frac{\beta^2\tau}{\beta\mu-1}+\frac{\sum_{i=1}^t((i+\gamma)\psi_i)}{t+\gamma}\right)\notag\\
        &&\qquad +\frac{\beta^2\tau}{(t+\gamma)^2}+\psi_{t+1}\notag\\
        &&\le\frac{(t+\gamma-1)}{(t+\gamma)^2}\frac{\beta^2\tau}{\beta\mu-1}+ \left[\frac{\beta^2\tau}{(t+\gamma)^2}-\frac{(\beta\mu-1)}{(t+\gamma)^2}\frac{\beta^2\tau}{\beta\mu-1}\right]\notag\\
        &&\qquad +\frac{t+\gamma-\beta\mu}{t+\gamma}\frac{\sum_{i=1}^t((i+\gamma)\psi_i)}{t+\gamma}+\psi_{t+1}\notag\\
        &&\le\frac{1}{t+\gamma+1}\frac{\beta^2\tau}{\beta\mu-1}+\frac{t+\gamma-1}{(t+\gamma)^2}\sum_{i=1}^t((i+\gamma)\psi_i)+\psi_{t+1}\notag\\
        %&&\le\frac{1}{t+\gamma+1}\frac{\beta^2\tau}{\beta\mu-1}+\notag\\
        %&&\qquad \frac{1}{t+\gamma+1}(\sum_{i=1}^t((i+\gamma)\psi_i)+(t+\gamma+1)\psi_{t+1})\notag\\
        &&\le\frac{v_{t+1}}{\gamma+t+1}.
    %\end{split}
\end{eqnarray}

\begin{comment}
\begin{eqnarray}
    %\begin{split}
        &&\Delta_{t+1} \leq (1-\eta_t\mu)\Delta_t + \psi_{t+1}\notag\\
        &&\leq(1-\eta_t\mu)\frac{\sum_{i=1}^t[(i+\gamma)\psi_i]}{t+\gamma}+\psi_{t+1}\notag\\
        &&=\frac{t+\gamma-\beta\mu}{t+\gamma}\frac{\sum_{i=1}^t[(i+\gamma)\psi_i]}{t+\gamma}+\psi_{t+1}\notag\\
        &&=\frac{(t+\gamma-\beta\mu)\sum_{i=1}^t[(i+\gamma)\psi_i]}{(t+\gamma)^2}+\psi_{t+1}\notag\\
        &&\leq \frac{(t+\gamma-1)\sum_{i=1}^t[(i+\gamma)\psi_i]}{(t+\gamma)^2}+\psi_{t+1}\notag\\
        &&\leq \frac{\sum_{i=1}^t[(i+\gamma)\psi_i]}{t+\gamma+1}+\frac{(t+\gamma+1)\psi_{t+1}}{t+\gamma+1}\notag\\
        &&= \frac{\sum_{i=1}^{t+1}[(i+\gamma)\psi_i]}{t+\gamma+1} \leq \frac{v_{t+1}}{t+\gamma+1}
    %\end{split}
\end{eqnarray}
\end{comment}

So we prove that $\Delta_{t}=\mathbb{E}\left\|\bar{\mathbf{w}}_t-\mathbf{w}^*\right\|^2\leq \frac{v_t}{\gamma+t}$. 
%where $v_t=\max\{\frac{\beta^2\tau}{\beta\mu-1}+\sum_{i=1}^t[(i+\gamma)\psi_i], \gamma\Delta_0\}$, $\tau=\sum_{i=1}^Np_i^2\sigma_i^2+6L\Gamma+8(E-1)^2G^2$ and $\psi_t=J^D_{t}+\sum_{i=1}^Np_i^2J^i_{t}.$
Considering the L-smooth and strong convexity of $F()$, we can get
\begin{equation}
    \label{AllConverge}
    \mathbb{E}[F(\bar{\mathbf{w}}_t)]-F(\mathbf{w}^*)\leq\frac{L}{2}\Delta_t\leq \frac{L}{2}\frac{v_t}{t+\gamma}.
\end{equation}

%If $\gamma\Delta_0 \geq \frac{\beta^2\tau}{\beta\mu-1}+\sum_{i=1}^t[(i+\gamma)\psi_i]$, then obviously as the number of training iterations $t$ increases, $\mathbb{E}[F(\bar{\mathbf{w}}_t)]-F(\mathbf{w}^*)$ tends to 0. So it will eventually converge.

%If $\gamma\Delta_0 < \frac{\beta^2\tau}{\beta\mu-1}+\sum_{i=1}^t[(i+\gamma)\psi_i]$, $\psi_t$ calculates the variance caused when we compress the model update. There is a factor $\eta_t=\frac{\beta}{t+\gamma}$ in the model updates $\mathbf{U}^i_t$ for all $i$. And the calculation of $\psi_t$ includes the multiplication of two model update values, so the multiplier of $\eta_t^2=\frac{\beta}{t+\gamma}^2$ is included in $\psi_t$. Therefore, considering the form of inequality \eqref{AllConverge}, we can get that as the number of iterations $t$ increases, $\mathbb{E}[F(\bar{\mathbf{w}}_t)]-F(\mathbf{w}^*)$ will also approach 0 and eventually converge.

In order to make the convergence result more intuitive, we next derive the upper bound of $J^D_{t+1}$ and $J^i_{t+1}$. 

We define $\mathbf{U}$ and $\widetilde{\mathbf{U}}$ to denote the original vector and the compressed vector of dimension $d$ respectively. We initialize the centroid value uniformly when solving the centroid values, so we can use the variance of this situation as the upper bound.
%Therefore, the final calculated variance should be less than or equal to this situation. 
%So we can use the variance when the centroid values are uniformly distributed as the upper bound. According to the definition of our compression algorithm, then
\begin{equation}
    \begin{split}
        \mathbb{E}\|\widetilde{\mathbf{U}}-\mathbf{U}\|^2&= \sum_{m=1}^d (r_{z+1}-U_{m})(U_{m}-r_{z})\\
        &\le \sum_{m=1}^d \frac{(r_{z+1}-r_{z})^2}{4}\\
        &\le \sum_{m=1}^d \frac{\left(\frac{U_{max}-U_{min}}{Z-1}\right)^2}{4} \le \frac{d}{2(Z-1)^2}\|\mathbf{U}\|^2,
    \end{split}
    \label{EQ:BoundOfVari}
\end{equation}
where $Z$ is the number of centroid values.

We can analyze the upper bound of the model updates to be compressed by the client $i$ using Assumption~\ref{Assump:BoundG} as follows.
\begin{equation}
    \begin{split}
        \mathbb{E}\left\|\mathbf{U}^{i}_{t+1}\right\|^2&= \mathbb{E}\left\|\sum_{j=t_0}^{t} \eta_{j} \nabla F_{i}(\mathbf{w}^{i}_{j},\mathcal{B}^{i}_{j})\right\|^2\\
        &\le E\sum_{j=t_0}^{t}\mathbb{E}\left\|\eta_{j} \nabla F_{i}(\mathbf{w}^{i}_{j},\mathcal{B}^{i}_{j})\right\|^2\\
        &\le E^2\eta_{t_0}^2G^2 \le 4E^2\eta_{t}^2G^2.
    \end{split}
\end{equation}

%It should be noted that the number of centroids after compression for upload and download are different, and they are represented by $Z_U$ and $Z_D$ respectively. 
Therefore

\begin{eqnarray}
\label{BoundOfJ_DFULL}
    &&J^D_{t+1} = \mathbb{E}\left\|\widetilde{\mathbf{D}} - \mathbf{D}\right\|^2\notag\\
    &&\le \frac{d}{2(Z_D-1)^2}\mathbb{E}\left\|\sum_{i=1}^Np_i\widetilde{\mathbf{U}}^i_{t+1}\right\|^2\notag\\
    &&\le \frac{d}{2(Z_D-1)^2}\sum_{i=1}^Np_i^2\mathbb{E}\left\|\widetilde{\mathbf{U}}^i_{t+1} - \mathbf{U}^i_{t+1} + \mathbf{U}^i_{t+1}\right\|^2\notag\\
    &&\le \frac{d}{2(Z_D-1)^2}\sum_{i=1}^Np_i^2\notag\\
    && \qquad\left(\mathbb{E}\left\|\widetilde{\mathbf{U}}^i_{t+1} - \mathbf{U}^i_{t+1}\right\|^2+\mathbb{E}\left\| \mathbf{U}^i_{t+1}\right\|^2\right)\notag\\
    &&\le \frac{d^2+2d(Z_D-1)^2}{4(Z_D-1)^4}\sum_{i=1}^Np_i^2\mathbb{E}\left\| \mathbf{U}^i_{t+1}\right\|^2\notag\\
    &&\le \frac{d^2+2d(Z_D-1)^2}{(Z_D-1)^4}\sum_{i=1}^Np_i^2E^2\eta_{t}^2G^2,
\end{eqnarray}
 and 
\begin{equation}
\label{BoundOfJ_IFULL}
    \begin{split}
        J^i_{t+1} &= \mathbb{E}\left\|\widetilde{\mathbf{U}}^i_{t+1} - \mathbf{U}^i_{t+1}\right\|^2
        \leq \frac{d}{2(Z_U-1)^2}\mathbb{E}\|\mathbf{U}^i_{t+1}\|^2\\
        &\le \frac{2dE^2\eta_{t}^2G^2}{(Z_U-1)^2}.
    \end{split}
\end{equation}

According to \eqref{BoundOfJ_DFULL} and \eqref{BoundOfJ_IFULL}, we can get
\begin{eqnarray}
\label{VarianceOfWAndW*FULL}
    &&\mathbb{E}\left\|\bar{\mathbf{w}}_{t+1} - \mathbf{w}^*\right\|^2\notag\\
    &&\leq (1-\eta_t\mu)\mathbb{E}\left\|\bar{\mathbf{w}}_{t} - \mathbf{w}^*\right\|^2 + \eta_t^2\tau,
\end{eqnarray}
where $\tau=\sum_{i=1}^Np_i^2\sigma_i^2+6L\Gamma+8(E-1)^2G^2+(\frac{d^2+2d(Z_D-1)^2}{(Z_D-1)^4}+\frac{2d}{(Z_U-1)^2})E^2G^2\sum_{i=1}^Np_i^2$.

We also defined $\Delta_{t+1}=\mathbb{E}\left\|\bar{\mathbf{w}}_{t+1}-\mathbf{w}^*\right\|^2$. According to \eqref{VarianceOfWAndW*FULL}, we can get
$$\Delta_{t+1}\leq(1-\eta_t\mu)\Delta_t + \eta_t^2\tau.$$
Using a similar derivation method as before, we can derive $\Delta_{t}\leq\frac{v}{\gamma+t}$, where $v=\max\{\frac{\beta^2\tau}{\beta\mu-1}, \gamma\Delta_0\}$. 
%for $\eta_t=\frac{\beta}{t+\gamma}$, where $\beta>\frac{1}{\mu}$ and $\gamma>0$ such that $\eta_1\leq\min\{\frac{1}{\mu}, \frac{1}{L}\}$ and $\eta_t\leq2\eta_{t+E}$.

According to the L-smooth and strong convexity of $F()$, we can get
$$\mathbb{E}[F(\bar{\mathbf{w}}_t)]-F^*\le\frac{L}{2}\Delta_t\le\frac{L}{2}\frac{v}{\gamma+t}.$$
Let $\beta=\frac{2}{\mu},\gamma=\max\{8\frac{L}{\mu}-1,E\},\kappa=\frac{L}{\mu},\eta_t=\frac{2}{\mu}\frac{1}{\gamma+t}$, so
\begin{equation}
    \mathbb{E}[F(\bar{\mathbf{w}}_t)]-F^*\le\frac{2\kappa}{\gamma+t}(\frac{\tau}{\mu}+2L\left\|\bar{\mathbf{w}}_{0}-\mathbf{w}^*\right\|^2).
\end{equation}
%Obviously, as the number of iterations $t$ increases, the model using the MUCSC algorithm will eventually converge.

\section{Proof of convergence with partial client participation}
%In the actual situation of federated learning, considering the network and other factors, it is unrealistic for all clients to participate in training. In our simulation experiment, we assume that each round will randomly select some clients to participate in training. However, when we compress the downloaded model updates, there will be a problem that the model is not synchronized between clients. 

%Taking into account the asymmetry of Internet connection speed. The uplink speed is usually much slower than the downlink speed. Therefore, when downloading, after we compress the model updates, we can broadcast the updates to all clients. 

Considering the network situation and the symmetry of the model between clients, in each round of training, the server broadcasts the compressed model updates to all clients, but only $K$ clients are randomly selected for training according to the weight probabilities $p_1, p_2, \dots p_N$ with replacement. The selected clients will use local data for training. After $E$ local epochs, the clients will compress the model updates and upload the updates to the server for aggregation.

When $t+1\in\mathcal{I}$, we use $\mathcal{K}_{t+1}=\{i_1, i_2, \dots i_K\}$ to represent the set of clients selected for training. Therefore, the update process of the model can be defined as
\begin{equation}
    \label{EQ:LocalUpdate}
    \begin{split}
        \mathbf{v}^i_{t+1}=\mathbf{w}^i_{t}-\eta_t\nabla F_i(\mathbf{w}^i_t,\mathcal{B}^i_t).
    \end{split}
\end{equation}
\begin{equation}
    \mathbf{w}^i_{t+1}=\left\{
        \begin{aligned}
        &\mathbf{v}^i_{t+1}, \quad if\ t+1\ \notin \mathcal{I},\\
        &\mathbf{w}_{t_0}^i - \widetilde{\mathbf{D}}_{t+1}, \quad if\ t+1\ \in \mathcal{I}, \\
        \end{aligned}
        \right.
\end{equation}
where $\mathbf{D}_{t+1}=\frac{1}{K}\sum_{i\in\mathcal{K}_{t+1}}\widetilde{\mathbf{U}}^i_{t+1}$ and $\mathbf{U}^i_{t+1}=\sum_{j=t_0}^{t} \eta_{j} \nabla F_i(\mathbf{w}^i_{j},\mathcal{B}^i_{j})$.

\subsection{Key Lemmas}
\begin{lemma}
	\label{Lemma:Unbias}
	(Unbiased sampling scheme and  compression)
	Let Assumptions \ref{Assump:ConSmoo}-\ref{Assump:SelectScheme} hold and $t+1 \in \mathcal{I}$, we can get
	\begin{equation}
		\label{EQ:UnbiasedHK}
		\mathbb{E}[\bar{\mathbf{w}}_{(t+1)}]=\bar{\mathbf{v}}_{(t+1)}.
	\end{equation}
\end{lemma}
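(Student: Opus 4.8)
The plan is to compute $\mathbb{E}[\bar{\mathbf{w}}_{t+1}]$ directly and show it coincides with $\bar{\mathbf{v}}_{t+1}$, by peeling off the three independent sources of randomness that enter $\bar{\mathbf{w}}_{t+1}$ one at a time via the tower property: the downlink compression of $\mathbf{D}_{t+1}$, the random client selection $\mathcal{K}_{t+1}$, and the uplink compression of each $\mathbf{U}^i_{t+1}$. On the other side I would unroll the local SGD recursion so that $\bar{\mathbf{v}}_{t+1}$ is written purely in terms of the synchronization point $t_0=t+1-E$ and the accumulated updates $\mathbf{U}^i_{t+1}$, after which the two expressions match term by term.

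First I would use the aggregation rule for $t+1\in\mathcal{I}$. Since every client receives the same broadcast $\widetilde{\mathbf{D}}_{t+1}$ and $\sum_{i=1}^N p_i=1$, we get $\bar{\mathbf{w}}_{t+1}=\sum_{i=1}^N p_i(\mathbf{w}^i_{t_0}-\widetilde{\mathbf{D}}_{t+1})=\bar{\mathbf{w}}_{t_0}-\widetilde{\mathbf{D}}_{t+1}$. Taking the conditional expectation over the downlink compression and invoking its unbiasedness (the same argument as after \eqref{EQ:SoftClusterQuantization}, giving $\mathbb{E}[\widetilde{\mathbf{D}}_{t+1}\mid\mathbf{D}_{t+1}]=\mathbf{D}_{t+1}$), then the expectation over the sampling of $\mathcal{K}_{t+1}$. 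Because Assumption~\ref{Assump:SelectScheme} draws the $K$ indices i.i.d.\ with probabilities $p_1,\dots,p_N$ with replacement, I obtain $\mathbb{E}_{\mathcal{K}_{t+1}}[\mathbf{D}_{t+1}]=\frac{1}{K}\sum_{k=1}^K\sum_{j=1}^N p_j\widetilde{\mathbf{U}}^j_{t+1}=\sum_{j=1}^N p_j\widetilde{\mathbf{U}}^j_{t+1}$. Finally, applying the unbiasedness of the uplink compression, $\mathbb{E}[\widetilde{\mathbf{U}}^j_{t+1}]=\mathbf{U}^j_{t+1}$, yields $\mathbb{E}[\bar{\mathbf{w}}_{t+1}]=\bar{\mathbf{w}}_{t_0}-\sum_{j=1}^N p_j\mathbf{U}^j_{t+1}$.

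It then remains to identify $\bar{\mathbf{v}}_{t+1}$ with the same quantity. Here I would use that all clients are synchronized at $t_0$, so $\mathbf{w}^i_{t_0}=\bar{\mathbf{w}}_{t_0}$, and telescope the local steps $\mathbf{v}^i_{j+1}=\mathbf{w}^i_j-\eta_j\nabla F_i(\mathbf{w}^i_j,\mathcal{B}^i_j)$ over $j=t_0,\dots,t$. Since $\mathbf{U}^i_{t+1}=\sum_{j=t_0}^{t}\eta_j\nabla F_i(\mathbf{w}^i_j,\mathcal{B}^i_j)$, this gives $\mathbf{v}^i_{t+1}=\mathbf{w}^i_{t_0}-\mathbf{U}^i_{t+1}$, hence $\bar{\mathbf{v}}_{t+1}=\sum_{i=1}^N p_i(\mathbf{w}^i_{t_0}-\mathbf{U}^i_{t+1})=\bar{\mathbf{w}}_{t_0}-\sum_{i=1}^N p_i\mathbf{U}^i_{t+1}$, which matches the expression for $\mathbb{E}[\bar{\mathbf{w}}_{t+1}]$ and establishes \eqref{EQ:UnbiasedHK}. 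The main obstacle I anticipate is purely bookkeeping rather than deep: cleanly separating and ordering the nested expectations so that the sampling randomness and the two compression operations are each handled on the correct conditioning level, and justifying the with-replacement identity $\mathbb{E}_{\mathcal{K}_{t+1}}[\tfrac{1}{K}\sum_{i\in\mathcal{K}_{t+1}}\widetilde{\mathbf{U}}^i_{t+1}]=\sum_j p_j\widetilde{\mathbf{U}}^j_{t+1}$, which relies on the i.i.d.\ draws and is the step where an incorrect sampling model would break the unbiasedness.
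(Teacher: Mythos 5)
Your proposal is correct and takes essentially the same route as the paper: the paper's own proof is a one-line observation that both sources of randomness (client selection and MUCSC compression) are unbiased, and your argument is simply the fully detailed version of that observation — peeling off downlink compression, with-replacement sampling, and uplink compression via the tower property, then matching against $\bar{\mathbf{v}}_{t+1}$ by telescoping the local updates. The added bookkeeping (in particular the explicit with-replacement identity and the identification $\bar{\mathbf{v}}_{t+1}=\bar{\mathbf{w}}_{t_0}-\sum_i p_i\mathbf{U}^i_{t+1}$) is exactly what the paper leaves implicit, and it is handled correctly.
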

\begin{proof}
	In our analysis, there are two sources of randomness. One is random selection by the client and the other is caused by MUCSC compression. Because both the selection of the clients and outputs of the MUCSC compression algorithm are unbiased,  \eqref{EQ:UnbiasedHK} holds.
\end{proof}

To derive the convergence rate, we need to bound the variance of $\bar{\mathbf{w}}_t$. According to Eq. \eqref{EQ:GlobalUpdateOfCompAndPartial}, we can derive: 
\begin{lemma}
	\label{Lemma:PartialClientsComp}(Bounding the variance of $\bar{\mathbf{w}}_t$ cause by compression and client selection) Let Assumptions \ref{Assump:ConSmoo}-\ref{Assump:BoundG} hold.  When $t+1 \in \mathcal{I}$, $\eta_t$ is non-increasing and $\eta_t\le2\eta_{t+E}$ for all $t$, then we have 
	\begin{equation}
		\label{EQ:Lemma7}
		\mathbb{E}\left\|\bar{\mathbf{v}}_{t+1}-\bar{\mathbf{w}}_{t+1}\right\|^2\le J^{D}_{t+1} + \frac{1}{K}\left(\sum_{i=1}^Np_{i} J^{i}_{t+1} +  4\eta_{t}^2E^2G^2\right),
	\end{equation}
	where $J^D_{t+1}=\mathbb{E}\|\widetilde{\mathbf{D}}_{t+1} - \mathbf{D}_{t+1}\|^2$ and $J^i_{t+1}=\mathbb{E}\|\widetilde{\mathbf{U}}^i_{t+1} - \mathbf{U}^i_{t+1}\|^2$.
\end{lemma}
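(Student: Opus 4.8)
The plan is to bound $\mathbb{E}\|\bar{\mathbf{v}}_{t+1}-\bar{\mathbf{w}}_{t+1}\|^2$ by isolating the three independent sources of randomness that appear when $t+1\in\mathcal{I}$: the downlink compression of $\mathbf{D}_{t+1}$, the random draw of the client set $\mathcal{K}_{t+1}$, and the uplink compression of each selected client's update. Since $t_0=t+1-E\in\mathcal{I}$ means all clients are synchronized at the start of the round ($\mathbf{w}^i_{t_0}=\bar{\mathbf{w}}_{t_0}$ for every $i$), the telescoping identity $\mathbf{v}^i_{t+1}=\mathbf{w}^i_{t_0}-\mathbf{U}^i_{t+1}$ holds, so that $\bar{\mathbf{w}}_{t+1}=\bar{\mathbf{w}}_{t_0}-\widetilde{\mathbf{D}}_{t+1}$ and $\bar{\mathbf{v}}_{t+1}=\bar{\mathbf{w}}_{t_0}-\sum_{i=1}^N p_i\mathbf{U}^i_{t+1}$. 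This reduces the target to $\bar{\mathbf{w}}_{t+1}-\bar{\mathbf{v}}_{t+1}=\widetilde{\mathbf{D}}_{t+1}-\sum_{i=1}^N p_i\mathbf{U}^i_{t+1}$, exactly as in the full-participation case of Section~\ref{ProofOfLemma4}, the only change being that now $\mathbf{D}_{t+1}=\frac{1}{K}\sum_{i\in\mathcal{K}_{t+1}}\widetilde{\mathbf{U}}^i_{t+1}$.

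First I would peel off the downlink compression error by writing
\[
\bar{\mathbf{w}}_{t+1}-\bar{\mathbf{v}}_{t+1}=\underbrace{(\widetilde{\mathbf{D}}_{t+1}-\mathbf{D}_{t+1})}_{A}+\underbrace{\Big(\mathbf{D}_{t+1}-\sum_{i=1}^N p_i\mathbf{U}^i_{t+1}\Big)}_{B}.
\]
Because the downlink compression is unbiased, $\mathbb{E}[A\mid\mathbf{D}_{t+1}]=\mathbf{0}$, so conditioning on everything that determines $B$ kills the cross term and gives $\mathbb{E}\|\bar{\mathbf{w}}_{t+1}-\bar{\mathbf{v}}_{t+1}\|^2=\mathbb{E}\|A\|^2+\mathbb{E}\|B\|^2$ with $\mathbb{E}\|A\|^2=J^D_{t+1}$ by definition. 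Next I would split $B$ by inserting the uncompressed partial aggregate,
\[
B=\underbrace{\frac{1}{K}\sum_{i\in\mathcal{K}_{t+1}}(\widetilde{\mathbf{U}}^i_{t+1}-\mathbf{U}^i_{t+1})}_{B_1}+\underbrace{\Big(\frac{1}{K}\sum_{i\in\mathcal{K}_{t+1}}\mathbf{U}^i_{t+1}-\sum_{i=1}^N p_i\mathbf{U}^i_{t+1}\Big)}_{B_2}.
\]

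Conditioned on $\mathcal{K}_{t+1}$ and the stochastic gradients, $B_1$ is a sum of independent zero-mean uplink compression noises, so it is orthogonal to $B_2$ (which is measurable with respect to that conditioning) and $\mathbb{E}\|B_1\|^2=\frac{1}{K^2}\mathbb{E}\big[\sum_{i\in\mathcal{K}_{t+1}}J^i_{t+1}\big]$. Using that the selection is i.i.d.\ with replacement under the weights $p_i$, the expectation over $\mathcal{K}_{t+1}$ returns $K\sum_{i=1}^N p_iJ^i_{t+1}$, whence $\mathbb{E}\|B_1\|^2=\frac{1}{K}\sum_{i=1}^N p_iJ^i_{t+1}$. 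For the selection term $B_2$ I would invoke the standard sampling-with-replacement variance identity: since $\mathbb{E}_{\mathcal{K}_{t+1}}[\frac{1}{K}\sum_{i\in\mathcal{K}_{t+1}}\mathbf{U}^i_{t+1}]=\sum_{i=1}^N p_i\mathbf{U}^i_{t+1}$, the variance scales as $1/K$, giving $\mathbb{E}\|B_2\|^2\le\frac{1}{K}\sum_{i=1}^N p_i\mathbb{E}\|\mathbf{U}^i_{t+1}\|^2$. Bounding $\mathbb{E}\|\mathbf{U}^i_{t+1}\|^2\le E^2\eta_{t_0}^2G^2\le4\eta_t^2E^2G^2$ via Assumption~\ref{Assump:BoundG} together with $\eta_{t_0}\le2\eta_t$ then yields $\mathbb{E}\|B_2\|^2\le\frac{4\eta_t^2E^2G^2}{K}$, and summing $\mathbb{E}\|A\|^2$, $\mathbb{E}\|B_1\|^2$ and $\mathbb{E}\|B_2\|^2$ reproduces the claimed bound.

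The main obstacle is the bookkeeping of nested conditional expectations, so that each cross term vanishes for the right reason: the downlink noise is zero-mean given $\mathbf{D}_{t+1}$, the uplink noise is zero-mean and independent across clients given $(\mathcal{K}_{t+1},\{\mathcal{B}^i_j\})$, and the selection noise is zero-mean given the gradients. The one genuinely new ingredient relative to the full-participation proof is correctly converting the with-replacement selection into the prefactor $1/K$ and the linear weights $p_i$ (in place of the $p_i^2$ appearing under full participation); everything else reduces to the one-step gradient-norm bound already established in Section~\ref{ProofOfLemma4}.
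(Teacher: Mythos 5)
Your proof is correct and follows essentially the same route as the paper: peel off the downlink compression error $J^D_{t+1}$ via unbiasedness, then handle the remaining term through the i.i.d.-with-replacement selection variance and the per-client split into uplink compression error plus deviation, bounded by $\mathbb{E}\|\mathbf{U}^i_{t+1}\|^2\le 4\eta_t^2E^2G^2$. The only difference is cosmetic ordering—you separate the aggregate into $B_1$ (compression noise) and $B_2$ (selection noise) before invoking the sampling variance identity, while the paper expands over the $K$ selection slots first and splits each slot's term afterwards—and both yield the identical three terms.
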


Please refer to Appendix \ref{ProofOfLemma7} for the detailed proof.

\subsection{Variance between $\bar{\mathbf{w}}_{t+1}$ and $\bar{\mathbf{v}}_{t+1}$} \label{ProofOfLemma7}
When $t+1\in\mathcal{I}$, we calculate the upper bound of the variance of $\bar{\mathbf{w}}_{t+1}$.
\begin{equation}
\label{VarianceOfWAndVPartial}
    \begin{split}
        &\mathbb{E}\left\|\bar{\mathbf{w}}_{t+1}-\bar{\mathbf{v}}_{t+1}\right\|^2\\
        &=\mathbb{E}\left\|\widetilde{\mathbf{D}}_{t+1}-\sum_{i=1}^Np_i\mathbf{U}^i_{t+1}\right\|^2\\
        &=\mathbb{E}\left\|\widetilde{\mathbf{D}}_{t+1}- \mathbf{D}_{t+1} + \frac{1}{K}\sum_{i\in\mathcal{K}_{t+1}}\widetilde{\mathbf{U}}^i_{t+1} -\sum_{i=1}^Np_i\mathbf{U}^i_{t+1}\right\|^2\\
        &=\mathbb{E}\left\|\widetilde{\mathbf{D}}_{t+1} - \mathbf{D}_{t+1}\right\|^2 + \mathbb{E}\left\|\frac{1}{K}\sum_{i\in\mathcal{K}_{t+1}}\widetilde{\mathbf{U}}^i_{t+1} -\sum_{i=1}^Np_i\mathbf{U}^i_{t+1}\right\|^2.
    \end{split}
\end{equation}

We use $J^D_{t+1}$ to represent the variance produced by the compression of model updates aggregated on the server side, that is $J^D_{t+1}=\mathbb{E}\left\|\widetilde{\mathbf{D}}_{t+1} - \mathbf{D}_{t+1}\right\|^2$.

Next we analyze the second term in \eqref{VarianceOfWAndVPartial}.
\begin{eqnarray}
    &&\mathbb{E}\left[\left\|\frac{1}{K}\sum_{i\in\mathcal{K}_{t+1}} \widetilde{\mathbf{U}}^i_{t+1} - \sum_{i=1}^Np_i\mathbf{U}^i_{t+1}\right\|^2\right]\notag\\
    %&&=\mathbb{E}\left[\left\|\frac{1}{K}\sum_{l=1}^K \widetilde{\mathbf{U}}^{i_l}_{t+1} - \frac{1}{K}\sum_{l=1}^K \sum_{i=1}^Np_i\mathbf{U}^i_{t+1}\right\|^2\right]\notag\\
    &&=\frac{1}{K^2}\mathbb{E}\left[\left\|\sum_{l=1}^K\left( \widetilde{\mathbf{U}}^{i_l}_{t+1} - \sum_{i=1}^Np_i\mathbf{U}^i_{t+1}\right)\right\|^2\right]\notag\\
    &&=\frac{1}{K^2}\sum_{l=1}^K\mathbb{E}\left[\left\| \widetilde{\mathbf{U}}^{i_l}_{t+1} - \sum_{i=1}^Np_i\mathbf{U}^i_{t+1}\right\|^2\right]\notag\\
    &&=\frac{1}{K^2}\sum_{l=1}^K\mathbb{E}\left[
    \sum_{i^{'}=1}^Np_{i^{'}}
    \left\| \widetilde{\mathbf{U}}^{i^{'}}_{t+1} - \sum_{i=1}^Np_i\mathbf{U}^i_{t+1}\right\|^2
    \right]\notag\\
    &&=\frac{1}{K}\sum_{i^{'}=1}^Np_{i^{'}}\mathbb{E}\left\| \widetilde{\mathbf{U}}^{i^{'}}_{t+1} - \sum_{i=1}^Np_i\mathbf{U}^i_{t+1}\right\|^2
    \label{VarianceCauseByCompressionAndSample}
\end{eqnarray}

We can make
\begin{eqnarray}
    \label{EQ:ECOMP}
    %\begin{split}
        &&\mathbb{E}\left\| \widetilde{\mathbf{U}}^{i^{'}}_{t+1} - \sum_{i=1}^Np_i\mathbf{U}^i_{t+1}\right\|^2\\
        &&=\mathbb{E}\left\|\widetilde{\mathbf{U}}^{i^{'}}_{t+1}-\mathbf{U}^{i^{'}}_{t+1}\right\|^2 +  \mathbb{E}\left\|\sum_{i=1}^Np_i\mathbf{U}^i_{t+1}-\mathbf{U}^{i^{'}}_{t+1}\right\|^2.\notag
    %\end{split}
\end{eqnarray}
The first term in the above formula is the variance caused by the compression of model updates on the client side, and we use $J^{i}_{t+1}=\mathbb{E}\left\|\widetilde{\mathbf{U}}^{i}_{t+1}-\mathbf{U}^{i}_{t+1}\right\|^2$ to represent it. Next we analyze the second item.
\begin{eqnarray}
    &&\mathbb{E}\left[\sum_{i^{'}=1}^Np_{i^{'}}\left\|\mathbf{U}^{i^{'}}_{t+1}-\sum_{i=1}^Np_i\mathbf{U}^i_{t+1}\right\|^2\right]\notag\\
    &&\le \mathbb{E}\left[\sum_{i^{'}=1}^Np_{i^{'}}\left\|\mathbf{U}^{i^{'}}_{t+1}\right\|^2\right]\\
    &&\le \sum_{i^{'}=1}^Np_{i^{'}}\mathbb{E}\left[\sum_{j=t_0}^{t}E\eta_j^2\left\|\nabla F_{i^{'}}(\mathbf{w}^{i^{'}}_j,\mathcal{B}^{i^{'}}_j)
    \right\|^2\right]\le 4\eta_t^2E^2G^2,\notag
\end{eqnarray}
where the first inequality results from $\sum_{i^{'}=1}^Np_{i^{'}}\mathbf{U}^{i^{'}}_{t+1}=\sum_{i=1}^Np_i\mathbf{U}^i_{t+1}$ and $\mathbb{E}\|\mathbf{x}-\mathbb{E}\mathbf{x}\|^2\le\mathbb{E}\|\mathbf{x}\|^2$.

Therefore we can get
\begin{eqnarray}
    &&\mathbb{E}\left\|\bar{\mathbf{w}}_{t+1}-\bar{\mathbf{v}}_{t+1}\right\|^2\notag\\
    &&=\mathbb{E}\left\|\widetilde{\mathbf{D}}_{t+1} - \mathbf{D}_{t+1}\right\|^2 +\notag\\
    &&\qquad\mathbb{E}\left\|\frac{1}{K}\sum_{i\in\mathcal{K}_{t+1}}\widetilde{\mathbf{U}}^i_{t+1} -\sum_{i=1}^Np_i\mathbf{U}^i_{t+1}\right\|^2\notag\\
    &&=J^D_{t+1} + \frac{1}{K}\sum_{i^{'}=1}^Np_{i^{'}}\notag\\
    &&\qquad\left[\mathbb{E}\left\|\widetilde{\mathbf{U}}^{i^{'}}_{t+1}-\mathbf{U}^{i^{'}}_{t+1}\right\|^2 +\mathbb{E}\left\|\sum_{i=1}^Np_i\mathbf{U}^i_{t+1}-\mathbf{U}^{i^{'}}_{t+1}\right\|^2\right]\notag\\
    %&&=J^{D}_{t+1} + \frac{1}{K}\sum_{i=1}^Np_{i} J^{i}_{t+1} + \notag\\ %&&\qquad\frac{1}{K}\sum_{i^{'}=1}^Np_{i^{'}}\mathbb{E}\left\|\sum_{i=1}^Np_i\mathbf{U}^i_{t+1}-\mathbf{U}^{i^{'}}_{t+1}\right\|^2\notag\\
    &&\le J^{D}_{t+1} + \frac{1}{K}\left(\sum_{i=1}^Np_{i} J^{i}_{t+1} +  4\eta_{t}^2E^2G^2\right).
\end{eqnarray}

\subsection{Convergence of the algorithm} \label{ProofOfTheorem3And4}
Next, we analyze the interval between $\bar{\mathbf{w}}_{t+1}$ and $\mathbf{w}^*$.
\begin{equation}
    \label{PartialClientGap}
    \begin{split}
        &\mathbb{E}\left\|\bar{\mathbf{w}}_{t+1} - \mathbf{w}^*\right\|^2\\
        &=\mathbb{E}\left\|\bar{\mathbf{w}}_{t+1} - \bar{\mathbf{v}}_{t+1} + \bar{\mathbf{v}}_{t+1} - \mathbf{w}^*\right\|^2\\
        %&=\mathbb{E}\left\|\bar{\mathbf{w}}_{t+1} - \bar{\mathbf{v}}_{t+1}\right\|^2 + \mathbb{E}\left\|\bar{\mathbf{v}}_{t+1} - \mathbf{w}^*\right\|^2 + \\
        %&\qquad2\mathbb{E}<\bar{\mathbf{w}}_{t+1} - \bar{\mathbf{v}}_{t+1}, \bar{\mathbf{v}}_{t+1} - \mathbf{w}^*>\\
        &\le (1-\eta_t\mu)\left\|\bar{\mathbf{w}}_{t}-\mathbf{w}^*\right\|^2+\eta_t^2\tau+\psi_{t+1},
    \end{split}
\end{equation}
where $\tau=\frac{\sum_{i=1}^Np_i^2\sigma_i^2}{B}+6L\Gamma+8(E-1)^2G^2+\frac{4E^2G^2}{K}$ and $\psi_{t+1}=J^D_{t+1} + \frac{1}{K}\sum_{i=1}^Np_{i} J^{i}_{t+1}$.

We can find that \eqref{AllClientGap} and \eqref{PartialClientGap} are basically similar, so we can directly use similar proof methods. The final conclusion we can draw is
\begin{eqnarray}
    &&\mathbb{E}\left\|\bar{\mathbf{w}}_{t+1}-\mathbf{w}^*\right\|^2\leq \frac{v_t}{\gamma+t},
\end{eqnarray}

where $v_t=\max\{\frac{\beta^2\tau}{\beta\mu-1}+\sum_{i=1}^t[(i+\gamma)\psi_i], \gamma\|\mathbf{w}_0-\mathbf{w}^*\|^2\}$ for a diminishing stepsize $\eta_t=\frac{\beta}{t+\gamma}$ for some $\beta>\frac{1}{\mu}$ and $\gamma>0$.

Considering the L-smooth and strong convexity of $F()$, we can get
\begin{equation}
    \mathbb{E}[F(\bar{\mathbf{w}}_t)]-F(\mathbf{w}^*)\leq \frac{L}{2}\frac{v_t}{t+\gamma}.
\end{equation}
%Similar to the previous analysis, $\psi_t$ is also the variance caused by model updates compression. And because each model update needs to be multiplied by the learning rate $\eta_t$, $\psi_t$ will tend to 0 as $t$ increases. So it will eventually converge.

Similarly, in order to make the result of convergence more clear, let's analyze the upper bound of $J^D_{t+1}$ and $J^i_{t+1}$.

\begin{eqnarray}
    &&J^D_{t+1}=\mathbb{E}\left\|\widetilde{\mathbf{D}}_{t+1} - \mathbf{D}_{t+1}\right\|^2\notag\\
    &&\le \frac{d}{2(Z_D-1)^2K^2} \mathbb{E}\left\|\sum_{i\in\mathcal{K}_{t+1}}\widetilde{\mathbf{U}}^i_{t+1}\right\|^2\notag\\
    &&\le \frac{d}{2(Z_D-1)^2K}\sum_{i\in\mathcal{K}_{t+1}} \mathbb{E}\left\|\widetilde{\mathbf{U}}^i_{t+1}\right\|^2\notag\\
    &&\le \frac{d}{2(Z_D-1)^2} \sum_{i=1}^N p_i\mathbb{E}\left\|\widetilde{\mathbf{U}}^i_{t+1}\right\|^2\notag\\
    &&\le \frac{d}{2(Z_D-1)^2}\sum_{i=1}^Np_i\notag\\
    &&\qquad\left(\mathbb{E}\left\|\widetilde{\mathbf{U}}^i_{t+1} - \mathbf{U}^i_{t+1}\right\|^2+\mathbb{E}\left\| \mathbf{U}^i_{t+1}\right\|^2\right)\notag\\
    &&\le \frac{d^2+2d(Z_D-1)^2}{4(Z_D-1)^4}\sum_{i=1}^Np_i\mathbb{E}\left\| \mathbf{U}^i_{t+1}\right\|^2\notag\\
    &&\le \frac{d^2+2d(Z_D-1)^2}{(Z_D-1)^4}E^2\eta_{t}^2G^2.
\end{eqnarray}
Therefore
\begin{eqnarray}
    \mathbb{E}\left\|\bar{\mathbf{w}}_{t+1} - \mathbf{w}^*\right\|^2\leq (1-\eta_t\mu)\mathbb{E}\left\|\bar{\mathbf{w}}_{t} - \mathbf{w}^*\right\|^2 + \eta_t^2\tau,
\end{eqnarray}
where $\tau=\sum_{i=1}^Np_i^2\sigma_i^2+6L\Gamma+8(E-1)^2G^2+\frac{4}{K}E^2G^2+(\frac{d^2+2d(Z_D-1)^2}{(Z_D-1)^4}+\frac{2d}{K(Z_U-1)^2})E^2G^2$.

%We use the previous derivation to finally get $\Delta_{t+1}\leq(1-\eta_t\mu)\Delta_t + \eta_t^2\tau.$
Using a similar derivation method as before, we can derive $\Delta_{t}\leq\frac{v}{\gamma+t}$, where $v=\max\{\frac{\beta^2\tau}{\beta\mu-1}, \gamma\Delta_0\}$ for $\eta_t=\frac{\beta}{t+\gamma}$, $\beta>\frac{1}{\mu}$ and $\gamma>0$ such that $\eta_1\leq\min\{\frac{1}{\mu}, \frac{1}{4L}\}$ and $\eta_t\leq2\eta_{t+E}$.

%According to the strong convexity of $F()$, we can get
%$$\mathbb{E}[F(\bar{\mathbf{w}}_t)]-F^*\le\frac{L}{2}\Delta_t\le\frac{L}{2}\frac{v}{\gamma+t}.$$
Let $\beta=\frac{2}{\mu},\gamma=\max\{8\frac{L}{\mu}-1,E\},\kappa=\frac{L}{\mu},\eta_t=\frac{2}{\mu}\frac{1}{\gamma+t}$, so
\begin{equation}
    \mathbb{E}[F(\bar{\mathbf{w}}_t)]-F^*\le\frac{2\kappa}{\gamma+t}(\frac{\tau}{\mu}+2L\left\|\bar{\mathbf{w}}_{0}-\mathbf{w}^*\right\|^2).
\end{equation}

\ifCLASSOPTIONcaptionsoff
  \newpage
\fi

% biography section
% 
% If you have an EPS/PDF photo (graphicx package needed) extra braces are
% needed around the contents of the optional argument to biography to prevent
% the LaTeX parser from getting confused when it sees the complicated
% \includegraphics command within an optional argument. (You could create
% your own custom macro containing the \includegraphics command to make things
% simpler here.)
%\begin{IEEEbiography}[{\includegraphics[width=1in,height=1.25in,clip,keepaspectratio]{mshell}}]{Michael Shell}
% or if you just want to reserve a space for a photo:

\begin{comment}
\begin{IEEEbiography}{Michael Shell}
Biography text here.
\end{IEEEbiography}

% if you will not have a photo at all:
\begin{IEEEbiographynophoto}{John Doe}
Biography text here.
\end{IEEEbiographynophoto}

% insert where needed to balance the two columns on the last page with
% biographies
%\newpage

\begin{IEEEbiographynophoto}{Jane Doe}
Biography text here.
\end{IEEEbiographynophoto}
\end{comment}

%\bibliographystyle{plainnat}
\bibliographystyle{IEEEtran} 
\bibliography{reference}
\end{document}